\newlength\Myfigwd
\DeclareMathOperator*{\argmin}{arg\,min}
\newcommand{\vertiii}[1]{{\left\vert\kern-0.25ex\left\vert\kern-0.25ex\left\vert #1 
    \right\vert\kern-0.25ex\right\vert\kern-0.25ex\right\vert}}
\theoremstyle{plain}
\newtheorem{theorem}{Theorem}[section]
\newtheorem{proposition}[theorem]{Proposition}
\newtheorem{lemma}[theorem]{Lemma}
\theoremstyle{definition}
\newtheorem{definition}[theorem]{Definition}
\newtheorem{assumption}[theorem]{Assumption}
\title{Meta Learning for High-dimensional Ising Model Selection \\ Using $\ell_1$-regularized Logistic Regression}
\author[1]{Huiming Xie}
\author[1, 2]{Jean Honorio}
\affil[1]{Department of Statistics, Purdue University}
\affil[2]{Department of Computer Science, Purdue University}
\date{}
\begin{document}
\maketitle

\begin{abstract}
In this paper, we consider the meta learning problem for estimating the graphs associated with high-dimensional Ising models, using the method of $\ell_1$-regularized logistic regression for neighborhood selection of each node. Our goal is to use the information learned from the auxiliary tasks in the learning of the novel task to reduce its sufficient sample complexity. To this end, we propose a novel generative model as well as an improper estimation method. In our setting, all the tasks are \emph{similar} in their \emph{random} model parameters and supports. By pooling all the samples from the auxiliary tasks to \emph{improperly} estimate a single parameter vector, we can recover the true support union, assumed small in size, with a high probability with a sufficient sample complexity of $\Omega(1) $ per task, for $K = \Omega(d^3 \log p ) $ tasks of Ising models with $p$ nodes and a maximum neighborhood size $d$. Then, with the support for the novel task restricted to the estimated support union, we prove that consistent neighborhood selection for the novel task can be obtained with a reduced sufficient sample complexity of $\Omega(d^3 \log d)$. 
\end{abstract}

\section{Introduction}

Markov random fields (MRF) are an important class of probability models that find its applications in a wide variety of fields spanning statistical physics \cite{ising1925beitrag} , social network analysis \cite{snell1980markov}, computer vision \cite{geman1993stochastic} and natural language processing \cite{manning1999foundations}. A Markov random field is an undirected graph where each node represents a random variable, and the graph structure carries certain assumptions about the conditional independence of these random variables. A prototypical example of Markov random field is the \textbf{Ising model}, where the random variables are discrete, and in particular, binary. Several other types of Markov random fields can be viewed as a general setting of the Ising model \cite{snell1980markov}. Detecting statistical dependencies, which boils down to estimating the graph structure in the Ising model, is therefore a fundamentally significant problem to solve. Many efforts have been made, among which Ravikumar et al. \cite{ravikumar2010high} proved that with relatively low computational complexity, consistent model selection can be achieved with a sample size of $n = \Omega(d^3 \log p)$ for a graph of $p$ nodes with a maximum neighborhood size $d$ by neighborhood selection for each node using $\ell_1$-regularized logistic regression. This simple method with a theoretically supported reasonable performance has received considerable attention. 

In practice, however, one may not be able to obtain as many samples for the high-dimensional settings where both $p$ and $d$ can be large. The more common situation in reality is that one has only a few samples for a task; nonetheless, there are usually many tasks (each with few samples) of a similar kind. In this regime, the learning machine is embedded in an environment of related tasks \cite{thrun1998learning, nichol2018first}. This challenge is also commonly met in the application of other machine learning algorithms \cite{vilalta2002perspective, finn2017model, vanschoren2018meta}. One general way to tackle this kind of difficulty is through \textbf{meta learning} \cite{vanschoren2018meta}, or \emph{learning to learn} \cite{lake2015human}, where we learn from  multiple
learning episodes that oftentimes cover a distribution of related tasks --- a \emph{family} of tasks, in order to gain some experience for the learning of a novel task (in that family), in the hope of reducing the sample complexity for the latter \cite{hospedales2020meta}. 

Meta learning has been widely used in machine learning problems to help increase sample efficiency, but a majority of prior works are experimental in nature, without theoretical guarantees \cite{lemke2015metalearning, finn2017model,hospedales2020meta}. For Markov random fields, there have been some theoretical results for the models associated with Gaussian or sub-Gaussian random vectors, which is a continuous type of Markov random field \cite{zhang2021meta}. For the discrete type, or Ising model as its prototype, efforts have been made for proving theoretical sample complexity for a single task using different methods. In addition to the work of Ravikumar et al. \cite{ravikumar2010high} that we previously mentioned, Santhanam and Wainwright \cite{santhanam2012information} derived
information-theoretic lower bounds on Ising model selection showing that any method will fail at least half the time with sample size $n=\mathcal{O}(d^2 \log p)$. Yang et al. \cite{yang2015graphical} treated the Ising model as a special case of the univariate exponential family distribution model, and showed a sufficient sample complexity of $n = \Omega(d^2  (\log p)^3)$.

There has been some work on the multi-task learning problem on the Ising model, but mainly experimental without theoretical guarantees \cite{gonccalves2015multi}. Also note that while both have many tasks, multi-task learning is learning several models for the different tasks simultaneously, which is intrinsically different than the method and goal in meta learning where we learn a single model from different tasks for the easier learning of a novel task. The challenging situation of having many tasks but only a few samples per task would also render the multi-task learning method meaningless. Sihag and Tajer \cite{sihag2019structure} and Varici \cite{varici2021learning} considered the model selection of two partially identical Ising models with a known subset of nodes that have identical structures in both graphs and showed that the sufficient sample complexity is $\Omega(d^3 \log p)$  for a bounded degree $d$ for estimating the intersection of edges. A generalization of partially identical Ising model pairs was considered by Guo et al. \cite{guo2015estimating} to include multiple graphs. However, this kind of setting for joint graph learning is overly specific and fixed for the meta learning on Ising models we are referring to, especially in the similarity assumptions. Indeed, there has not been any effort in the theory of meta learning for discrete Markov random field, or particularly the Ising model, the building block of the discrete type.

To the best of our knowledge, we are the first to theoretically prove the sufficient sample complexity for the meta learning problem of the Ising model selection. For the Ising model, we follow the practice of Ravikumar et al. \cite{ravikumar2010high} in using $\ell_1$-regularized logistic regression and converting the model selection problem into one of neighborhood selection. Based on this method, for the meta learning problem on Ising models, we propose a novel generative model by introducing \emph{randomness} to the parameter vectors of the Ising models with reasonable and flexible assumptions for similarity among different tasks, which serves as a good generalization of the metadata for the Ising model, and hence our theoretical results can be applied to a wide range of distributions for the parameters in the Ising models under some mild conditions. We also propose an \emph{improper estimation} method in the meta learning problem for Ising model selection where we pool all the samples from the auxiliary tasks together to estimate a single \emph{common parameter vector} (see Definition \ref{family}), rather than estimate an individual parameter vector for each auxiliary task, and then we recover the \emph{support union} from the single \emph{common parameter vector}. Next, we estimate the parameter vector for the novel task by restricting  its support in the estimated \emph{support union}.  We have successfully shown that to learn the support union of all the tasks requires each task to have a sample complexity of only $\Omega(1)$ as long as we have many auxiliary tasks in the sense that number of tasks $K= \Omega ( d^3\log p )$, and in fact the more tasks the better for solving this problem. Here $d$ is the maximum neighborhood size in the graphs of the Ising models for all tasks, and $p$ is the number of nodes in a graph,  which is also the same for all tasks. Furthermore, We have proved that with the knowledge from the auxiliary tasks, learning on a novel task for  neighborhood selection requires a sufficient sample complexity of only  $\Omega(d^3 \log d)$, independent of $p$, which is far less than that without restriction,  $\Omega(d^3 \log p)$ \cite{ravikumar2010high}, since we have assumed sparse graphical models in which $d \ll p$. In this sense, we have a smaller sufficient sample complexity for both auxiliary and novel tasks than those in all previous works, which have a magnitude depending on $p$, particularly in the form of $\log p$.
\section{Preliminaries}\label{prelim}
This section provides the background on the Ising model selection. The notations to be used throughout the paper is summarized in Table \ref{notation} in Appendix \ref{sumnot}.

\subsection{Ising Model and Model Selection}

In this paper, we focus on the Ising model, i.e., the binary pairwise Markov random fields, for which we provide a definition here.

\begin{definition}\label{isingdef}
    Let $X = (X_1, X_2, ..., X_p) \in \{-1,+1\}^p$ denote a $p$-dimensional binary random vector, and each random variable  $X_s$ is associated with a vertex $s \in V$ of an undirected graph $G$ with vertex set  $V=\{1,...,p\}$ and edge set $E = V \times V.$ 
    $X$ is said to form an Ising model associated with $G$ if the distribution takes the form
    \begin{equation}
     \mathbb{P}_{\theta^*}(x) = \frac{1}{Z(\theta^*)} \exp \big\{\sum_{(s,t) \in E}\theta^*_{st} x_s x_t\big\},
    \end{equation}
    for some parameter $\theta^*_{st} \in \mathbb{R}$, where the partition function $Z(\theta^*)$ ensures that the distribution sums to one.
\end{definition}

A typical graphical model selection aims to infer the edge set $E$. A stronger criterion is  \emph{signed edge recovery}, i.e., to infer the edge sign vector 
\begin{equation}
    E_{\pm} :=\begin{cases}
            \text{sign}(\theta^*_{st}), & \text{if $(s,t) \in E$,}\\
            0, & \text{otherwise.}
              \end{cases} 
\end{equation}
\subsection{Neighborhood-based Logistic Regression}
The signed edge recovery is equivalent to recovering, for each vertex $r\in V$, its neighborhood set $\mathcal{N}(r):=\{t \in V|(r,t) \in E\}$, together with the correct signs sign$(\theta^*_{rt})$ for all $t\in \mathcal{N}(r)$ \cite{ravikumar2010high}. Such information can be summarized as the \emph{signed neighborhood set}
\begin{equation}
    \mathcal{N}_{\pm}(r):=\{\text{sign}(\theta^*_{rt})t|t \in \mathcal{N}(r)\},
\end{equation}
which can be recovered naturally from the sign-sparsity pattern of the $(p-1)$-dimensional sub-vector of parameters $$\theta^*_{\setminus r} := \{\theta^*_{ru}, u \in V \setminus r \}$$ associated with each vertex $r$. To estimate $\theta^*_{\setminus r}$, we can make use of the easily derived conditional distribution of $X_r$ given the other variables $X_{\setminus r} = \{X_t| t\in V \setminus r\}$: 
\begin{equation}
    \mathbb{P}_{\theta^*}(x_r|x_{\setminus r}) = \frac{\exp(2x_r\sum_{t\in V \setminus r}\theta^*_{rt}x_t)}{\exp(2x_r\sum_{t\in V \setminus r}\theta^*_{rt}x_t)+1}.
\end{equation}
The problem can thus be viewed as a logistic regression where $X_r$ is the response variable and all other variables $X_{\setminus r}$ act as the covariates. For a sparse problem, it is also natural to use the $\ell_1$-regularizer \cite{tibshirani1996regression}. Formally stated, given $\mathfrak{X}_1^n = \{x^{(1)}, x^{(2)}, ..., x^{(n)}\}$, a set of $n$ i.i.d. samples,  the regularized regression problem is a convex program of the form  \cite{ravikumar2010high}
\begin{equation}\label{isingreg}
    \min_{\theta_{\setminus r} \ \in \mathbb{R}^{p-1}} \ell (\theta_{\setminus r};\mathfrak{X}_1^n) + \lambda \|\theta_{\setminus r}\|_1 , 
\end{equation}
where $\ell (\theta_{\setminus r};\mathfrak{X}_1^n):= - \frac{1}{n}\sum^n_{i=1} \log \mathbb{P}_{\theta_{\setminus r}}(x^{(i)}_r|x^{(i)}_{\setminus r}) $ is the re-scaled negative log likelihood and $\lambda$ is a regularization parameter to be specified by the user.
We can then use the estimate $\hat{\theta}_{\setminus r}$ to estimate the signed neighborhood according to 
\begin{equation}
    \mathcal{\hat{N}}_{\pm}(r):=\{\text{sign}(\hat{\theta}_{ru})u|u \in V \setminus r, \hat{\theta}_{ru} \neq 0 \}.
\end{equation}
\section{Our Novel Generative Model and Improper Estimation Method}\label{method}
In this section, we introduce our novel generative model as well as our novel improper estimation method for the meta learning problem on Ising models.
\subsection{Our Novel Generative Model for Meta Learning on Ising Models}

We consider multiple Ising models whose parameters are generated randomly, which is more reasonable and flexible than the deterministic settings in previous works.

Formally, we define the following family of Ising models with random parameters:

\begin{definition}\label{family}
     Let $X_1^{(k)}, X_2^{(k)}, ..., X_{n^{(k)}}^{(k)} \in \{-1,+1\}^p$ be $n^{(k)}$ i.i.d. $p$-dimensional random vectors for each $k \in \{1,2,...,K\}$. Let $X_{i,s}^{(k)}$ be the $s$-th entry of the p-dimensional vector $X_i^{(k)}$ for $1 \leq s \leq p$. Say we are given $K$ undirected graphs with the same number of nodes $p$, i.e. $G^{(k)} = (V,E^{(k)})$ with the same vertex set $V = \{1,2,...,p\}$ and potentially different  edge sets $E^{(k)} \subset V \times V$  for $1 \leq k \leq K$. Each random variable $X_{i,s}^{(k)}$ is associated with a vertex $s \in V$ in the $k$-th graph $G^{(k)}$.  We say $\big\{ X_i^{(k)}\big\}_{1 \leq i \leq n^{(k)}, 1\leq k \leq K}$ forms a family of $p$-dimensional random Ising models of size $K$ if, for $1 \leq k \leq K$, 
     
     (i) the samples $X^{(k)}_1, X^{(k)}_2, ..., X^{(k)}_{n^{(k)}} \overset{\text{i.i.d.}}{\sim} \mathbb{P}_{\bar{\theta}^{(k)}}$, where
    \begin{equation}\label{isingdistk}
        \mathbb{P}_{\bar{\theta}^{(k)}}(x^{(k)}_i) =\frac{1}{Z(\bar{\theta}^{(k)})} \exp \big\{\sum_{(s,t) \in E^{(k)}}\bar{\theta}^{(k)}_{st}x^{(k)}_{i,s} x^{(k)}_{i,t} \big\},
    \end{equation}
    for some parameter $\bar{\theta^{(k)}}_{st} \in \mathbb{R}$ and the partition function $Z(\bar{\theta}^{(k)})$ ensuring that the distribution sums to one;
    
    (ii) \begin{equation}\label{decompose}
            \bar{\theta}^{(k)} = \bar{\theta} + \Delta^{(k)},
        \end{equation}
        with $\bar{\theta},\Delta^{(k)} \in \mathbb{R}^{\binom{p}{2}}$, $\bar{\theta}$ deterministic and $\Delta^{(k)}, 1 \leq k \leq K $ are i.i.d random vectors drawn from distribution $P$; and
        
    (iii) we have for $ 1 \leq k \leq K,$
    \begin{equation}\label{Delta}
        \mathbb{P}_{\Delta^{(k)} \sim P}[\text{supp}(\Delta^{(k)}) \subseteq \text{supp}(\bar{\theta})] = 1.
    \end{equation}
\end{definition}

For our meta learning problem, we have $K$ auxiliary tasks and one novel task, forming a family of $p$-dimensional random Ising models of size $K+1$. We can refer to $\bar{\theta}$ as the \emph{true common parameter vector} and 
    $S := \text{supp}(\bar{\theta})$
is what we call the \emph{true support union}. We can then understand the maximum neighborhood size $d$ for all graphs as
    $d := \max\{|S_1|, |S_2|, ... , |S_p|\}$,
where $S_r$ is the neighborhood set of each node $r \in V$ in the \emph{latent} deterministic graph parametrized by $\bar{\theta}$, defined as:
    $S_r := \{t \in V|(r,t) \in S\}$.
We assume $d \ll p$ ($d$ is small in size compared to $p$), so that the graphs are fairly sparse. By restricting the parameter estimation of the novel task to the true support union that can be estimated using the auxiliary tasks, we can potentially reduce the sample complexity for the novel task by a large margin.

\emph{Remark.}
Note that condition (\ref{Delta}) restricts the support of the randomness in the parameter of each task to $\text{supp}(\bar{\theta})$, which guarantees that the support of each task $\text{supp}(\bar{\theta}^{(k)}) \subseteq \text{supp}(\bar{\theta})$ for $1 \leq k \leq K+1$, with probability 1. For instance, for an arbitrary entry $(s,t) \in \text{supp}(\bar{\theta})$, we have two cases: for task k, if $\Delta^{(k)}_{st} = - \bar{\theta}_{st}$, then from equation (\ref{decompose}) we have $\bar{\theta}^{(k)}_{st} = 0$, so that $(s,t) \not\in \text{supp}(\bar{\theta}^{(k)})$ and $\text{supp}(\bar{\theta}^{(k)}) \subset \text{supp}(\bar{\theta})$ , i.e., we get a proper subset; else if $\Delta^{(k)}_{st} \neq - \bar{\theta}_{st}$, then by the same token, we have $(s,t) \in \text{supp}(\bar{\theta}^{(k)})$. Either way we arrive at $\text{supp}(\bar{\theta}^{(k)}) \subseteq \text{supp}(\bar{\theta})$. Suppose on the contrary we do not impose condition  (\ref{Delta}), then it will be hard for us to estimate a common parameter or a support union useful for all the tasks. On the other hand, there is still great flexibility in the family of distributions since graphs from different tasks can have edge structures with no intersection with arbitrary probability, and we do not assume entries in $\Delta^{(k)}$ to be small in absolute value.

\subsection{Our Improper Estimation Method for Meta Learning on Ising Models}
Our estimation procedure can be divided into two steps. The first step is to recover $S$ from the $K$ auxiliary tasks by estimating $\bar{\theta}$. The second step is the signed edge recovery for task $K+1$ with its support restricted to the estimated support union. Considering the sparsity of the problem, we will use the $\ell_1$-regularized logistic regression in both steps.

\paragraph{Estimating the Support Union from $K$ Tasks.}

Specifically, for the first step, we pool all the samples from the $K$ tasks and estimate $\bar{\theta}$ by minimizing the $\ell_1$-regularized logistic loss between $\bar{\theta}$ and the estimate. For a clearer presentation, we assume that each auxiliary task has the same number of samples, i.e., $n^{(k)} = n$ for $1 \leq k \leq K$. Note that we do not assume that $n^{(K+1)} = n$ for the novel task. Then, for each node $r \in V$, given the samples from all the $K$ auxiliary tasks $ \big\{ X_i^{(k)}\big\}_{1 \leq i \leq n, 1\leq k \leq K}$, for which we use a shorthand notation $\{\mathfrak{X}^n_1\}^K_1$, this regularized regression problem is a convex program of the form 
\begin{equation}\label{improper}
 \hat{\theta}_{\setminus r} = \argmin_{\theta_{\setminus r} \in \mathbb{R}^{p-1}} \ell(\theta_{\setminus r}; \{\mathfrak{X}^n_1\}^K_1) + \lambda {\|{\theta_{\setminus r}}\|}_1, 
\end{equation}
where
\begin{equation}\label{avgloss}
    \ell(\theta_{\setminus r}; \{\mathfrak{X}^n_1\}^K_1) = - \frac{1}{K}\sum^K_{k=1}  \frac{1}{n}\sum_{i=1}^{n} \log \mathbb{P}_{\theta_{\setminus r}} (x_{i,r}^{(k)}|x_{i,\setminus r}^{(k)})
\end{equation}
is the averaged re-scaled negative log likelihood of all the auxiliary tasks and $\lambda > 0$ is a regularization parameter to be specified by the user, which potentially depends on $n, p, d$ and $K$. Note that (\ref{improper}) is an improper estimation  as we estimate a single parameter vector using data from different distributions. We can further write $\ell(\theta_{\setminus r}; \{\mathfrak{X}^n_1\}^K_1) = \frac{1}{K}\sum^K_{k=1} \ell^{(k)}(\theta_{\setminus r}; \{\mathfrak{X}^n_1\}^{(k)})$, where $\{\mathfrak{X}^{n}_1\}^{(k)}$ is another shorthand notation for $\big\{ X_i^{(k)}\big\}_{1 \leq i \leq n}$, and $ \ell^{(k)}(\theta_{\setminus r}; \{\mathfrak{X}^n_1\}^{(k)}) := -\frac{1}{n}\sum_{i=1}^{n} \log \mathbb{P}_{\theta_{\setminus r}} (x_{i,r}^{(k)}|x_{i,\setminus r}^{(k)})$ for $1\leq k \leq K$. With our estimate $\hat{\theta}$ for the true common parameter vector $\bar{\theta}$, we can use $\text{supp}(\hat{\theta})$ as the estimate of the true support union $S = \text{supp}(\bar{\theta}).$

\paragraph{Estimating the Support of the Novel Task.}

For the second step of our estimation, we aim to estimate the true parameter vector $\bar{\theta}^{(K+1)}$ with knowledge from auxiliary tasks. Suppose we have successfully recovered the true support union $S$ from the first step. Since the true support for the novel task $\text{supp}(\theta^{K+1}) \subseteq S$ is assumed in the problem setting, we can perform a regularized logistic regression with an additional restriction: 
\begin{equation}\label{restrict}
\begin{aligned}
   \hat{\theta}^{(K+1)}_{\setminus r} = \argmin_{{\theta_{\setminus r} \in \mathbb{R}^{p-1}}} \Big\{ \ell^{(K+1)}(\theta_{\setminus r}; \{\mathfrak{X}^{n^{(K+1)}}_1\}^{(K+1)}) 
      + \lambda^{(K+1)} {\|{\theta_{\setminus r}}\|}_1 \Big\} \\
     \textrm{s.t.}  \quad \text{supp}(\theta_{\setminus r}) \subseteq \text{supp}(\hat{\theta}_{\setminus r}),
\end{aligned}
\end{equation}
where 
\begin{equation}
    \ell^{(K+1)}(\theta_{\setminus r}; \{\mathfrak{X}^{n^{(K+1)}}_1\}^{(K+1)}) = -\frac{1}{n^{(K+1)}}\sum_{i=1}^{n^{(K+1)}} \log \mathbb{P}_{\theta_{\setminus r}} (x_{i,r}^{(K+1)}|x_{i,\setminus r}^{(K+1)}).
\end{equation}
Here $\{\mathfrak{X}^{n^{(K+1)}}_1\}^{(K+1)} = \big\{ X_i^{(K+1)}\big\}_{1 \leq i \leq n^{(K+1)}}$ denotes the $n^{(K+1)}$ samples from the $(K+1)$-th task, and $\lambda^{(K+1)}$ is the regularization parameter for the novel task, which potentially depends on $n^{(K+1)}, p, d$.

\section{Theoretical Results}\label{res}
\subsection{Assumptions}
The success of our method requires some assumptions on the structure of the logistic regression, most of which are the dependency and incoherence conditions in the work by Ravikumar et al. \cite{ravikumar2010high} generalized to our multiple tasks setting (see Assumptions \ref{auxa1}, \ref{auxa2}, \ref{novela1}, \ref{novela2}). We also make assumptions regarding the randomness in the parameters for each task, which, intuitively speaking, make the tasks similar in some sense (see Assumption \ref{auxa3}).

\subsubsection{Assumptions in Auxiliary Tasks}\label{sectionauxass}
The assumptions for the support union estimation are stated in terms of the Hessian of the likelihood function $\mathbb{E}\{-\ell(\theta_{\setminus r}; \{\mathfrak{X}^n_1\}^K_1)\}$ evaluated at the true common parameter $\bar{\theta}_{\setminus r}$ for each node $r \in V$. More specifically, the Hessian for any fixed node $r \in V$ is a $(p-1) \times (p-1)$ matrix of the form $\bar{Q}_r:= \mathbb{E}\{\frac{1}{K} \sum^K_{k=1}\nabla^2 \log\mathbb{P}_{\bar{\theta}}[X_r^{(k)}|X^{(k)}_{\setminus r}]\}$,
which has an explicit expression 
\begin{equation}\label{calcQ}
    \bar{Q}_r = \frac{1}{K} \sum^K_{k=1}\mathbb{E}[\eta(X^{(k)}; \bar{\theta})X^{(k)}_{\setminus r} {(X^{(k)}_{\setminus r})}^T], \;\;\;\; \eta(u;\theta) := \frac{4\exp(2u_r\sum_{t\in V \setminus r}\theta_{rt}u_t)}{(\exp(2u_r\sum_{t\in V \setminus r}\theta_{rt}u_t)+1)^2}.
\end{equation}
Here $\eta(u;\theta)$ is the variance function. Note that our expectation is taken with respect to the joint distribution of the data $\big\{ X_i^{(k)}\big\}_{1 \leq i \leq n, 1\leq k \leq K}$ and the random variables $\{\Delta^{(k)}\}^K_{k=1}$ in the parameters.

In the following, we simplify the notation $\bar{Q}_r$ as $\bar{Q}$ since the reference node $r$ is used throughout the analysis and should be understood implicitly. With this notation, we also write  $\bar{Q}_{SS}$ to denote $\bar{Q}_{S_r S_r }$. We follow the same practice for similar shorthand notations in most part of the paper to lighten the notations a little bit. This should not cause confusion since the elements in $S$ are pairs of nodes (two dimensional), while those in $S_r$ are individual nodes (one dimensional).

\emph{Dependency Condition.} Following the dependency condition imposed by Ravikumar et al. \cite{ravikumar2010high}, we assume that the subset of the Fisher information matrix corresponding to the relevant covariates in the true support union has bounded eigenvalues. We have:
\begin{assumption}\label{auxa1}
There exist constants $C_{\min} > 0 $  and $D_{\max} > 0 $ such that
\begin{equation}
     \Lambda_{\min}(\bar{Q}_{SS}) \geq C_{\min}, \;\;\;\;\Lambda_{\max}(\frac{1}{K} \sum_{k=1}^K \mathbb{E} [  X_{\setminus r}^{(k)} (X_{\setminus r}^{(k)})^T] )\leq D_{\max}.
\end{equation}
\end{assumption}
These assumptions make sure that the covariates are not excessively dependent.

\emph{Incoherence Condition.}
To prevent the large number of irrelevant covariates (the ones outside the support) having too strong an effect on the relevant covariates (the ones in the support), as pointed out by Ravikumar et al. \cite{ravikumar2010high}, the following assumption is  required:
\begin{assumption}\label{auxa2}
 There exists an $\alpha \in (0,1]$ such that
\begin{equation}
    \vertiii{\bar{Q}_{S^c S} (\bar{Q}_{S S})^{-1}}_{\infty} \leq 1-\alpha.
\end{equation}
\end{assumption}

\emph{Additional Assumptions on $\{\Delta^{(k)}\}_{1\leq k \leq K}$}.
The success of our method also relies on some reasonable and flexible assumptions on the centering of the random variables $\{\Delta^{(k)}\}_{1\leq k \leq K}$ underlying the parameters of each task --- reasonable in the sense that the tasks are similar enough to provide useful information, and flexible so that there is as little inductive bias as possible.

For simplicity, without writing down the samples for the auxiliary tasks, we use a shorthand notation $\nabla \ell(\bar{\theta}_{\setminus r})$ to denote the gradient of the loss function for the improper estimation, evaluated at the true common parameter vector $\bar{\theta}$; similarly $\nabla \ell^{(k)}(\bar{\theta}^{(k)}_{\setminus r})$ means the gradient of the loss function for the $k$-th auxiliary task evaluated at the true parameter for that particular task, i.e. $\bar{\theta}^{(k)}$. Then we have

\begin{assumption}\label{auxa3} For any $\varepsilon > 0$,
\begin{equation}\label{decayassump}
\mathbb{P}\Big(\|\mathbb{E}[\nabla \ell(\bar{\theta}_{\setminus r}) - \frac{1}{K} \sum_{k=1}^K \nabla \ell^{(k)}(\bar{\theta}^{(k)}_{\setminus r})]\|_{\infty} > \sqrt{\frac{8 \log(2 p/ \varepsilon)}{nK}} \Big) \leq \varepsilon,
\end{equation}
for all $r \in V$.
\end{assumption}
The constants 8 and 2 above are just for ease of calculation, and can be substituted with other constants without harm.  Note that  $\frac{1}{K}\sum_{k=1}^K \nabla \ell^{(k)}(\bar{\theta}^{(k)}_{\setminus r})$ is just the counterpart of $\nabla \ell(\bar{\theta}_{\setminus r}),$ with the gradient of loss for each task evaluated at their own true parameters $\bar{\theta}^{(k)}_{\setminus r}$, which depend on each random $\Delta^{(k)}$, as opposed to all evaluated at the common parameter vector $\bar{\theta}_{\setminus r}$ in $\nabla \ell(\bar{\theta}_{\setminus r})$.

\emph{Remark.} In calculating the expectation  over the joint distribution of $X$ and $\Delta$ in this assumption,  $n$ and $K$ are eliminated due to the i.i.d. $\{\Delta^{(k)}\}_{1\leq k \leq K}$, and conditioned on which, the i.i.d. property of samples $\{X^{(k)}_i\}_{1\leq i \leq n, 1 \leq k \leq K}$ follows. Hence, the requirement on $K$ tasks is just one on the distribution $P$  of $\{\Delta^{(k)}\}_{1\leq k \leq K}$ for the family of Ising models. The assumption helps specify some symmetry for the family of distributions and is compatible with the primal-dual witness. Here we provide an illustrative example to help with intuitive comprehension: the quantity in the assumption can be written explicitly as $\|\mathbb{E}_{\Delta \sim P} \big[ \mathbb{E}_{X \sim \bar{\theta}+\Delta} \big[X_{\setminus r}(\mathbb{E}_{X \sim \bar{\theta}}[X_r|X_{\setminus r}]-\mathbb{E}_{X \sim \bar{\theta}+\Delta}[X_r|X_{\setminus r}])|\Delta \big] \big]\|_{\infty}$. It can be checked that with a common latent graph with 3 nodes, 3 edges and $\bar{\theta} = (1,1,1)$, a setting of $\Delta$ resulting in the real tasks to have only 2 edges each with  values $\bar{\theta}+\Delta \in \{(1.75, 1.75, 0), (1.75, 0, 1.75), (0,1.75, 1.75)\}$  with equal probabilities will fulfill our condition with the desired quantity around $0$. More details of calculation and illustration can be found in Appendix \ref{toyeg}.
\subsubsection{Assumptions in Novel Task}\label{sectionnovelass}
Our assumptions for the novel task is analogous to those for the improper estimation using auxiliary tasks, but with the parameters and the random variables restricted to the true support union $S$. We base the assumption on the Hessian of the likelihood function $\mathbb{E}\{-\ell^{(K+1)}(\theta_S; \{\mathfrak{X}^{n^{(K+1)}}_1\}^{(K+1)}_S)\}$ evaluated at the true parameter for the $(K+1)$-th task, $\bar{\theta}_S^{(K+1)}$: $$\bar{Q}^{(K+1)}_r:= \mathbb{E}\{\nabla^2 \log\mathbb{P}_{\bar{\theta}_S^{(K+1)}}[X_r^{(K+1)}|X^{(K+1)}_S]\}.$$
This is given as the explicit expression
\begin{equation}\label{calcQK1}
    \bar{Q}^{(K+1)}_r = \mathbb{E}[\eta(X_S^{(K+1)}; \bar{\theta}_S^{(K+1)})X^{(K+1)}_S {(X^{(K+1)}_S)}^T].
\end{equation}
Similarly, with the subscript $r$ dropped for simplicity of notation, and by using $S^{(K+1)} := \text{supp}(\bar{\theta}^{(K+1)})$ to denote the true support of the $(K+1)$-th task which satisfies $S^{(K+1)} \subseteq S$, we again follow the assumptions made by Ravikumar et al. \cite{ravikumar2010high}:

\begin{assumption}\label{novela1}
\emph{Dependency Condition.}
There exist constants $C_{\min}^{(K+1)} > 0 $  and ${D_{\max}^{(K+1)}} > 0 $ such that
\begin{equation}
     \Lambda_{\min}(\bar{Q}^{(K+1)}_{S^{(K+1)} S^{(K+1)}}) \geq C^{(K+1)}_{\min}, \;\;\;\; \Lambda_{\max}( \mathbb{E} [  X_S^{(K+1)} (X_S^{(K+1)})^T] )\leq D^{(K+1)}_{\max}.
\end{equation}
\end{assumption}

\begin{assumption}\label{novela2}
\emph{Incoherence Condition.}
There exists an $\alpha^{(K+1)} \in (0,1]$ such that
\begin{equation}
    \vertiii{\bar{Q}^{(K+1),S}_{{[S^{(K+1)}]}^c S^{(K+1)}} (\bar{Q}^{(K+1)}_{S^{(K+1)} S^{(K+1)}})^{-1}}_{\infty} \leq 1-\alpha^{(K+1)},
\end{equation}
where for compactness, we use the notation $\bar{Q}^{(K+1),S}_{{[S^{(K+1)}]}^c S^{(K+1)}}$ to denote $\bar{Q}^{(K+1)}_{{([S^{(K+1)}]}^c \cap S) S^{(K+1)}}$.
\end{assumption}

\subsection{Main Theorems}
The following of our two main theorems together show that the overall sufficient sample complexity for the sign-consistency of the estimators in the two steps of our meta learning approach is $\Omega(1) $ for each of the $K = \Omega(d^3 \log p)$ auxiliary tasks and $\Omega(d^3 \log d)$ for the novel task.
\subsubsection{Support Union Recovery}

Our first theorem demonstrates that the sufficient sample complexity for the recovery of the true support union $S$ by our estimator in (\ref{improper}) is $n = \Omega(d^3 \log p/K)$ per task for $K$ tasks. This means that for the situation with numerous tasks $K = \Omega(d^3 \log p)$, the sufficient sample complexity per task is as small as $\Omega(1)$. From the condition we obtained on the regularizing parameter $\lambda$, we can also see that having more tasks will give a good estimate of the support union with less penalty, without having to increase the number of samples per task --- the more tasks the better in this case.

\begin{theorem}\label{thm: thm1}
    For a family of $p$-dimensional random Ising models of size $K$ described in Definition \ref{family} with $n^{(k)} = n$ for $1 \leq k \leq K$, suppose Assumptions \ref{auxa1}, \ref{auxa2}, \ref{auxa3} are satisfied.   
    Let $\{\mathfrak{X}^n_1\}^{(k)}$ be a set of $n$ $i.i.d.$ samples from the model specified by $\bar{\theta}^{(k)}$, and $\{\mathfrak{X}^n_1\}^K_1$ denote all the samples from the $K$ tasks. Suppose that the regularization parameter $\lambda$ is selected to satisfy $\lambda \geq \beta \sqrt{\frac{\log p}{nK}}$ 
    for some constant $\beta> 0$, then there exists a positive constant $L$, independent of $(n,p,d, K)$, such that if $nK > L d^3 \log p$,
    then, for estimating the Ising model with the true common parameter $\bar{\theta}, $ for some constant $c>0$, the following properties hold with probability at least $1 - \mathcal{O}( \exp(-c\lambda^2 nK)) $.
    
    (a) For each node $r \in V $, the $\ell_1$-regularized logistic regression (\ref{improper}), given data $\{\mathfrak{X}^n_1\}^K_1$, has a unique solution, and so uniquely specifies a signed neighborhood $\mathcal{\hat{N}}_{\pm}(r)$.
    
    (b) For each $r \in V$, the estimated signed neighborhood $\mathcal{\hat{N}}_{\pm}(r)$ correctly excludes all edges not in the true neighborhood, so that supp$(\hat{\theta}) \subseteq $ supp$(\bar{\theta})$. Moreover, it correctly includes all edges $(r,t)$ for which $|\bar{\theta}_{rt}| \geq \frac{10}{C_{\min}}\sqrt{d} \lambda.$
\end{theorem}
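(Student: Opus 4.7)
The plan is to follow the primal-dual witness (PDW) strategy of Ravikumar et al., adapted to the pooled-data improper estimator. Fix a node $r \in V$ and let $S = S_r$ be the relevant neighborhood. I would construct a pair $(\hat{\theta},\hat{z})$ by (i) solving the restricted convex program obtained by adding the constraint $\theta_{S^c}=0$ to (\ref{improper}), with dual subgradient $\hat{z}_S \in \partial \|\hat\theta_S\|_1$; and (ii) defining $\hat{z}_{S^c}$ through the stationarity condition $\nabla\ell(\hat\theta;\{\mathfrak{X}_1^n\}_1^K)+\lambda\hat{z}=0$. Showing strict dual feasibility $\|\hat z_{S^c}\|_\infty<1$ then certifies that $\hat\theta$ is the unique solution of the unrestricted program (\ref{improper}) and that $\operatorname{supp}(\hat\theta)\subseteq S$, giving part (a) and the exclusion part of (b). Inclusion of the strong edges in (b) follows from an $\ell_\infty$ bound on $\hat\theta_S-\bar\theta_S$ combined with the threshold condition $|\bar\theta_{rt}|\geq (10/C_{\min})\sqrt{d}\lambda$.

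Write $W := \nabla\ell(\bar\theta_{\setminus r};\{\mathfrak X_1^n\}_1^K)$ for the sample gradient at the common parameter, and Taylor-expand the KKT conditions around $\bar\theta_{\setminus r}$. This yields a linear system expressing $\hat\theta_S-\bar\theta_S$ in terms of $W_S$, the empirical Hessian block $\hat Q_{SS}$, a remainder $R_S$, and the subgradient; and expressing $\hat z_{S^c}$ in terms of $W_{S^c}$, $R_{S^c}$, and the off-diagonal Hessian blocks acting on $\hat\theta_S-\bar\theta_S$. Using Assumption \ref{auxa1} to invert $\hat Q_{SS}$ and Assumption \ref{auxa2} to control $\hat Q_{S^cS}\hat Q_{SS}^{-1}$, one arrives at a bound of the form
\begin{equation*}
\|\hat z_{S^c}\|_\infty \leq (1-\alpha) + \frac{2-\alpha}{\lambda}\bigl(\|W\|_\infty + \|R\|_\infty\bigr) + (\text{Hessian-deviation terms}),
\end{equation*}
so dual feasibility reduces to showing each extra term is $o(\alpha)$.

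The three quantities to control are then $\|W\|_\infty$, $\|R\|_\infty$, and the deviation of the empirical Hessian from $\bar Q$. For $W$, the novelty is that $\mathbb{E}[W]\neq 0$, because the samples in the $k$-th block come from $\bar\theta^{(k)}$ rather than from $\bar\theta$; decompose
\begin{equation*}
W = \bigl(W - \tfrac{1}{K}\textstyle\sum_k \nabla\ell^{(k)}(\bar\theta_{\setminus r})\bigr) + \bigl(\tfrac{1}{K}\textstyle\sum_k \nabla\ell^{(k)}(\bar\theta_{\setminus r}) - \tfrac{1}{K}\textstyle\sum_k \nabla\ell^{(k)}(\bar\theta^{(k)}_{\setminus r})\bigr) + \tfrac{1}{K}\textstyle\sum_k \nabla\ell^{(k)}(\bar\theta^{(k)}_{\setminus r}).
\end{equation*}
The last summand has mean zero and entries bounded by a constant, so Hoeffding on $nK$ independent summands gives $\ell_\infty$ deviation of order $\sqrt{\log p/(nK)}$ with the required probability. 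The middle piece telescopes to zero after we combine it with the $\bar\theta$-evaluation, and its bias contribution is exactly what Assumption \ref{auxa3} bounds by $\sqrt{8\log(2p/\varepsilon)/(nK)}$. Thus $\|W\|_\infty=\mathcal{O}(\sqrt{\log p/(nK)})$, and choosing $\lambda\geq\beta\sqrt{\log p/(nK)}$ makes $\|W\|_\infty/\lambda$ a controllable constant. The Hessian deviation is handled by applying Hoeffding entrywise to the $nK$ bounded summands that define $\hat Q-\bar Q$, yielding deviations of order $\sqrt{\log p/(nK)}$; sandwiched with Assumption \ref{auxa1}, this preserves the lower eigenvalue of $\hat Q_{SS}$ and the incoherence constant $\alpha$ up to a factor close to $1$, at the cost of requiring $nK\gtrsim d^2\log p$ for eigenvalue control on a $d\times d$ block. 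The Taylor remainder $R$ is quadratic in $\|\hat\theta_S-\bar\theta_S\|_2\leq \sqrt d\|\hat\theta_S-\bar\theta_S\|_\infty$, and standard manipulation yields $\|R\|_\infty\lesssim d^{3/2}\lambda^2$; forcing this to be $o(\lambda)$ is precisely the $nK\gtrsim d^3\log p$ scaling.

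The main obstacle is the bias-plus-variance decomposition of $W$: the population gradient is nonzero at $\bar\theta$, so the usual "$\mathbb{E}[\nabla\ell(\bar\theta^*)]=0$" argument of Ravikumar et al. fails, and one must identify the telescoping structure that reduces the bias exactly to the quantity bounded by Assumption \ref{auxa3}. Once that is done, the remaining work is routine: collect the three high-probability events (concentration of $W$, of $\hat Q-\bar Q$, and the Assumption \ref{auxa3} event) via a union bound, note that each fails with probability at most $\mathcal{O}(\exp(-c\lambda^2 nK))$ under the chosen $\lambda$, and derive the $\ell_\infty$ error bound $\|\hat\theta_S-\bar\theta_S\|_\infty\leq (5/C_{\min})\sqrt{d}\lambda$, which combined with the threshold $|\bar\theta_{rt}|\geq(10/C_{\min})\sqrt d\lambda$ ensures every true edge is detected with the correct sign, completing part (b).
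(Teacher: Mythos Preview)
Your plan is the same primal-dual witness route the paper takes, and the high-level architecture (restricted problem, Taylor expansion, control of $W$, $R$, and the Hessian deviation) is correct. Two points in your handling of $W$ need to be tightened, and they are exactly where the paper's argument has nontrivial content.

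First, the $nK$ summands that make up $W$ (and also the entries of the empirical Hessian) are \emph{not} independent: for fixed $k$ all $n$ terms depend on the same latent $\Delta^{(k)}$, since both the data distribution and (for the $\bar\theta^{(k)}$-evaluated pieces) the evaluation point are functions of $\Delta^{(k)}$. Ordinary Hoeffding does not apply. The paper deals with this by conditioning on $\{\Delta^{(k)}\}_{k=1}^K$, noting the summands are bounded with conditional mean zero (for $Y_2$) and applying a latent-conditional-independence version of Hoeffding, then integrating out. You should make this conditioning step explicit; as written, ``Hoeffding on $nK$ independent summands'' is false.

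Second, your treatment of the ``middle piece'' is where the real gap lies. In your three-term split the first term is identically zero (by definition $\ell=\tfrac1K\sum_k\ell^{(k)}$), so what remains is
\[
Y_1 \;=\; \tfrac1K\sum_k\bigl[\nabla\ell^{(k)}(\bar\theta_{\setminus r})-\nabla\ell^{(k)}(\bar\theta^{(k)}_{\setminus r})\bigr],
\qquad
Y_2 \;=\; \tfrac1K\sum_k\nabla\ell^{(k)}(\bar\theta^{(k)}_{\setminus r}).
\]
Assumption~\ref{auxa3} bounds only $\|\mathbb{E}[Y_1]\|_\infty$, not $\|Y_1\|_\infty$; $Y_1$ is random (it depends on the data and on the $\Delta^{(k)}$), and its fluctuation must be controlled separately. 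The paper writes $\|Y_1\|_\infty\le\|Y_1-\mathbb{E}[Y_1]\|_\infty+\|\mathbb{E}[Y_1]\|_\infty$, invokes Assumption~\ref{auxa3} for the second term, and applies the same conditional-Hoeffding argument to the centered first term. Your phrase ``telescopes to zero after we combine it with the $\bar\theta$-evaluation'' does not correspond to any cancellation in the problem; without the extra centering step, the bias bound from Assumption~\ref{auxa3} alone does not give $\|Y_1\|_\infty=\mathcal O(\sqrt{\log p/(nK)})$.
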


\begin{proof}[Sketch of Proof for Theorem \ref{thm: thm1}]
We use the primal-dual witness approach \cite{wainwright2006sharp, ravikumar2010high} and the proof can be divided into two parts. The first part shows that imposing the dependence and incoherence assumptions (Assumptions \ref{auxa1} and  \ref{auxa2}) on the \emph{population version} of the Fisher information matrix $\bar{Q}$ guarantees (with high probability) that analogous conditions hold for the \emph{sample Fisher information matrix} $Q^N := \hat{\mathbb{E}}[-\nabla^2 \ell(\bar{\theta}_{\setminus r};\{\mathfrak{X}^n_1\}^K_1)]$ (see (\ref{defQN}) in Appendix); the second part of the proof is devoted to show that if the dependence condition and incoherence condition are imposed on the \emph{sample Fisher information matrix} $Q^N$, then the growth condition and choice of $\lambda$  from Theorem \ref{thm: thm1} are sufficient to ensure that the graph associated with the true common parameter vector is recovered with high probability \cite{ravikumar2010high}. 

The first part of the proof mainly use techniques such as norm inequalities and union bounds \cite{hoeffding1994probability, ravikumar2010high} to get a high probability  $1 - \mathcal{O}(\exp (-b\frac{nK}{d^3}+ \log p))$ for some constant $b > 0$, which in turn yields the growth condition in on number of samples per task $n$ and number of tasks $K$. 

In the second part, the key is to verify the strict dual feasibility \cite{rockafellar2015convex}. Using some norm inequalities and following a method used in a different context \cite{rothman2008sparse, ravikumar2010high}, we show that it suffices to bound the random term $\|\nabla \ell(\bar{\theta};\{\mathfrak{X}^n_1\}^K_1) \|_{\infty} = \|\frac{1}{K} \sum^K_{k=1} \nabla \ell^{(k)}(\bar{\theta}; \{\mathfrak{X}^n_1\}^K_1 ) \|_{\infty}$, which we decompose into two parts as follows 
\begin{multline}
    \|\nabla \ell(\bar{\theta};\mathfrak{X}^n_1\}^K_1) \|_{\infty}    \\\leq
       \|\underbrace{ \frac{1}{K} \sum^K_{k=1} \Big\{ \nabla \ell^{(k)}(\bar{\theta}; \{\mathfrak{X}^n_1\}^{(k)}) - \nabla \ell^{(k)}(\bar{\theta}^{(k)}; \{\mathfrak{X}^n_1\}^{(k)})\Big\}}_\text{$Y_1$}  \|_{\infty} 
      + \| \underbrace{\frac{1}{K} \sum^K_{k=1} \nabla \ell^{(k)}(\bar{\theta}^{(k)}; \{\mathfrak{X}^n_1\}^{(k)}) }_\text{$Y_2$}\|_{\infty}.
\end{multline}

Using Hoeffding's Inequality with latent conditional independence (LCI) \cite{ke2019exact} with latent variables $\{\Delta^{(k)}\}_{k=1}^K$, $\|Y_2\|_{\infty}$ can be bounded with high probability in the sense that
\begin{equation}
    \mathbb{P}[\|Y_2\|_{\infty} > \delta ] \leq 2 \exp \left(- \frac{\delta^2 nK}{8} + \log p \right).
\end{equation}
Also note that $Y_1 = \nabla \ell(\bar{\theta}_{\setminus r}) - \frac{1}{K}\sum_{k=1}^K \nabla \ell^{(k)}(\bar{\theta}^{(k)}_{\setminus r})$ using the shorthand notations in (\ref{decayassump}). We can then bound $\|Y_1\|_{\infty}$ by writing 
\begin{equation}
\|Y_1\|_{\infty}  = \|Y_1 - \mathbb{E}(Y_1) +\mathbb{E}(Y_1)\|_{\infty}  \leq \|Y_1 - \mathbb{E}(Y_1) \|_{\infty} + \|\mathbb{E}(Y_1)\|_{\infty}.
\end{equation}
Using Assumption (\ref{auxa3}), it is not hard to derive that 
\begin{equation}
    \mathbb{P}(\|\mathbb{E}(Y_1)\|_{\infty} \geq \delta) \leq 2 \exp \left(- \frac{\delta^2 nK}{8} + \log p \right).
\end{equation}
For the term $\|Y_1 - \mathbb{E}(Y_1) \|_{\infty}$, we can get the same rate of decay with respect to $(n,p,d,K)$ by using LCI Hoeffding's inequality \cite{ke2019exact} again. Then applying union bounds and setting $\delta$ to be $\lambda$ times a constant, we get the rate $\mathcal{O} (\exp (-c\lambda^2 n K))$ as in Theorem \ref{thm: thm1}, as well as the condition on $\lambda$. The detailed proof is in Appendix \ref{thm1pf}.
\end{proof}

\subsubsection{Support Recovery for Novel Task}

For the novel task, the following theorem provides the sufficient conditions and a probability lower bound for the sign-consistency of the estimate, from which we can conclude that using the knowledge learned from the auxiliary tasks, the consistent signed neighborhood selection for the novel task can be achieved in a sufficient sample complexity of $n^{(K+1)} = \Omega(d^3 \log d)$. 
\begin{theorem}\label{thm: thm2}
    Suppose we have recovered the true support union $S$ of a family of $p$-dimensional random Ising models of size K described in Definition \ref{family}. For a novel task of Ising model selection with parameter $\bar{\theta}^{(K+1)}$ such that $\text{supp}(\bar{\theta}^{(K+1)}) \subseteq S$ and satisfying Assumptions \ref{novela1}, \ref{novela2}, suppose the regularization parameter is chosen such that $\lambda^{(K+1)} \geq \beta \sqrt{\frac{\log  d}{n^{(K+1)} }}$
    for some constant $\beta > 0$, then there exists a positive constant $L$, independent of $(n^{(K+1)} ,p,d)$, such that if 
        $n^{(K+1)} >  L d^3 \log d$,
    then for some constant $c>0$, the following properties hold with probability at least $1-\mathcal{O}(\exp(- c {(\lambda^{(K+1)})}^2 n^{(K+1)}))$.
     
    (a) For each node $r \in V$, the $\ell_1$-regularized logistic regression for estimating $\bar{\theta}^{(K+1)}$ in the novel task, given data $\{\mathfrak{X}^{n^{(K+1)}}_1\}^{(K+1)}$has a unique solution $\hat{\theta}^{(K+1)}$, and so uniquely specifies a signed neighborhood $\mathcal{\hat{N}}^{(K+1)}_{\pm}(r) :=\{\text{sign}(\hat{\theta}^{(K+1)}_{ru})u|u \in V \setminus r, \hat{\theta}^{(K+1)}_{ru} \neq 0 \}$
    
    (b) For each $r \in V$, the estimated signed neighborhood vector $\mathcal{\hat{N}}^{(K+1)}_{\pm}(r)$ correctly excludes all edges not in the true neighborhood $\mathcal{N}^{(K+1)}_{\pm}(r) :=\{\text{sign}(\bar{\theta}^{(K+1)}_{ru})u|u \in V \setminus r, \bar{\theta}^{(K+1)}_{ru} \neq 0 \}$. Moreover, it correctly includes all edges with $| \bar{\theta}^{(K+1)}_{rt}| \geq \frac{10}{C^{(K+1)}_{\min}}\sqrt{d} \lambda^{(K+1)}.$
\end{theorem}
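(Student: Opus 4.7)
The plan is to mirror the primal--dual witness argument used for Theorem \ref{thm: thm1}, but applied to the restricted problem (\ref{restrict}). The key simplification is that, conditional on the success of the first step so that $\text{supp}(\hat\theta_{\setminus r}) = S_r$, the feasible set for node $r$ collapses from $\mathbb{R}^{p-1}$ to the $|S_r|$-dimensional subspace supported on $S_r$, so every union bound that previously ran over $p-1$ covariates now runs over at most $|S_r|\le d$. This directly turns the $\log p$ in the concentration rates into $\log d$, while the $d^3$ factor reappears from the standard Hessian-perturbation step of \cite{ravikumar2010high}.

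First I would construct the primal--dual witness. Fix $r\in V$ and let $\tilde\theta_{\setminus r}$ be the oracle solution of (\ref{restrict}) under the additional constraint $\text{supp}(\theta_{\setminus r})\subseteq S^{(K+1)}_r$, with subgradient $\tilde z$ chosen so that $\tilde z_{S^{(K+1)}_r} = \text{sign}(\bar\theta^{(K+1)}_{r,S^{(K+1)}_r})$ and $\tilde z_{S_r\setminus S^{(K+1)}_r}$ is determined by the KKT stationarity equation. As in \cite{ravikumar2010high}, showing strict dual feasibility $\|\tilde z_{S_r\setminus S^{(K+1)}_r}\|_\infty<1$ certifies that $\tilde\theta$ is the unique optimum of (\ref{restrict}) and has support exactly $S^{(K+1)}_r$, while an $\ell_\infty$ bound of order $\sqrt d\,\lambda^{(K+1)}/C^{(K+1)}_{\min}$ on $\tilde\theta - \bar\theta^{(K+1)}_{\setminus r}$ yields the sign-correctness and the edge-detection threshold $10\sqrt d\,\lambda^{(K+1)}/C^{(K+1)}_{\min}$ of part (b).

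Next I would follow the two-part structure of the sketch of Theorem \ref{thm: thm1}. Part one: define the sample Hessian $Q^{(K+1),n}$ evaluated at $\bar\theta^{(K+1)}_{\setminus r}$, restricted to rows and columns in $S_r$, and show that Assumptions \ref{novela1} and \ref{novela2} on the population $\bar Q^{(K+1)}$ transfer to $Q^{(K+1),n}$ via entrywise Hoeffding concentration over the at most $d^2$ entries, followed by the eigenvalue and $\vertiii{\cdot}_\infty$-perturbation bounds of \cite{ravikumar2010high}; forcing the perturbation to be of order $1/d$ is what yields the growth condition $n^{(K+1)}=\Omega(d^3\log d)$. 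Part two: bound the stochastic gradient $\|\nabla\ell^{(K+1)}(\bar\theta^{(K+1)}_{\setminus r})\|_\infty$ restricted to $S_r$ by applying Hoeffding's inequality conditionally on $\Delta^{(K+1)}$---under which the $n^{(K+1)}$ samples are i.i.d.\ and the score entries are bounded and mean-zero---then union-bounding over the at most $d$ coordinates to obtain $\mathbb{P}(\|\nabla\ell^{(K+1)}\|_\infty>\delta)\le 2\exp(-\delta^2 n^{(K+1)}/8+\log d)$. Setting $\delta\asymp\lambda^{(K+1)}$ and combining with the Hessian concentration feeds into the Rothman-type argument used by Ravikumar et al.\ \cite{rothman2008sparse, ravikumar2010high} to certify strict dual feasibility with margin $\alpha^{(K+1)}/2$, yielding the probability $1-\mathcal{O}(\exp(-c(\lambda^{(K+1)})^2 n^{(K+1)}))$ stated in the theorem.

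The main obstacle I anticipate is the coupling between the randomness in $\Delta^{(K+1)}$---which determines both $\bar\theta^{(K+1)}$ and its random support $S^{(K+1)}$---and the sampling randomness in $\{X^{(K+1)}_i\}$: the population Hessian $\bar Q^{(K+1)}$ in (\ref{calcQK1}) averages over both, while the natural sample analogue is conditional on a single realization of $\Delta^{(K+1)}$. I plan to resolve this by first conditioning on $\Delta^{(K+1)}$, applying Ravikumar-style concentration against the \emph{conditional} population Hessian, and then closing the deterministic gap to $\bar Q^{(K+1)}$ using the boundedness of the variance factor $\eta(\cdot)$ in (\ref{calcQK1}) together with Assumption \ref{novela1}. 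A secondary bookkeeping point is that although the ambient vector still lives in $\mathbb{R}^{p-1}$, every relevant vector and matrix restricts cleanly to $S_r$ of size at most $d$, so care is needed not to union-bound inadvertently over irrelevant coordinates in $(V\setminus r)\setminus S_r$ and thereby reintroduce a spurious $\log p$ factor.
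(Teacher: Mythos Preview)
Your proposal is correct and follows essentially the same route as the paper: collapse the restricted program (\ref{restrict}) to an unrestricted $|S_r|$-dimensional problem (equation (\ref{restrict0})), then rerun the Ravikumar primal--dual witness with every union bound taken over at most $d$ coordinates, which replaces $\log p$ by $\log d$ and produces the $d^3\log d$ scaling from the Hessian-perturbation lemmas. The obstacle you anticipate about the $\Delta^{(K+1)}$ coupling does not materialize in the paper's argument: for the score term it conditions on $\Delta^{(K+1)}$ and invokes the law of total expectation before applying Hoeffding, and for the Hessian concentration it effectively treats $\bar\theta^{(K+1)}$ as a fixed realized parameter throughout, so no separate ``gap-closing'' step to a marginal Hessian is performed.
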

\emph{Remark.} Note that the constants $\beta, L$ and $c$ we use in the theorems are just some general constants for convenience of notation. The ones used in Theorem \ref{thm: thm1} are not related to those in Theorem \ref{thm: thm2}.
\begin{proof}[Sketch of Proof for Theorem \ref{thm: thm2}]
We use the primal-dual witness approach again \cite{wainwright2006sharp, ravikumar2010high}. We have supposed that we have recovered the true support union $S$ from our estimate for the true common parameter, $\hat{\theta}.$ The constraint in (\ref{restrict}) then enables us to convert the problem into one without the restriction and with a parameter and data of dimension $|S_r|$ with $|S_r| \leq d$ for all $r \in V$, for we can combine the constraint straightforward into the minimization problem. With some abuse of notation using $S$ to denote $S_r$ as before, we can write
\begin{equation}\label{restrict0}
    \hat{\theta}^{(K+1)}_{S} = \argmin_{{\theta_{S} \in \mathbb{R}^{|S_r|}}} \Big\{ \ell^{(K+1)}(\theta_{S}; \{\mathfrak{X}^{n^{(K+1)}}_{1,S}\}^{(K+1)})
    + \lambda^{(K+1)} {\|{\theta_{S}}\|}_1 \Big\},
\end{equation}
and $\hat{\theta}^{(K+1)}_{S^c} = 0.$
In this way, we solve a convex program analogous to (\ref{isingreg}), which is learning on a single Ising model, but with dimension reduced from $p$ to $|S_r|$, and consequently to $d$. The detailed proof is in Appendix \ref{thm2pf}.
\end{proof}

\section{Experiments}\label{experiment}
To help illustrate and validate our theories, we conduct a group of synthetic experiments and report the success rates  for recovery of  the true support union. We run the experiments with three different graph sizes $p \in \{6,12,24\}$, where we fix the maximum neighborhood size $d$ to be 3, and set the number of tasks scaling as $K=d^3\log p$, with sample size for each auxiliary task $n= Cd^3\log p/K = C$ for $C$ ranging from 5 to upwards of 200.  Then based on the estimated support union using $C=50$, we use different sample sizes $n^{(K+1)}=C^*d^3\log(d)$ for the novel task when $C^*$ changes from $5$ to $200$ and calculate the success rates for signed edge recovery of the novel task. We plot the success rates against $C$ and $C^*$ for the two steps respectively in Figure \ref{fig: fig1}. The curves approximately lie on top of one another as the success rates tend to 1, as predicted by Theorem \ref{thm: thm1} and \ref{thm: thm2}.

\begin{figure}[ht]
\setlength\Myfigwd{8cm}
\floatbox[{\capbeside
\thisfloatsetup{capbesidesep=quad,
justification=justified,
capbesideposition= {right,center},
capbesidewidth=\dimexpr\linewidth-\Myfigwd-1em\relax}}]
{figure}[\FBwidth]
{\includegraphics[width=\Myfigwd]{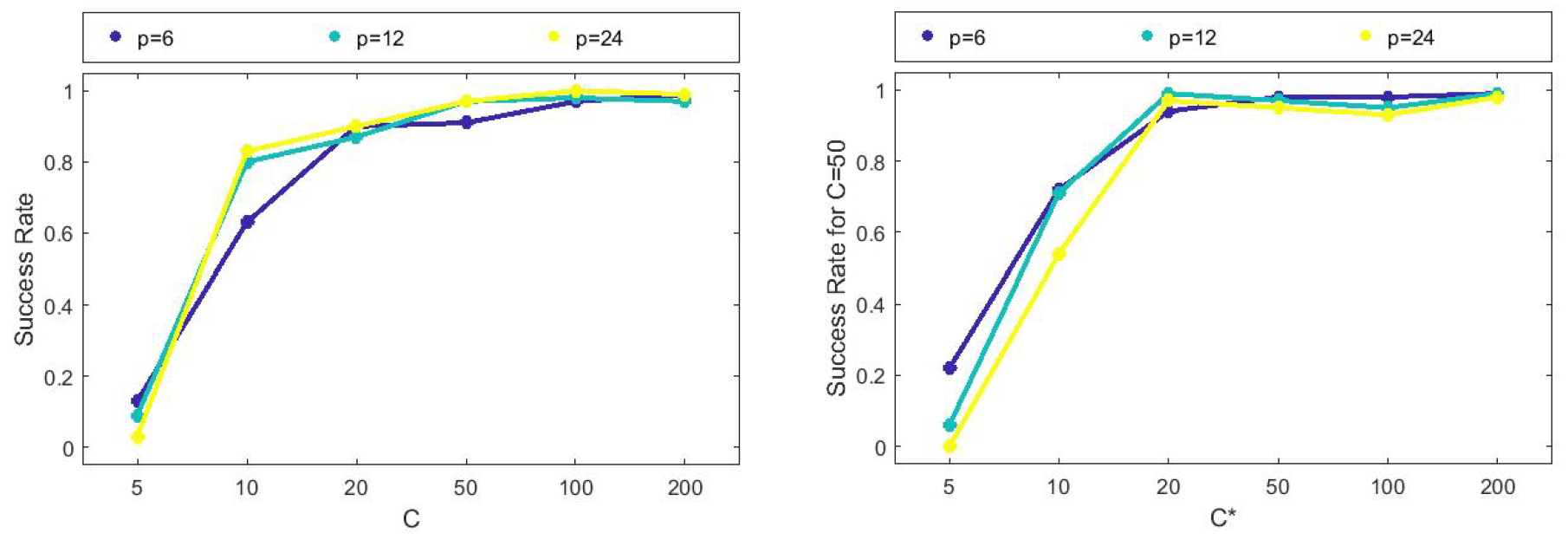}}{\caption{(Left) The success rate for support union recovery vs. the choice of $C$. (Right) The success rates for signed edge recovery for novel task restricted to the estimated support union based on $C = 50$ vs. the choice of $C^*$.}\label{fig: fig1}}
\end{figure}

As another motivation and validation of our method, we used the real-world dataset “1000 Functional Connectomes” at \url{http://www.nitrc.org/projects/fcon_1000/} from 1128 subjects, $K = 41$ sites worldwide, and $p = 157$ brain regions. We estimated the support union with precision 0.8837, recall 0.9007 and F1-score 0.8921. More experimental details are in Appendix \ref{expdetail}.

\paragraph{Concluding Remarks.} Our method and analysis in this paper can be extended to more general cases of Markov random fields. Since logistic regression can be generalized to multi-class logistic regression, the analysis performed on the meta learning problem for Ising model can find its analog in multi-class discrete Markov random fields \cite{ravikumar2010high}. Some other interesting directions for future work based on our method include but not limited to meta learning for general continuous Markov random fields, meta learning for graphical models with hyper-edges that can connect multiple nodes instead only two, or time-varying graphical models, etc. We believe our results can provide a solid foundation and open a novel perspective for meta learning in Markov random fields and related graphical models.

\bibliographystyle{plain}
\bibliography{sample}

\newpage
\appendix
\section{Summary of Notations}\label{sumnot}

\begin{table}[hbt!]\caption{Notation used in this paper.}\label{notation}

\begin{center}

\begin{small}
\centering

\begin{tabular}{ll}
\toprule
Notation & Description \\
\midrule
sign($x$)   & Sign of $x \in \mathbb{R}$\\ 
\parbox{1cm}{$\|a\|_1$} & $\ell_1$-norm of vector $a \in \mathbb{R}^n,$  i.e., $\sum^n_{i=1}|a_i|$\\
\parbox{1cm}{$\|a\|_2$} & $\ell_2$-norm of vector $a \in \mathbb{R}^n,$  i.e., $\sqrt{\sum^n_{i=1}a_i^2}$\\
\parbox{1cm}{$\|a\|_{\infty}$} & $\ell_{\infty}$-norm of vector $a \in \mathbb{R}^n,$  i.e., $\max^n_{i=1}|a_i|$\\
\parbox{1cm}{$\vertiii{A}_{\infty}$ } & $\ell_{\infty}$-operator-norm of matrix $A \in \mathbb{R}^{m \times n}$, i.e., $\max_{1 \leq i \leq m}\sum^n_{j=1}|A_{ij}|$\\
$\Lambda_{\min}(A)$  & Minimum eigenvalue of matrix $A \in \mathbb{R}^{m \times m}$ \\
$\Lambda_{\max}(A)$  & Maximum eigenvalue of matrix $A \in \mathbb{R}^{m \times m}$ \\
\parbox{1cm}{$\vertiii{A}_2$ } & $\ell_2$-operator-norm of matrix $A \in \mathbb{R}^{m \times n}$, i.e., $\sqrt{\Lambda_{\max}(A^T A)}$\\
$\text{supp}(a)$ & Support set of vector $a \in \mathbb{R}^{p}$, i.e., $\{i|a_i \neq 0, 1 \leq i \leq p\}$ \\
$|S|$ & Number of elements in set $S$ \\
$S^c$ & Complement of set $S$ \\
\parbox{1cm}{$a_S$  } & Sub-vector of vector $a \in \mathbb{R}^n$ indexed by the entries in set $S$ , i.e., $(a_i)_{i\in S}$\\
\parbox{1cm}{$A_{S_1 S_2}$  } & Sub-matrix of matrix $A^{m \times n}$ indexed by elements in $S_1 \times S_2$,  i.e., $(A_{(i,j)})_{i \in S_1, j \in S_2 }$\\
$\mathcal{O}(g(n))$ & $f(n) = \mathcal{O}(g(n))$ if $f(n) \leq Kg(n)$ for some constant $K < \infty$ \\
$\Omega(g(n))$ & $f(n) = \Omega(g(n))$ if $f(n) \geq K'g(n)$ for some constant $K' > 0$ \\
\bottomrule
\end{tabular}
\end{small}
\end{center}
\end{table}

\section{Details of Experiments}\label{expdetail}
Given fixed values of $p$ and $d$, we simulate sparse random graphs by first randomly choosing whether an edge exists or not with a probability of $\frac{d}{p-1}$. At the end we check if the maximum neighborhood size $d$ is satisfied; if not, we redo the generating process until we get a random graph with maximum degree $d$. For the non-zero edge values, we use \emph{mixed couplings} \cite{ravikumar2010high}, that is, each existent edge (edge in the true support union in our case) has value $\bar{\theta}_{st} = \pm 0.5$ with equal probability. Then, to generate the random parameter of each task: for $1 \leq k \leq K+1$ and $(s,t) \in S$, we set $\bar{\theta}^{(k)}_{st} = \bar{\theta}_{st}X^{(k)}_{st}$ with $X^{(k)}_{st} \overset{i.i.d.}{\sim} \text{Bernoulli}(0.9)$. For the samples, we use Gibbs sampling \cite{casella1992explaining} with 10 iterations to generate each $p$-dimensional data sample for the binary node values according to the specific distributions of Ising models (see (\ref{isingdistk})) using our simulated parameter values.  Under each setting of the $(p, C)$ pair, we run the experiment 100 times to record whether or not it successfully recovers the  neighborhood sets, and take the average of these 100 trials to calculate the success rate $\hat{\mathbb{P}}[\hat{\mathcal{N}}(r) = \mathcal{N}(r)]$. The regularization parameter $\lambda$ is set to be a constant factor of $\sqrt{\frac{\log p}{nK}}$ as suggested by Theorem  \ref{thm: thm1}. Here the constant factor is set to 1 by default, which works well. With $\lambda^{(K+1)}$ a constant factor (1) of $\sqrt{\frac{\log{d}}{n^{(K+1)}}}$, we then perform restricted novel task estimation 100 times for $n^{(K+1)} = C^*d^3\log(d)$ with different values for $C^*$, where the success rate for the novel task include sign information, i.e. it is calculated as $\hat{\mathbb{P}}[\hat{\mathcal{N}}^{(K+1)}_{\pm}(r) = \mathcal{N}^{(K+1)}_{\pm}(r)]$.

For the real-world data experiment, the sample sizes for each individual task range from $700$ to $8748$, with an average size of around $3156$ and standard deviation $1858$.  Due to the limited sample sizes for each individual task, we cannot be certain to retrieve the true signed support for each individual task or to evaluate the novel task performance. Yet we do have an independent set with $68259$ samples to retrieve the true support union. When running the algorithm for support union recovery, we used all 41 tasks, and we further down-sampled the data in each task by half randomly. The constant factor in $\lambda$ was tuned to be 2 to get reasonably sparse graphs $ d = 19$ compared to the number of nodes $p = 157$. 

\section{Illustrative Example}\label{toyeg}
To verify that Assumption \ref{auxa3} can be satisfied for a large family of distributions, we provide an illustrative example to demonstrate its viability. The infinity norm in the assumption can be written explicitly as 
\begin{equation}\label{rephrase}
   \|\mathbb{E}_{\Delta \sim P} \big[ \mathbb{E}_{X \sim \bar{\theta}+\Delta} \big[X_{\setminus r}(\mathbb{E}_{X \sim \bar{\theta}}[X_r|X_{\setminus r}]-\mathbb{E}_{X \sim \bar{\theta}+\Delta}[X_r|X_{\setminus r}])|\Delta \big] \big]\|_{\infty}. 
\end{equation}
For a simple undirected graph with 3 nodes and 3 potential edges, we let the latent underlying graph have the parameter vector $\bar{\theta} = (\bar{\theta}_{12}, \bar{\theta}_{13}, \bar{\theta}_{23}) = (1,1,1)$. See Figure \ref{fig: fig2} for a graph illustration. Then we let the randomness in the parameter for the observable graphs to have the following pattern
\begin{equation*}
    \Delta =\begin{cases}
            (a-1,a-1,-1), & \text{with probability } \frac{1}{3}\\
            (a-1,-1,a-1), & \text{with probability } \frac{1}{3}\\
            (-1,a-1,a-1), & \text{with probability } \frac{1}{3},
              \end{cases} 
\end{equation*}
resulting in potentially 3 kinds of graphs, each with 2 edges with the same edge value $a$ (see Figure \ref{fig: fig3}).

\begin{figure}[ht]
\setlength\Myfigwd{4cm}
{\includegraphics[width=\Myfigwd]{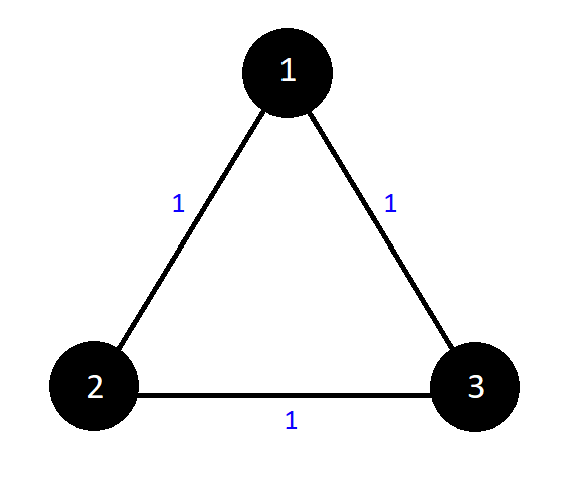}}{\caption{Latent common graph with deterministic edge vector $\bar{\theta}= (\bar{\theta}_{12}, \bar{\theta}_{13}, \bar{\theta}_{23}) = (1,1,1).$}\label{fig: fig2}}
\end{figure}

\begin{figure}[ht]
\setlength\Myfigwd{4cm}
{\includegraphics[width=\Myfigwd]{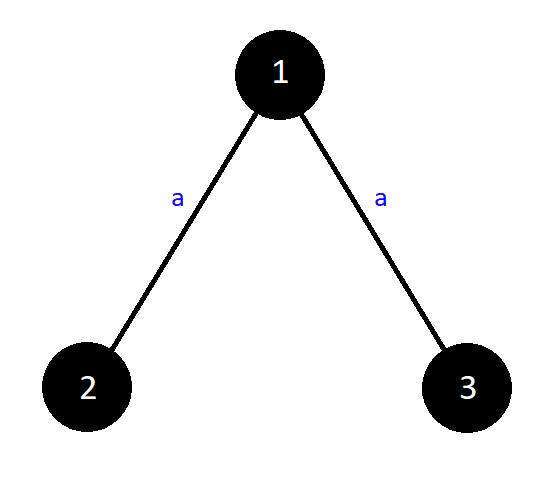}
\includegraphics[width=\Myfigwd]{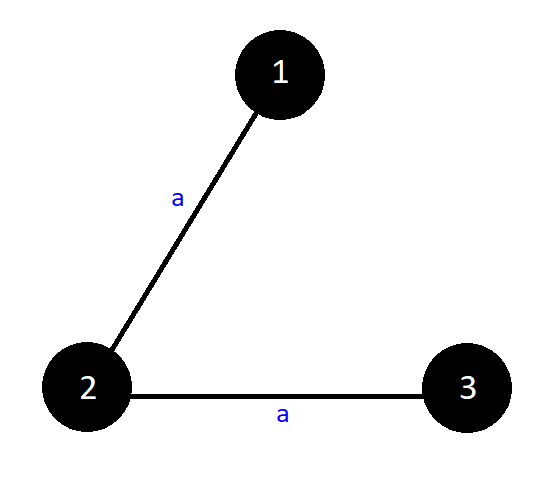}
\includegraphics[width=\Myfigwd]{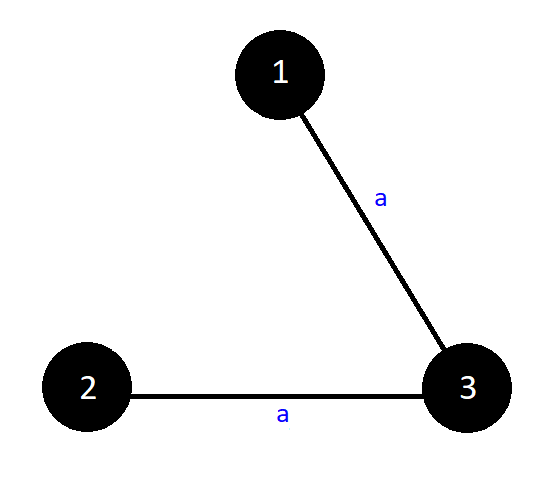}}{\caption{Observable graphs with edge vector $\bar{\theta}+\Delta$, each with 2 edges with the same value $a$.}\label{fig: fig3}}
\end{figure}

Next, we need to find a value of $a$ that can fulfill our condition. Notice that the condition involves the expectation over both $\Delta$ and $X$, and thus we need to find out explicitly the probabilities of all combinations of the 3 binary states under each of the three graph settings.

In the following, we use $\mathbb{P}(x_1,x_2,x_3)$ to denote $\mathbb{P}(X_1 = x_1,X_2 = x_2,X_3 = x_3)$ for simplicity. Now consider node $r=1$. In particular we will need the conditional distribution of $X_1$ given $X_2$ and $X_3$. For the first setting (the graph on the left in Figure \ref{fig: fig3}), we have the joint distributions
\begin{equation*}
    \mathbb{P}(1,1,1) =  \mathbb{P}(-1,-1,-1) = \frac{e^{2a}}{Z},
\end{equation*}
\begin{equation*}
    \mathbb{P}(1,1,-1) =  \mathbb{P}(1,-1,1) = \mathbb{P}(-1,1,-1) = \mathbb{P}(-1,-1,1) = \frac{1}{Z},
\end{equation*}
\begin{equation*}
    \mathbb{P}(1,-1,-1) =  \mathbb{P}(-1,1,1) = \frac{e^{-2a}}{Z},
\end{equation*}
where 
\begin{equation*}
    Z = \frac{1}{4+2(e^{2a}e^{-2a})}
\end{equation*}
is the normalizing term.  The joint distribution of $X_2$ and $X_3$ can be found to be 
\begin{equation*}
    \mathbb{P}_{X_2, X_3}(1,1) =  \mathbb{P}_{X_2, X_3}(-1,-1) = \frac{e^{2a}+e^{-2a}}{Z},
\end{equation*}
\begin{equation*}
    \mathbb{P}_{X_2, X_3}(1,-1) =  \mathbb{P}_{X_2, X_3}(-1,1) = \frac{2}{Z}.
\end{equation*}

Then we can derive that the conditional expectation of $X_1$ given $X_2$ and $X_3$ are
\begin{equation*}
    \mathbb{E}[X_1|X_2 = 1, X_3 = 1] = \frac{e^{2a} - e^{-2a}}{e^{2a}+e^{-2a}},
\end{equation*}
\begin{equation*}
    \mathbb{E}[X_1|X_2 = 1, X_3 = -1] = \mathbb{E}[X_1|X_2 = -1, X_3 = 1] = 0,
\end{equation*}
\begin{equation*}
    \mathbb{E}[X_1|X_2 = -1, X_3 = -1] = \frac{e^{-2a} - e^{2a}}{e^{2a}+e^{-2a}}.
\end{equation*}

For the other 3 graph structures (middle and right in Figure \ref{fig: fig3}), we can derive the probabilities and expectations similarly. Also note that for node $r = 1$, these two graph structures are symmetric to $X_1$. Finally with all these values we have, plugging them into the infinity norm in (\ref{rephrase}) and setting it to be small (e.g., $0$ in this illustrative example), we have that $a \approx 1.75$.

Since our setting design is symmetric for all $X_1$, $X_2$, $X_3$, the same result hold when $r = 2$
 or $3$.

\section{Proof of Theorem \ref{thm: thm1}}\label{thm1pf}
\subsection{Primal-dual Witness for Recovery of the Latent Common Graph}
The main technique we use throughout the theoretical proof is the primal-dual witness approach \cite{wainwright2006sharp, ravikumar2010high} that relies on the Karush-Kuhn Tucker conditions in optimization and concentration inequalities in learning theory. Essentially, it constructs a \emph{primal-dual pair}, i.e. a primal solution $\hat{\theta} \in \mathbb{R}^{p-1}$ and an associated sub-gradient vector $\hat{z} \in  \in \mathbb{R}^{p-1}$ as a dual solution so that the sub-gradient optimality conditions in the convex program (\ref{improper}) are satisfied. We show that under the conditions on $(n,p,d,K)$ stated in the theorem, the primal-dual pair $(\hat{\theta}, \hat{z})$ can be constructed to act as a $witness$ that guarantees the method correctly recovers the structure of the graph parametrized by the true common parameter $\bar{\theta}$.

For the convex program (\ref{improper}), the zero sub-gradient optimality condition \cite{rockafellar2015convex} has the form of
\begin{equation}\label{optimal_cond}
    \nabla \ell(\hat{\theta}) + \lambda \hat{z} = 0,
\end{equation}
where the dual (the sub-gradient vector) $\hat{z} \in \mathbb{R}^{p-1}$ must satisfy
\begin{equation}\label{lessthan1_con}
    \text{sign}(\hat{z}_{rt}) = \text{sign}(\hat{\theta}_{rt}) \quad  \text{if }  \hat{\theta}_{rt} \neq 0 \quad \text{and} \quad |\hat{z}_{rt}| \leq 1 \text{ otherwise}.
\end{equation}
By convexity, a pair $(\hat{\theta}, \hat{z}) \in  \mathbb{R}^{p-1} \times  \mathbb{R}^{p-1}$ is a primal-dual optimal solution to the convex program \emph{if and only if} the two conditions (\ref{optimal_cond}) and (\ref{lessthan1_con}) are satisfied. Furthermore, this optimal primal-dual pair correctly specifies the signed neighborhood of node $r$ \emph{if and only if} 
\begin{equation}\label{samesign_cond}
    \text{sign}(\hat{z}_{rt}) = \text{sign}(\bar{\theta}_{rt}) \quad \forall (r,t) \in S,
\end{equation}
and 
\begin{equation}\label{cond0}
    \hat{\theta}_{rt} = 0 \quad \forall (r,t) \in S^c.
\end{equation}
The $\ell_1$-regularized logistic regression problem (\ref{improper}) is convex. The following lemma provides sufficient conditions for it to be \emph{strictly} convex and hence the uniqueness of the optimal solution, as well as the shared sparsity among optimal solutions.
\begin{lemma} (A generalization of Lemma 1 in \cite{ravikumar2010high}).\label{uniquelemma}
Suppose that there exists an optimal primal solution $\hat{\theta}$ with associated optimal dual vector $\hat{z}$ such that $\|\hat{z}_{S^c}\|_{\infty} < 1$. Then any optimal primal solution $\Tilde{\theta}$ must have $\Tilde{\theta}_{S^c} = 0$. Moreover, if the Hessian sub-matrix $[\nabla^2 \ell(\hat{\theta};\{\mathfrak{X}^n_1\}^K_1)]_{SS}$ is strictly positive definite for the loss function defined in the paper, then $\hat{\theta}$ is the unique optimal solution.
\end{lemma}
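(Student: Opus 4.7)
The plan is to adapt the standard convex-analytic argument of Ravikumar et al.\ to the pooled loss $\ell(\theta;\{\mathfrak{X}^n_1\}^K_1)$ defined in (\ref{avgloss}). The key structural features we exploit are that (i) this loss is a convex function of $\theta$ (as an average of logistic log-likelihoods), and (ii) the Hessian admits the explicit expression in (\ref{calcQ}) with the strictly positive variance weight $\eta(\cdot;\theta)>0$. Everything else is a transparent extension of the single-task argument, since the convex structure of the objective is unchanged.

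For the first claim (shared sparsity), I would fix any other primal optimum $\tilde\theta$ and use the two subgradient inequalities at $\hat\theta$. From the optimality condition (\ref{optimal_cond}) we have $\nabla\ell(\hat\theta)=-\lambda\hat z$, so convexity of $\ell$ gives
\begin{equation*}
\ell(\tilde\theta) \geq \ell(\hat\theta) - \lambda\langle \hat z,\tilde\theta-\hat\theta\rangle,
\end{equation*}
and since $\hat z$ is a subgradient of $\|\cdot\|_1$ at $\hat\theta$ by (\ref{lessthan1_con}),
\begin{equation*}
\|\tilde\theta\|_1 \geq \|\hat\theta\|_1 + \langle\hat z,\tilde\theta-\hat\theta\rangle.
\end{equation*}
Adding the two and using $f(\tilde\theta)=f(\hat\theta)$ forces both inequalities to be equalities. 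The tight $\ell_1$ inequality means $\hat z$ is also a subgradient of $\|\cdot\|_1$ at $\tilde\theta$; hence for every coordinate with $\tilde\theta_{rt}\neq 0$ we would need $\hat z_{rt}=\mathrm{sign}(\tilde\theta_{rt})\in\{\pm 1\}$. Since $\|\hat z_{S^c}\|_\infty<1$ by hypothesis, this forces $\tilde\theta_{rt}=0$ for every $(r,t)\in S^c$, as claimed.

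For the uniqueness claim, I would note that by the first part every optimum lies in the affine subspace $\mathcal{U}:=\{\theta:\theta_{S^c}=0\}$, so it suffices to establish strict convexity of $f$ restricted to $\mathcal{U}$. On $\mathcal{U}$, $f$ is the sum of $\ell$ (restricted to $\mathcal{U}$) and $\lambda\|\cdot\|_1$; the second is convex, so I would aim to show the first is strictly convex on $\mathcal{U}$. Restricted to $\mathcal{U}$, the Hessian of $\ell$ at any $\theta$ equals $[\nabla^2\ell(\theta)]_{SS}=\frac{1}{nK}\sum_{k,i}\eta(x_i^{(k)};\theta)\,x_{i,S}^{(k)}(x_{i,S}^{(k)})^{T}$. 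Because $\eta(\cdot;\theta)$ is strictly positive for every $\theta$, the nullspace of this matrix coincides with $\{v:v^{T}x_{i,S}^{(k)}=0\ \forall k,i\}$, which is independent of $\theta$. Hence strict positive definiteness of $[\nabla^2\ell(\hat\theta)]_{SS}$ propagates globally on $\mathcal{U}$, so $f$ is strictly convex on $\mathcal{U}$ and its minimizer is unique.

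The argument is essentially routine convex analysis; the one place to be careful, and which I would flag as the main obstacle, is the uniqueness half: a strictly positive definite Hessian at a single point gives strict convexity only \emph{locally} in general. The crucial observation that unlocks the global statement is data-dependence of the nullspace, namely that the logistic variance weights $\eta(x_i^{(k)};\theta)$ never vanish, so the Hessian's rank profile depends on the samples rather than on $\theta$. Once this point is isolated, the rest of the proof is a verbatim extension of Lemma~1 of Ravikumar et al.\ from a single sample average to the meta-learning average over $K$ tasks, since the convexity properties and subgradient calculus used above are insensitive to how the loss is aggregated across tasks.
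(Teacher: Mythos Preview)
Your proposal is correct and follows essentially the same approach as the paper, which simply defers to Lemma~1 of Ravikumar et al.\ and notes that averaging losses over $K$ tasks preserves the relevant convexity structure. You have in fact spelled out the full argument---including the key observation that the nullspace of the Hessian depends only on the data and not on $\theta$ (because $\eta>0$), which is exactly what makes pointwise positive definiteness propagate globally---whereas the paper's own proof leaves these details implicit in the citation.
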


\begin{proof}
The proof follows exactly the same logic as that for Lemma 1 in \cite{ravikumar2010high}, except that the loss function in our case is one more generalized --- the average of the losses in each task, which does not change the property of strict convexity when it is present. To see this, note that the loss function in \cite{ravikumar2010high} corresponds to $\ell^{(k)}(\theta)$ we defined in the paper, the loss for each task in our case.
\end{proof} 

Based on Lemma \ref{uniquelemma}, we construct a primal-dual witness pair $(\hat{\theta}, \hat{z})$ with the following steps.

\paragraph{Step 1} We set $\hat{\theta}_S$ as the minimizer of the $\ell_1$-penalized likelihood
\begin{equation}\label{defhatthetas}
    \hat{\theta}_S = \argmin_{(\hat{\theta}_S,0)}\{\ell(\theta;\{\mathfrak{X}^n_1\}^K_1) + \lambda \|{\theta}_S\|_1\},
\end{equation}
and set $\hat{z}_S = \text{sign}(\hat{\theta}_S)$.

\paragraph{Step 2} We set $\hat{\theta}_{S^c} = 0$ so that condition (\ref{cond0}) holds.
\paragraph{Step 3} We obtain $\hat{z}_{S^c}$ from (\ref{optimal_cond}) by substituting in the values of $\hat{\theta}_S$ and $\hat{z}_S$. At this point, our construction satisfies conditions (\ref{optimal_cond}) and (\ref{cond0}).
\paragraph{Step 4} We need to show that the stated scaling of $(n,p,d,K)$ in Theorem \ref{thm: thm1} implies that, with high probability, the remaining conditions (\ref{lessthan1_con}) and (\ref{samesign_cond}) are satisfied. 

The last step is most challenging and is the goal of the majority of our proof. Our analysis guarantees that $\|\hat{z}_{S^c}\|_{\infty} < 1$ with high probability. Another condition to be satisfied is the positive definiteness stated in Lemma \ref{uniquelemma}, for which  by Assumptions \ref{auxa1} and \ref{auxa2}, we prove that the sub-matrix of the sample Fisher information matrix is strictly positive definite with high probability, so that the primal solution $\hat{\theta}$ is guaranteed to be unique. The next two subsections contribute exactly to these two parts of the proof.

\subsection{Uniform Convergence of Sample Information Matrices in Auxiliary Tasks}\label{uniconv}
To satisfy the condition of positive definiteness in Lemma \ref{uniquelemma} and to lay the foundation for the analysis under the assumptions of the sample information matrix of having bounded eigenvalues in the next subsection \ref{from_sample}, we aim to prove here that if the dependency and incoherence conditions from Assumptions \ref{auxa1} and \ref{auxa2} are imposed on the \emph{population} Fisher information matrix then under the specified scaling of $(n,p,d,K)$, analogous bounds hold for the \emph{sample} Fisher information matrix with probability converging to one.

Recall the definition of the \emph{population} Fisher information matrix (dropping the subscript $r$) from Section \ref{sectionauxass}, we have (see (\ref{calcQ})):
\begin{equation}
    \bar{Q} = \frac{1}{K} \sum^K_{k=1}\mathbb{E}[\eta(X^{(k)}; \bar{\theta})X^{(k)}_{\setminus r} {(X^{(k)}_{\setminus r})}^T],
\end{equation}

and its sample counterpart, i.e., the \emph{sample} Fisher information matrix is defined as 
\begin{equation}\label{defQN}
    Q^N := \hat{\mathbb{E}}[-\nabla^2 \ell(\bar{\theta}_{\setminus r};\{\mathfrak{X}^n_1\}^K_1)] = \frac{1}{K} \sum^K_{k=1} \frac{1}{n}\sum^n_{i=1}\eta(x^{(k)}_i;\bar{\theta}) x^{(k)}_{i,\setminus r} {(x^{(k)}_{i,\setminus r})}^T.
\end{equation}
Here the $\mathbb{E}$ in $\bar{Q}$ is the population expectation under the joint distribution of the randomness in the model parameters $\{\Delta^{(k)}\}^K_{k=1}$ and the random samples $\{\mathfrak{X}^n_1\}^K_1$ for the $K$ auxiliary tasks, while $\hat{\mathbb{E}}$ in $Q^N$ denotes the empirical expectation, and the variance function is defined in (\ref{calcQ}).

\subsubsection{Uniform Convergence for Dependence Assumption}
For the dependence assumption, we show that the eigenvalue bounds in Assumptions \ref{auxa1}  hold with high probability for sample Fisher information matrix and sample covariance matrices in the following two lemmas:
\begin{lemma}\label{mineigen}
Suppose that Assumption \ref{auxa1} holds for the population Fisher information matrix $\bar{Q}$ and the \emph{pooled} population covariance matrix $\mathbb{E}(\frac{1}{K} \sum^K_{k=1}X^{(k)} {(X^{(k)})}^T)$. For any $\delta > 0$ and some fixed constants A and B, we have 

\begin{equation}\label{bdmineigen}
    \mathbb{P}[\Lambda_{\min} (Q^N_{SS}) \leq C_{\min} - \delta] \leq 2 \exp \left( -A \frac{\delta^2 nK}{d^2} + B \log(d) \right),
\end{equation}
and
\begin{equation}\label{bdmaxeigen}
    \mathbb{P} \left[\Lambda_{\max} \left[\frac{1}{K} \sum^K_{k=1}\frac{1}{n} \sum^n_{i=1} x^{(k)}_{i, \setminus r} {(x^{(k)}_{i, \setminus r})}^T \right] \geq D_{\max} - \delta \right] \leq 2 \exp \left( -A \frac{\delta^2 nK}{d^2} + B \log(d) \right).
\end{equation}
\end{lemma}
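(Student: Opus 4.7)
The plan is to control the sample matrices by their population counterparts using a matrix perturbation argument, then to derive concentration entrywise and take a union bound, being careful about the fact that samples across the $K$ auxiliary tasks are not i.i.d.\ but only conditionally i.i.d.\ given the random parameter shifts $\{\Delta^{(k)}\}_{k=1}^K$.

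First I would use Weyl's inequality: since $Q^N_{SS}$ and $\bar{Q}_{SS}$ are symmetric $|S_r| \times |S_r|$ matrices with $|S_r| \leq d$, we have
\begin{equation*}
\Lambda_{\min}(Q^N_{SS}) \;\geq\; \Lambda_{\min}(\bar{Q}_{SS}) - \vertiii{Q^N_{SS} - \bar{Q}_{SS}}_2 \;\geq\; C_{\min} - \vertiii{Q^N_{SS} - \bar{Q}_{SS}}_2,
\end{equation*}
using Assumption \ref{auxa1} in the last step. Analogously, $\Lambda_{\max}(\tfrac{1}{nK}\sum_{k,i} x^{(k)}_{i,\setminus r}(x^{(k)}_{i,\setminus r})^T) \leq D_{\max} + \vertiii{\cdot}_2$ of the sample-minus-population covariance. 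Hence it suffices to show that each spectral norm deviation exceeds $\delta$ with probability at most $2\exp(-A\delta^2 nK/d^2 + B\log d)$. Since both matrices are at most $d \times d$, I would then use the standard bound $\vertiii{M}_2 \leq d \max_{s,t}|M_{st}|$, so it is enough to bound a single entrywise deviation by $\delta/d$ with the corresponding probability.

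The main (and only real) obstacle is that samples across tasks are \emph{not} i.i.d.: the parameters $\bar{\theta}^{(k)} = \bar{\theta} + \Delta^{(k)}$ depend on $\Delta^{(k)}$, and the population matrix $\bar{Q}$ is defined by averaging the per-task expectations over the joint law of $(X,\Delta)$. Fortunately each entry of $Q^N_{SS}$ has the form
\begin{equation*}
[Q^N_{SS}]_{st} \;=\; \frac{1}{K}\sum_{k=1}^{K}\frac{1}{n}\sum_{i=1}^{n} \eta(x_i^{(k)};\bar{\theta})\, x^{(k)}_{i,s}\, x^{(k)}_{i,t},
\end{equation*}
and each summand is bounded in absolute value by $1$ because $\eta \in (0,1]$ and $x^{(k)}_{i,s} \in \{-1,+1\}$. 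Since $\{\Delta^{(k)}\}_{k=1}^K$ are i.i.d.\ and, conditionally on $\Delta^{(k)}$, the samples within a task are i.i.d., the $nK$ summands form a latent-conditional-independence array of bounded random variables with common mean $[\bar{Q}_{SS}]_{st}$. The LCI Hoeffding inequality cited in the paper (\cite{ke2019exact}, already used in the proof sketch of Theorem \ref{thm: thm1}) then yields, for every fixed $(s,t)$,
\begin{equation*}
\mathbb{P}\bigl[\,|[Q^N_{SS}]_{st} - [\bar{Q}_{SS}]_{st}| \geq \delta/d\,\bigr] \;\leq\; 2\exp\!\left(-\,c\,\tfrac{\delta^2}{d^2}\,nK\right)
\end{equation*}
for some absolute constant $c>0$. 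The identical argument, run with $\eta \equiv 1$, handles the sample covariance matrix entries in the second claim.

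Finally I would take a union bound over the at most $d^2$ entries of the relevant submatrices. This produces a multiplicative factor of $d^2$, which inside the exponent becomes $2\log d$, giving the form $2\exp(-A\delta^2 nK/d^2 + B\log d)$ after absorbing constants into $A$ and $B$. Combining this with Weyl's inequality as above delivers (\ref{bdmineigen}) and (\ref{bdmaxeigen}). The heart of the proof, and the only part that departs from the single-task argument in \cite{ravikumar2010high}, is the LCI concentration step; once that is in place, the rest is classical matrix perturbation and a union bound.
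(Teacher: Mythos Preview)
Your proposal is correct and follows essentially the same route as the paper: an eigenvalue perturbation bound (the paper phrases it via the Courant--Fischer variational representation, you via Weyl), entrywise concentration, and a union bound over the at most $d^2$ entries; the paper controls $\vertiii{Q^N_{SS}-\bar Q_{SS}}_2$ via the Frobenius norm rather than your $d\cdot\max_{s,t}|\cdot|$ bound, but the resulting rate is identical. The one substantive difference is that the paper calls the $nK$ summands an ``i.i.d.\ sum'' and applies Azuma--Hoeffding directly, whereas you correctly flag that samples within a task share the latent $\Delta^{(k)}$ and are only conditionally independent, invoking LCI Hoeffding instead---your treatment is arguably more careful on this point, and both arguments give the same $\exp(-c\,\delta^2 nK/d^2)$ rate.
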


The proof of this lemma is in Section \ref{pfmineigen}.

\subsubsection{Uniform Convergence for Incoherence Assumption}
The following lemma is the analog for the incoherence assumption in Assumption \ref{auxa2} showing that the scaling of $(n,p,d,K)$ given in Theorem \ref{thm: thm1} guarantees that population incoherence implies sample incoherence.

\begin{lemma}\label{incoh}
If the \emph{pooled} population covariance satisfies $\vertiii{\bar{Q}_{S^c S} (\bar{Q}_{S S})^{-1}}_{\infty} \leq 1-\alpha$ with parameter $\alpha \in (0,1]$, then the sample matrix satisfies an analogous version, with high probability in the sense that 
\begin{equation}\label{QTbound}
    \mathbb{P}\left[ \vertiii{Q^N_{S^c S}{(Q^N_{SS})}^{-1}}_{\infty} \geq 1- \frac{\alpha}{2} \right] \leq  \exp\left(-B\frac{nK}{d^3}+ \log (p)\right),
\end{equation}
for some fixed constant $B$.
\end{lemma}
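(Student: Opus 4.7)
The plan is to adapt the strategy of Ravikumar et al.~\cite{ravikumar2010high} to the two-source randomness (random parameters $\{\Delta^{(k)}\}_{k=1}^K$ together with conditionally i.i.d.\ samples) underlying our pooled estimator. The whole argument reduces to controlling $Q^N-\bar Q$ elementwise and then translating that into $\ell_\infty$-operator-norm control on two specific blocks.

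First, I would establish entrywise concentration for $Q^N-\bar Q$. Each entry of $Q^N$ (see (\ref{defQN})) is a double average over $k$ and $i$ of the bounded quantity $\eta(x_i^{(k)};\bar\theta)\,x^{(k)}_{i,\setminus r,j}\,x^{(k)}_{i,\setminus r,\ell}$; conditionally on $\{\Delta^{(k)}\}$ the $\{x_i^{(k)}\}$ are independent across $k$ and $i$, and marginalizing over $\Delta^{(k)}$ recovers $\bar Q$ by the very definition (\ref{calcQ}). The latent-conditional-independence Hoeffding inequality \cite{ke2019exact} invoked in the sketch of Theorem~\ref{thm: thm1} therefore applies and yields, for an absolute constant $c>0$,
\begin{equation*}
\mathbb{P}\!\left[\,\bigl|Q^N_{j\ell}-\bar Q_{j\ell}\bigr|>\epsilon\,\right]\;\le\;2\exp(-c\,\epsilon^2 nK),\qquad\text{for every }(j,\ell).
\end{equation*}

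Second, I would upgrade this to $\ell_\infty$-operator-norm bounds on the two relevant blocks $Q^N_{SS}-\bar Q_{SS}$ and $Q^N_{S^c S}-\bar Q_{S^c S}$. Each row of either block has exactly $d$ entries (columns indexed by $S$), so the operator $\ell_\infty$ norm is at most $d$ times the maximum entry deviation. Choosing $\epsilon$ of order $1/d^{3/2}$ and taking a union bound over the $O(pd)$ entries that appear yields
\begin{equation*}
\vertiii{Q^N_{SS}-\bar Q_{SS}}_\infty,\ \vertiii{Q^N_{S^c S}-\bar Q_{S^c S}}_\infty\;\lesssim\;\frac{1}{\sqrt d}
\end{equation*}
with probability at least $1-\exp(-B\,nK/d^3+\log p)$, absorbing the $\log d$ and $\log(pd)$ factors into $\log p$.

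Third, I would finish with the standard three-term decomposition
\begin{equation*}
Q^N_{S^c S}(Q^N_{SS})^{-1}-\bar Q_{S^c S}(\bar Q_{SS})^{-1}\;=\;T_1+T_2+T_3,
\end{equation*}
with $T_1=\bar Q_{S^c S}\bigl[(Q^N_{SS})^{-1}-(\bar Q_{SS})^{-1}\bigr]$, $T_2=[Q^N_{S^c S}-\bar Q_{S^c S}](\bar Q_{SS})^{-1}$, and $T_3=[Q^N_{S^c S}-\bar Q_{S^c S}]\bigl[(Q^N_{SS})^{-1}-(\bar Q_{SS})^{-1}\bigr]$. Assumption~\ref{auxa1} gives $\vertiii{(\bar Q_{SS})^{-1}}_\infty\le\sqrt d/C_{\min}$; Lemma~\ref{mineigen} controls $\vertiii{(Q^N_{SS})^{-1}}_\infty$ on a high-probability event; the identity $(Q^N_{SS})^{-1}-(\bar Q_{SS})^{-1}=(\bar Q_{SS})^{-1}(\bar Q_{SS}-Q^N_{SS})(Q^N_{SS})^{-1}$ pushes the sample deviation inside; and Assumption~\ref{auxa2} bounds $\vertiii{\bar Q_{S^c S}(\bar Q_{SS})^{-1}}_\infty\le 1-\alpha$. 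Plugging in the $1/\sqrt d$ deviation bounds from the previous step, a routine calculation makes each $\vertiii{T_i}_\infty\le\alpha/6$, so the triangle inequality delivers $\vertiii{Q^N_{S^c S}(Q^N_{SS})^{-1}}_\infty\le 1-\alpha/2$ with the claimed probability.

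The main obstacle, beyond bookkeeping, is precisely the departure from the classical i.i.d.\ setting: because $\bar\theta^{(k)}=\bar\theta+\Delta^{(k)}$ is random while $Q^N$ is formed at the \emph{common} parameter $\bar\theta$, one must verify that $\mathbb{E}[Q^N]=\bar Q$ holds exactly (it does, because the expectation in (\ref{calcQ}) is taken over the joint distribution of samples and $\Delta^{(k)}$) and replace ordinary Hoeffding with its latent-conditional-independence version. Once this decoupling-through-conditioning step is in place, the remaining geometry mirrors Ravikumar et al.~\cite{ravikumar2010high} step for step, with the effective sample size $n$ replaced by $nK$, which is exactly what produces the $nK/d^3$ rate in the exponent.
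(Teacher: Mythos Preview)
Your proposal is correct and follows essentially the same route as the paper: the paper also establishes entrywise Hoeffding-type concentration for $Q^N-\bar Q$ (Lemma~\ref{lemma of lemmas}, built on the same bound (\ref{QZ})), upgrades this to $\ell_\infty$-operator-norm control on the $SS$ and $S^cS$ blocks, and then uses the identical four-term decomposition $T_1+T_2+T_3+T_4$ (your three-term decomposition of the \emph{difference} plus the population term $T_4=\bar Q_{S^cS}(\bar Q_{SS})^{-1}$), refactoring $T_1$ via the same inverse identity you cite and invoking Lemma~\ref{mineigen} for $\vertiii{(Q^N_{SS})^{-1}}_2$. The only cosmetic difference is that the paper phrases the elementwise bound as a direct Azuma--Hoeffding application rather than explicitly as LCI Hoeffding, but the substance and the resulting $nK/d^3$ rate are the same.
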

The proof of this lemma is in Section \ref{pfincoh}.

\subsection{Analysis under Assumptions of Sample Information Matrices in Auxiliary Tasks}\label{from_sample}

With the incoherence and dependence conditions guaranteed with high probability (proved in Section \ref{uniconv}), we then begin to establish model selection consistency when assumptions are imposed directly on the sample Fisher information matrix $Q^N$ as opposed to $\bar{Q}$. Recalling the definition (\ref{defQN}) of the sample Fisher information matrix $Q^N$, we define the "good event" 
\begin{equation}
    \mathcal{M}(\{\mathfrak{X}^n_1\}^K_1):= \{\{\mathfrak{X}^n_1\}^K_1 \in \{-1,+1\}^{K \times n \times p}|Q^N \text{ satisfies Assumptions } \ref{auxa1} \text{ and } \ref{auxa2} \}.
\end{equation} 
As in the statement of Theorem \ref{thm: thm1}, the quantities $L$ and $c_1$ refer to constants independent of $(n,p,d, K).$ With this notation, we have the following:

\begin{proposition}\label{prop1} (Fixed design for auxiliary tasks).
If the event $\mathcal{M}(\{\mathfrak{X}^n_1\}^K_1)$ holds, the sample size per task and number of tasks satisfy $nK > Ld^2 \log p$, and the regularization parameter is chosen such that $\lambda \geq \beta \sqrt{\frac{\log p}{nK}}$ for some fixed constant $\beta > 0$, then for recovering the true common parameter vector $\bar{\theta}$ of the latent common graph, with probability at least $1-6 \exp(-c_\lambda^2 nK) \rightarrow 1$ for some constant $c > 0$, the following properties hold,

(a) For each node $r \in V$, the $\ell_1$-regularized logistic regression for the improper estimation of $\bar{\theta}$ has a unique solution, and so uniquely specifies a signed neighborhood $\hat{N}_{\pm}(r).$

(b) For each $r \in V$, the estimated signed neighborhood vector $\hat{N}_{\pm}(r)$ correctly excludes all edges not in the true support union. Moreover, it correctly includes all edges with $| \bar{\theta}_{rt}| \geq \frac{10}{C_{\min}}\sqrt{d} \lambda.$
\end{proposition}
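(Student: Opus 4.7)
The plan is to invoke the primal--dual witness (PDW) construction spelled out in Steps 1--4 of Section B.1 and then verify, with high probability, both hypotheses of Lemma~\ref{uniquelemma}: strict dual feasibility $\|\hat z_{S^c}\|_\infty<1$ and strict positive definiteness of $[\nabla^2\ell(\hat\theta;\{\mathfrak{X}^n_1\}^K_1)]_{SS}$. Because we are now conditioning on the good event $\mathcal{M}(\{\mathfrak{X}^n_1\}^K_1)$, the sample Fisher matrix $Q^N$ already obeys the bounded-eigenvalue bound $\Lambda_{\min}(Q^N_{SS})\ge C_{\min}$ and the sample incoherence $\vertiii{Q^N_{S^cS}(Q^N_{SS})^{-1}}_\infty\le 1-\alpha/2$, so both deterministic structural pieces are free. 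What remains is entirely probabilistic in the data and the latent $\{\Delta^{(k)}\}$, and it splits into bounds on the gradient of the empirical loss at $\bar\theta$ and on the Taylor remainder of its expansion at $\bar\theta$.

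First I would Taylor-expand the zero-subgradient condition $\nabla\ell(\hat\theta)+\lambda\hat z=0$ about $\bar\theta$ and use $\hat\theta_{S^c}=0$ to obtain the coupled equations
\begin{align*}
\nabla_S\ell(\bar\theta)+Q^N_{SS}(\hat\theta_S-\bar\theta_S)+R^N_S+\lambda\hat z_S &=0,\\
\nabla_{S^c}\ell(\bar\theta)+Q^N_{S^cS}(\hat\theta_S-\bar\theta_S)+R^N_{S^c}+\lambda\hat z_{S^c} &=0,
\end{align*}
where $R^N$ is the Taylor remainder. Solving the first for $\hat\theta_S-\bar\theta_S$ and substituting into the second gives an explicit expression for $\hat z_{S^c}$ in terms of $\nabla\ell(\bar\theta)$, $R^N$ and $\hat z_S$. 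Combining it with the incoherence bound $\vertiii{Q^N_{S^cS}(Q^N_{SS})^{-1}}_\infty\le 1-\alpha/2$ via the triangle inequality reduces the strict feasibility requirement to showing that $\|\nabla\ell(\bar\theta)\|_\infty$ and $\|R^N\|_\infty$ are both $O(\lambda)$ with small enough constants depending on $\alpha$.

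The crucial and genuinely new step, which separates the meta-learning analysis from the single-task analysis of Ravikumar et al., is the bound on $\|\nabla\ell(\bar\theta)\|_\infty$: the empirical gradient is not mean-zero here since each task draws from $\bar\theta^{(k)}=\bar\theta+\Delta^{(k)}$ rather than from $\bar\theta$. I would use the decomposition flagged in the sketch of Theorem~\ref{thm: thm1},
\[
\nabla\ell(\bar\theta)=\underbrace{\tfrac1K\sum_{k=1}^K\bigl[\nabla\ell^{(k)}(\bar\theta)-\nabla\ell^{(k)}(\bar\theta^{(k)})\bigr]}_{Y_1}+\underbrace{\tfrac1K\sum_{k=1}^K\nabla\ell^{(k)}(\bar\theta^{(k)})}_{Y_2}.
\]
Each summand in $Y_2$ is a bounded, conditionally centered quantity given $\Delta^{(k)}$, so the latent-conditional-independence Hoeffding inequality yields $\mathbb P[\|Y_2\|_\infty>\delta]\le 2\exp(-\delta^2nK/8+\log p)$. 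For $Y_1$ I write $\|Y_1\|_\infty\le\|Y_1-\mathbb EY_1\|_\infty+\|\mathbb EY_1\|_\infty$: the fluctuation piece is controlled by another LCI-Hoeffding bound (each entry is bounded), and the bias piece is controlled \emph{directly} by Assumption~\ref{auxa3}, which is tailored to this exact quantity. Setting $\delta$ to be a constant multiple of $\lambda$ and using $\lambda\ge\beta\sqrt{\log p/(nK)}$ converts all three concentration bounds into a common failure rate $\mathcal O(\exp(-c\lambda^2 nK))$; a union bound over $p$ coordinates and the three events gives the claimed aggregate probability.

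The remainder term $\|R^N\|_\infty$ is handled along the lines of Rothman et al./Ravikumar et al.: Lipschitz continuity of the variance function $\eta$ together with the sample covariance bound $\Lambda_{\max}\bigl(\tfrac1K\sum_k\tfrac1n\sum_i x^{(k)}_{i,\setminus r}(x^{(k)}_{i,\setminus r})^\top\bigr)\le D_{\max}$ (part of $\mathcal{M}$) yields $\|R^N\|_\infty=O(d\,\|\hat\theta_S-\bar\theta_S\|_2^2)$. The first KKT equation and $\Lambda_{\min}(Q^N_{SS})\ge C_{\min}$ then give $\|\hat\theta_S-\bar\theta_S\|_2\le(5/C_{\min})\sqrt d\,\lambda$ once the gradient bound is in place, and the growth condition $nK>Ld^2\log p$ is exactly what is needed to absorb the resulting $d\lambda^2$ factor into a small fraction of $\lambda$. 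This closes the strict feasibility argument and also delivers part~(b): the $\ell_\infty$ bound on $\hat\theta_S-\bar\theta_S$ implies sign consistency on $S$ for any edge with $|\bar\theta_{rt}|\ge(10/C_{\min})\sqrt d\,\lambda$, while $\hat\theta_{S^c}=0$ (by PDW construction, now unique by Lemma~\ref{uniquelemma}) gives correct exclusion. The main obstacle is the gradient bias handling, and Assumption~\ref{auxa3} is precisely what carries it.
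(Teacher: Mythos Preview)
Your proposal follows the paper's proof essentially step by step: the PDW construction, the Taylor expansion of the stationarity condition, the $Y_1+Y_2$ decomposition of $\nabla\ell(\bar\theta)$ handled by LCI--Hoeffding and Assumption~\ref{auxa3}, and control of $R^N$ via $\ell_2$-consistency of $\hat\theta_S$. The one place your sketch needs tightening is the derivation of $\|\hat\theta_S-\bar\theta_S\|_2\le(5/C_{\min})\sqrt d\,\lambda$: this does \emph{not} follow from ``the first KKT equation and $\Lambda_{\min}(Q^N_{SS})\ge C_{\min}$'', because that equation also contains $R^N_S$, which itself scales like $\|\hat\theta_S-\bar\theta_S\|_2^2$, so the implication is circular (the resulting quadratic inequality in $\|\hat\theta_S-\bar\theta_S\|_2$ is vacuously satisfied for large values and therefore gives no upper bound). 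The paper's Lemma~\ref{l2cons} instead runs the Rothman-et-al.\ convexity argument you allude to directly on the restricted objective: define $G(u_S)=\ell(\bar\theta_S+u_S)-\ell(\bar\theta_S)+\lambda(\|\bar\theta_S+u_S\|_1-\|\bar\theta_S\|_1)$, show $G>0$ on the sphere of radius $(5/C_{\min})\sqrt d\,\lambda$, and conclude by convexity that the minimizer $\hat u_S=\hat\theta_S-\bar\theta_S$ lies inside. Two minor bookkeeping corrections: on the event $\mathcal{M}$ the sample incoherence bound is $1-\alpha$ (not $1-\alpha/2$), and the remainder bound is $\|R^N\|_\infty\le D_{\max}\|\hat\theta_S-\bar\theta_S\|_2^2$ with no extra factor of $d$.
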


Intuitively, this result guarantees that if the sample Fisher information matrix is "good", then the probability of success for the recovery of the underlying latent graph parametrized by the true common parameter $\bar{\theta}$ converges to 1 at the specified rate. The following subsection is devoted to the proof of Proposition \ref{prop1}.

\subsubsection{Key Technical Results in the Proof of Proposition \ref{prop1}}\label{dualfeas}
We follow the steps of primal-dual witness as stated at the beginning of Section \ref{thm1pf}. Since the key is to guarantee the strict dual feasibility $\|\hat{z}_{S^c}\|_{\infty} < 1$ with high probability in \textbf{Step 4}, we make a series of deliberate constructions to find out the explicit expression of $\|\hat{z}_{S^c}\|_{\infty}$ and try to bound it.

Starting from the stationarity condition in (\ref{optimal_cond}): $\nabla \ell(\hat{\theta}; \{\mathfrak{X}^n_1\}^K_1) + \lambda \hat{z} = 0,$ adding to both sides 
\begin{equation}\label{defWN}
    W^N := -\nabla \ell(\bar{\theta}; \{\mathfrak{X}^n_1\}^K_1) , 
\end{equation}
we get
\begin{equation}\label{addWN}
    \nabla \ell(\hat{\theta}; \{\mathfrak{X}^n_1\}^K_1) - \nabla \ell(\bar{\theta}; \{\mathfrak{X}^n_1\}^K_1)=  W^N - \lambda \hat{z}.
\end{equation}
Note that $W^N$ is just a shorthand notation for the $(p-1)$-dimensional score function. Then, applying the mean-value theorem coordinate-wise to the expansion (\ref{addWN}) gives
\begin{equation}
    \nabla^2 \ell(\bar{\theta}; \{\mathfrak{X}^n_1\}^K_1) [\hat{\theta} - \bar{\theta}] = W^N - \lambda \hat{z} + R^N,
\end{equation}
where the remainder term takes the form 
\begin{equation}
    R_j^N=-[\nabla^2 \ell(\theta^{(j)}; \{\mathfrak{X}^n_1\}^K_1) - \nabla^2 \ell(\bar{\theta}; \{\mathfrak{X}^n_1\}^K_1)]^T_j(\hat{\theta} - \bar{\theta}),
\end{equation}
with $\theta^{(j)}$ being a parameter vector on the line between $\bar{\theta}$ and $\hat{\theta}$, and with $[\cdot ]^T_j$ denoting the $j$-th row of the matrix.

Recalling our shorthand notation $Q^N = -\nabla^2 \ell(\bar{\theta}; \{\mathfrak{X}^n_1\}^K_1) $ and the fact that we have set $\hat{\theta}_{S^c} = 0$ in our primal-dual construction:
\begin{equation}
    \begin{cases} -Q^N_{S^c S}[\hat{\theta}_S - \bar{\theta}_S] = W^N_{S^c} -\lambda \hat{z}_{S^c} +R^N_{S^c} \\
-Q^N_{S S}[\hat{\theta}_S - \bar{\theta}_S] = W^N_{S} -\lambda \hat{z}_{S} +R^N_{S} \end{cases}.
\end{equation}
Since the matrix $Q^N_{SS}$ is invertible by assumption, it can be re-written as 
\begin{equation}
    Q^N_{S^cS}(Q^N_{SS})^{-1}[W^N_{S} -\lambda \hat{z}_{S} +R^N_{S}] = W^N_{S^c} -\lambda \hat{z}_{S^c} +R^N_{S^c},
\end{equation}
by using the common parts $\hat{\theta}_S - \bar{\theta}_S$ in the equations. Rearranging yields:
\begin{equation}
    \hat{z}_{S^c} =\frac{1}{\lambda} [W^N_{S^c} + R^N_{S^c}]- \frac{1}{\lambda} Q^N_{S^cS}(Q^N_{SS})^{-1}[W^N_{S} + R^N_{S}] + Q^N_{S^cS}(Q^N_{SS})^{-1}\hat{z}_S
\end{equation}
By the assumptions $\vertiii{Q^N_{S^c S} (Q^N_{SS})^{-1}}_{\infty} \leq 1 - \alpha$, and the fact that $\|{\hat{z}_{S}}\|_{\infty} = 1$, we have
\begin{equation}
    \|{\hat{z}_{S^c}}\|_{\infty}  \leq (1-\alpha) + (2-\alpha)\left[\frac{\|R^N\|_{\infty}}{\lambda} + \frac{\|W^N\|_{\infty}}{\lambda}\right].
\end{equation}

\paragraph{Strict Dual Feasibility.}
Now, to satisfy the \emph{strict dual feasibility} $\|\hat{z}_{S^c}\|_{\infty} < 1$, we need to bound $\frac{\|W^N\|_{\infty}}{\lambda}$ and $\frac{\|R^N\|_{\infty}}{\lambda}$. The following two lemmas show that $\frac{\|W^N\|_{\infty}}{\lambda}$ decays to $0$ at an exponential rate and $\|R^N\|_{\infty}$ can be bounded deterministically accordingly under some conditions.

\begin{lemma}\label{WN} (Decaying behavior of $W^N$).
For the specified mutual incoherence parameter  $\alpha \in (0,1]$ and a fixed constant $c$, we have 
\begin{equation}\label{WNeq}
    \mathbb{P}\left(\frac{2-\alpha}{\lambda} \|W^N\|_{\infty} > \frac{\alpha }{4} \right) \leq 6 \exp \left(- \frac{\alpha^2 \lambda^2}{c(2-\alpha)^2} nK + log(p) \right),
\end{equation}
which converges to 0 at rate $\exp(-c'\lambda^2 nK )$ for some fixed constant $c'$, as long as  $\lambda \geq \frac{\sqrt{2c}(2-\alpha)}{\alpha} \sqrt{\frac{\log p}{nK}}.$
\end{lemma}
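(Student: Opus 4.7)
My plan is to prove Lemma \ref{WN} along the lines sketched in the proof of Theorem \ref{thm: thm1}, by controlling $\|W^N\|_{\infty}$ directly and then choosing $\lambda$ large enough to beat the resulting concentration bound. First I would unfold the definition $W^N = -\nabla\ell(\bar\theta;\{\mathfrak{X}_1^n\}_1^K) = -\frac{1}{K}\sum_{k=1}^K \nabla\ell^{(k)}(\bar\theta;\{\mathfrak{X}_1^n\}^{(k)})$ and split this sum using the true per-task parameter $\bar\theta^{(k)}$ as a pivot:
\begin{equation*}
W^N = -\underbrace{\frac{1}{K}\sum_{k=1}^K\bigl[\nabla\ell^{(k)}(\bar\theta;\{\mathfrak{X}_1^n\}^{(k)}) - \nabla\ell^{(k)}(\bar\theta^{(k)};\{\mathfrak{X}_1^n\}^{(k)})\bigr]}_{Y_1} \;-\; \underbrace{\frac{1}{K}\sum_{k=1}^K \nabla\ell^{(k)}(\bar\theta^{(k)};\{\mathfrak{X}_1^n\}^{(k)})}_{Y_2}.
\end{equation*}
This pivot is natural because $\bar\theta^{(k)}$ is the true parameter for the $k$-th data set, so the score at $\bar\theta^{(k)}$ is a centered random object, while the discrepancy between $\bar\theta$ and $\bar\theta^{(k)}$ is governed exactly by the random deviations $\Delta^{(k)}$ that Assumption \ref{auxa3} is designed to control.

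Next I would bound $\|Y_2\|_\infty$ entry-by-entry. For each coordinate $(r,t)$, the logistic score has the form $\frac{1}{nK}\sum_{k,i} x^{(k)}_{i,t}\bigl(x^{(k)}_{i,r} - \tanh(\sum_u \bar\theta^{(k)}_{ru}x^{(k)}_{i,u})\bigr)$, whose summands are bounded in $[-2,2]$ and which, conditional on $\Delta^{(1)},\dots,\Delta^{(K)}$, are independent with zero mean. Applying the latent-conditional-independence Hoeffding bound of Ke et al.\ \cite{ke2019exact} on $nK$ such terms and then a union bound over the $p-1$ coordinates yields
\begin{equation*}
\mathbb{P}\bigl[\|Y_2\|_\infty > \delta\bigr] \leq 2\exp\bigl(-\tfrac{\delta^2 nK}{8} + \log p\bigr).
\end{equation*}
For $\|Y_1\|_\infty$ I would use the triangle decomposition $\|Y_1\|_\infty \leq \|Y_1-\mathbb{E}(Y_1)\|_\infty + \|\mathbb{E}(Y_1)\|_\infty$. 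The expectation piece is handled directly by Assumption \ref{auxa3}: substituting $\varepsilon = 2\exp(-\delta^2 nK/8 + \log p)$ into (\ref{decayassump}) rewrites its conclusion as $\mathbb{P}(\|\mathbb{E}(Y_1)\|_\infty \geq \delta) \leq 2\exp(-\delta^2 nK/8 + \log p)$. For the centered piece $\|Y_1-\mathbb{E}(Y_1)\|_\infty$, I would again condition on $\{\Delta^{(k)}\}$: the summands inside each $k$ are bounded i.i.d.\ centered random variables (the $\Delta^{(k)}$ only shift the conditional means, which are then subtracted), so the LCI Hoeffding bound delivers the same $2\exp(-\delta^2 nK/8 + \log p)$ rate after another union bound over $p-1$ coordinates.

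Putting the three pieces together via a union bound gives $\mathbb{P}(\|W^N\|_\infty > 3\delta) \leq 6\exp(-\delta^2 nK/8 + \log p)$. Finally, to translate this into the form stated in the lemma, I would choose $\delta = \frac{\alpha\lambda}{4(2-\alpha)} \cdot \frac{1}{3}$ (or any constant multiple that absorbs the factor of 3), so that the event $\{\tfrac{2-\alpha}{\lambda}\|W^N\|_\infty > \tfrac{\alpha}{4}\}$ is exactly $\{\|W^N\|_\infty > 3\delta\}$ up to renaming the absolute constant $c$; the condition $\lambda \geq \tfrac{\sqrt{2c}(2-\alpha)}{\alpha}\sqrt{\log p/(nK)}$ then forces the exponent in (\ref{WNeq}) to be negative. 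I expect the only genuinely delicate step to be the application of LCI Hoeffding to the centered part of $Y_1$: one has to verify carefully that, after subtracting conditional means, the summands are still bounded with the right constant and that independence conditional on $\{\Delta^{(k)}\}$ holds coordinatewise; everything else is a routine combination of scoring-equation properties, Assumption \ref{auxa3}, and union bounds.
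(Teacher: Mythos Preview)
Your proposal is correct and matches the paper's proof essentially line by line: the same $Y_1/Y_2$ split, the same LCI Hoeffding bound for $\|Y_2\|_\infty$ and for $\|Y_1-\mathbb{E}(Y_1)\|_\infty$, the same invocation of Assumption~\ref{auxa3} for $\|\mathbb{E}(Y_1)\|_\infty$, and the same final choice $3\delta=\frac{\alpha\lambda}{4(2-\alpha)}$. The only cosmetic discrepancy is that the paper bounds the recentered summands $Z'^{(k)}_{i,u}-\mathbb{E}[Z'^{(k)}_{i,u}]$ by $6$ (not $2$), so the Hoeffding exponent for that piece is $-\delta^2 nK/72$ rather than $-\delta^2 nK/8$; this is absorbed into the unspecified constant $c$ and does not affect the argument.
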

The proof of this lemma is in Section \ref{pfWN}
\begin{lemma}\label{RN} (Control on the remainder term $R^N$).
If $\lambda d \leq \frac{C_{\min}^2}{100 D_{\max}} \frac{\alpha}{2-\alpha}$ and $\|W^N\|_{\infty} \leq \frac{\lambda}{4}$, then 
\begin{equation}\label{RNeq}
    \frac{\|R^N\|_{\infty}}{\lambda} \leq \frac{25D_{\max}}{C_{\min}^2} \lambda d \leq \frac{\alpha}{4 (2-\alpha)}.
\end{equation}
\end{lemma}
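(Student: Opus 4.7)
The plan is to decompose $R^N$ using the mean-value representation derived in Section \ref{dualfeas}, and then bound it by combining two pieces: a deterministic $\ell_2$-bound on the deviation $\hat\theta - \bar\theta$, and a pointwise smoothness estimate on the variance function $\eta(x;\theta)$ appearing in the Hessian.

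First, I would establish the norm bound $\|\hat\theta_S - \bar\theta_S\|_2 \leq \frac{5\sqrt{d}\,\lambda}{C_{\min}}$. Since $\hat\theta_S$ is the restricted minimizer defined in \eqref{defhatthetas}, consider the function $G(u) := \ell(\bar\theta_S + u;\{\mathfrak{X}^n_1\}^K_1) - \ell(\bar\theta_S;\{\mathfrak{X}^n_1\}^K_1) + \lambda(\|\bar\theta_S+u\|_1 - \|\bar\theta_S\|_1)$ on $u\in\mathbb{R}^{|S_r|}$. It is convex with $G(0)=0$ and minimized at $\hat u = \hat\theta_S-\bar\theta_S$. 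A second-order Taylor expansion combined with $Q^N_{SS}\succeq C_{\min}I$ (which holds on the event $\mathcal{M}$ by Lemma \ref{mineigen}), the hypothesis $\|W^N\|_{\infty}\leq \lambda/4$, and the inequality $\|u_S\|_1 \leq \sqrt{d}\|u_S\|_2$ makes $G$ strictly positive on the sphere of radius $B = \frac{5\sqrt{d}\,\lambda}{C_{\min}}$. By convexity the minimizer must then lie strictly inside the ball, giving $\|\hat u\|_2 < B$.

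Next I would bound $\|R^N\|_{\infty}$ entrywise. Differentiating the explicit form of $\eta$ in \eqref{calcQ}, one finds that $\eta(x;\theta) = g(x_r\sum_t \theta_{rt}x_t)$ for a fixed smooth scalar function $g$ with uniformly bounded derivative; since $|x_t|=1$, this yields a pointwise Lipschitz estimate $|\eta(x;\theta^{(j)}) - \eta(x;\bar\theta)| \leq c_1\,|\langle \theta^{(j)}-\bar\theta,\, x_{\setminus r}\rangle|$. Substituting into the definition of $R_j^N$ and applying Cauchy--Schwarz together with the sample covariance bound $\Lambda_{\max}(\frac{1}{K}\sum_k \frac{1}{n}\sum_i x^{(k)}_{i,\setminus r}(x^{(k)}_{i,\setminus r})^T)\leq D_{\max}$ (guaranteed on $\mathcal{M}$ by Lemma \ref{mineigen}), together with $\|\theta^{(j)}-\bar\theta\|_2\leq \|\hat\theta-\bar\theta\|_2$ since $\theta^{(j)}$ lies on the segment between $\bar\theta$ and $\hat\theta$, produces the uniform bound $|R^N_j| \leq c_1\,D_{\max}\,\|\hat\theta-\bar\theta\|_2^2$. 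Choosing the constants so that $c_1 \leq 1$ in the final accounting (the factor can be absorbed by adjusting the universal constant $5$ in Step 1), and inserting the Step 1 bound gives $\|R^N\|_{\infty}\leq \frac{25\,D_{\max}}{C_{\min}^2}\,\lambda^2 d$.

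Finally, dividing by $\lambda$ yields $\|R^N\|_{\infty}/\lambda \leq \frac{25\,D_{\max}}{C_{\min}^2}\,\lambda d$, and plugging in the hypothesis $\lambda d\leq \frac{C_{\min}^2}{100\,D_{\max}}\frac{\alpha}{2-\alpha}$ closes the chain at $\frac{\alpha}{4(2-\alpha)}$. The main obstacle I anticipate is the bookkeeping in Step 1: one must verify that the radius $B = \frac{5\sqrt d\,\lambda}{C_{\min}}$ is large enough for $G$ to be strictly positive on the sphere, which requires combining a first-order term bounded by $(\lambda + \|W^N\|_{\infty})\sqrt{d}\,B \leq \tfrac{5}{4}\lambda\sqrt{d}\,B$ against a quadratic lower bound $\tfrac{C_{\min}}{2}B^2$; a too-loose choice of $B$ breaks the $25$-to-$100$ matching of constants between Step 1's squared bound and the hypothesis on $\lambda d$.
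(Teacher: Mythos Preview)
Your overall strategy matches the paper's: first establish the $\ell_2$-consistency bound $\|\hat\theta_S - \bar\theta_S\|_2 \le \frac{5}{C_{\min}}\sqrt{d}\,\lambda$ via the convex function $G$, then the entrywise bound $|R^N_j| \le D_{\max}\|\hat\theta_S - \bar\theta_S\|_2^2$ via the Lipschitz property of $\eta$ and the sample-covariance bound, and finally divide by $\lambda$ and invoke the hypothesis on $\lambda d$. Step~2 and the final step are essentially as in the paper. However, there is a genuine gap in your Step~1. The second-order term in the Taylor expansion of $G$ is $u^T \nabla^2 \ell(\bar\theta_S + \alpha u;\{\mathfrak{X}^n_1\}^K_1)\, u$ for some $\alpha\in[0,1]$, and this Hessian is \emph{not} $Q^N_{SS}$: it is evaluated at the perturbed parameter $\bar\theta_S + \alpha u$, and the variance weight $\eta(x;\theta)$ in \eqref{calcQ} depends on $\theta$. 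Consequently your justification ``$Q^N_{SS}\succeq C_{\min}I$'' does not by itself deliver the quadratic lower bound $\frac{C_{\min}}{2}\|u\|_2^2$ that you invoke.

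What the paper actually does (see the proof of Lemma~\ref{l2cons}) is bound the Hessian perturbation away from $Q^N_{SS}$. Writing the difference as a matrix $A(\alpha)$ built from $g'$ and the inner products $\langle u_S, x^{(k)}_{i,S}\rangle$, one obtains $\Lambda_{\min}\bigl(\nabla^2 \ell(\bar\theta_S + \alpha u)\bigr) \ge C_{\min} - \max_{\alpha}\vertiii{A(\alpha)}_2$, and then $\vertiii{A(\alpha)}_2 \le D_{\max}\, M\lambda d$ using $|g'|\le 1$, $|\langle u_S,x^{(k)}_{i,S}\rangle|\le \sqrt{d}\,\|u_S\|_2 = M\lambda d$, and the sample-covariance bound. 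The condition $\lambda d \le \frac{C_{\min}^2}{10\,D_{\max}}$ (implied by the lemma's hypothesis since $\alpha/(2-\alpha)\le 1$) is precisely what forces $\vertiii{A(\alpha)}_2 \le C_{\min}/2$ at $M = 5/C_{\min}$. So the $\lambda d$ hypothesis is doing double duty: it is needed not only in your final inequality but already inside Step~1 to stabilize the Hessian on the sphere of radius $B$. Without this piece the radius-$B$ argument does not close, regardless of how carefully the first-order constants are balanced.
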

The proof of this lemma is in Section \ref{pfRN}.

Next, applying Lemmas \ref{WN} and \ref{RN}, we have the \emph{strict dual feasibility} as

\begin{equation*}
    \|{\hat{z}_{S^c}}\|_{\infty} \leq (1-\alpha) + \frac{\alpha}{4} + \frac{\alpha}{4} =  1 - \frac{\alpha}{2},
\end{equation*}
with probability converging to one.

\paragraph{Correct Sign Recovery.}
For the statement of \emph{correct sign recovery} in Proposition \ref{prop1}, we show here that our primal sub-vector $\hat{\theta}_S$ defined by (\ref{defhatthetas}) satisfies sign consistency $\text{sign}(\hat{\theta}_S) = \text{sign}(\bar{\theta}_S)$, which suffices to show that 
\begin{equation*}
    \|\hat{\theta}_S - \bar{\theta}_S\|_{\infty} \leq \frac{\bar{\theta}_{\min}}{2},
\end{equation*}
where $\bar{\theta}_{\min} := \min_{(r,t) \in S}|\bar{\theta}_{rt}|$. The following lemma is used in the proof here, which establishes that the sub-vector $\hat{\theta}_S$ is an $\ell_2$-consistent estimate of the true common sub-vector $\bar{\theta}_S$.

\begin{lemma}\label{l2cons}
($\ell_2$-consistency of primal sub-vector). If $\lambda d \leq \frac{C^2_{\min}}{10 D_{\max}}$ and $\|W^N\|_{\infty} \leq \frac{\lambda}{4}$, then 
\begin{equation}
    \|\hat{\theta}_S - \bar{\theta}_S \|_2 \leq \frac{5}{C_{\min}}\sqrt{d}\lambda
\end{equation}

\end{lemma}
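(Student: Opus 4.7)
The approach is the standard convexity-based argument for $\ell_2$-consistency of $M$-estimators, adapted to our pooled auxiliary-task loss $\ell$ of (\ref{avgloss}) (cf.\ \cite{ravikumar2010high}). Define $u_S := \theta_S - \bar{\theta}_S$ and let
\[
G(u_S) := \ell(\bar{\theta}_S + u_S; \{\mathfrak{X}^n_1\}^K_1) - \ell(\bar{\theta}_S; \{\mathfrak{X}^n_1\}^K_1) + \lambda\bigl(\|\bar{\theta}_S + u_S\|_1 - \|\bar{\theta}_S\|_1\bigr),
\]
where the parameter vector is padded with zeros on $S^c$ to align with the primal-dual witness construction. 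Then $G$ is convex in $u_S$, $G(0) = 0$, and the optimality of $\hat{\theta}_S$ in (\ref{defhatthetas}) gives $G(\hat{u}_S) \le 0$. Setting $B := \tfrac{5\sqrt{d}\,\lambda}{C_{\min}}$, the first goal is to prove $G(u_S) > 0$ on the sphere $\{\|u_S\|_2 = B\}$: a standard convexity argument then rules out $\|\hat{u}_S\|_2 > B$, since otherwise $t\hat{u}_S$ with $t = B/\|\hat{u}_S\|_2 \in (0,1)$ lands on that sphere and $G(t\hat{u}_S) \le t\,G(\hat{u}_S) + (1-t)\,G(0) \le 0$, contradicting positivity.

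The next step is to Taylor-expand the pooled log-likelihood to second order: by the mean value theorem there is some $\alpha \in [0,1]$ such that
\[
\ell(\bar{\theta} + u; \{\mathfrak{X}^n_1\}^K_1) - \ell(\bar{\theta}; \{\mathfrak{X}^n_1\}^K_1) = -\langle W^N, u\rangle + \tfrac{1}{2}\, u^{\top}\bigl[\nabla^{2}\ell(\bar{\theta} + \alpha u;\{\mathfrak{X}^n_1\}^K_1)\bigr] u,
\]
with $W^N$ as in (\ref{defWN}) and $u$ the embedding of $u_S$ into $\mathbb{R}^{p-1}$ with zeros on $S^c$. The hypothesis $\|W^N\|_\infty \le \lambda/4$ combined with $\|u\|_1 = \|u_S\|_1 \le \sqrt{d}\,\|u_S\|_2$ gives $|\langle W^N, u\rangle| \le \tfrac{\lambda}{4}\sqrt{d}\,\|u_S\|_2$, while the triangle inequality yields $\lambda(\|\bar{\theta}_S + u_S\|_1 - \|\bar{\theta}_S\|_1) \ge -\lambda\sqrt{d}\,\|u_S\|_2$. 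Setting $q := \Lambda_{\min}\bigl(\nabla^{2}_{SS}\ell(\bar{\theta} + \alpha u;\{\mathfrak{X}^n_1\}^K_1)\bigr)$, these estimates combine to
\[
G(u_S) \ \ge\ \|u_S\|_2\left[\tfrac{q}{2}\|u_S\|_2 - \tfrac{5\lambda}{4}\sqrt{d}\right],
\]
so positivity on $\|u_S\|_2 = B$ reduces to securing $q \ge C_{\min}/2$ uniformly over the ball $\|u_S\|_2 \le B$.

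The eigenvalue bound on $q$ is where I expect the main effort to lie, because $q$ involves the Hessian at the \emph{perturbed} parameter $\bar{\theta} + \alpha u$ rather than at $\bar{\theta}$ itself. The plan is to use the explicit form (\ref{defQN}) of the Hessian as a sample average of rank-one matrices weighted by $\eta(x;\cdot)$, combined with a mean-value estimate for $\eta$ in its parameter argument: using $|x_r x_t| \le 1$ and the uniform boundedness of $\eta'$, one obtains
\[
\bigl|v^{\top}[\nabla^{2}_{SS}\ell(\bar{\theta} + \alpha u) - Q^N_{SS}]v\bigr| \ \le\ c\,\|u\|_1 \cdot v^{\top}\Bigl(\tfrac{1}{nK}\sum_{k,i} x^{(k)}_{i,S}\,(x^{(k)}_{i,S})^{\top}\Bigr) v
\]
for every unit vector $v$ and a fixed constant $c$. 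Under the good event $\mathcal{M}(\{\mathfrak{X}^n_1\}^K_1)$ assumed in Proposition \ref{prop1}, the sample analog of Assumption \ref{auxa1} bounds the operator norm of the averaged outer-product matrix by $D_{\max}$, so $q \ge C_{\min} - c\,D_{\max}\,\sqrt{d}\,B = C_{\min} - \tfrac{5 c\,D_{\max}\,d\,\lambda}{C_{\min}}$. The hypothesis $\lambda d \le C_{\min}^{2}/(10 D_{\max})$ is precisely what forces the subtracted term to be at most $C_{\min}/2$, giving $q \ge C_{\min}/2$; substituting into the displayed lower bound at $\|u_S\|_2 = B$ yields $G(u_S) \ge 0$, and strict positivity on the sphere (obtained either from strict convexity of $\ell$ on the good event or by enlarging $B$ infinitesimally) closes the convexity argument and delivers $\|\hat{\theta}_S - \bar{\theta}_S\|_2 \le \tfrac{5\sqrt{d}\,\lambda}{C_{\min}}$.
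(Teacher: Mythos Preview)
Your proposal is correct and follows essentially the same route as the paper: define the shifted objective $G$, use convexity to reduce to showing $G>0$ on the sphere of radius $B=5\sqrt{d}\,\lambda/C_{\min}$, Taylor-expand, and control the Hessian perturbation via $|g'|\le 1$ together with the sample covariance bound $D_{\max}$. The only notable difference is that you correctly retain the factor $\tfrac{1}{2}$ in the second-order Taylor term (the paper, following \cite{ravikumar2010high}, drops it), which yields only $G\ge 0$ on the sphere; your remedy via strict convexity of $\ell$ on the good event (where $Q^N_{SS}\succ 0$) is sound, and the constant $c$ you leave implicit is indeed $1$ since $\sup_t |g'(t)|\le 1$ for $g(t)=4e^{2t}/(e^{2t}+1)^2$.
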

The proof of this lemma is in Section \ref{pfl2cons}.

By Lemma \ref{l2cons}, we can write 
\begin{equation*}
\begin{split}
    \frac{2}{\bar{\theta}_{\min}}\|\hat{\theta}_S - \bar{\theta}_S\|_{\infty} &\leq \frac{2}{\bar{\theta}_{\min}}\|\hat{\theta}_S - \bar{\theta}_S\|_2\\
    & \leq \frac{2}{\bar{\theta}_{\min}}\frac{5}{C_{\min}}\sqrt{d}\lambda,    
\end{split}
\end{equation*}
which is less than 1 as long as $| \bar{\theta}_{rt}| \geq \frac{10}{C_{\min}}\sqrt{d} \lambda$.

Now it is clear that the uniform convergence of sample information matrices (in Section \ref{uniconv}) together with Proposition \ref{prop1} (from Section \ref{from_sample}) completes the proof of Theorem \ref{thm: thm1}.

\section{Proof of Theorem \ref{thm: thm2}}\label{thm2pf}
 We have supposed that we have recovered the true support union $S$ from our estimate for the true common parameter, $\hat{\theta}.$ The constraint in (\ref{restrict}) then enables us to convert the problem into one without the restriction and with a parameter of dimension $|S_r|$ with $|S_r| \leq d$ for all $r \in V$, for we can combine the constraint straightforward into the minimization problem. With some abuse of notation using $S$ to denote $S_r$ as before, we can write
\begin{equation}\label{restrict1}
     \hat{\theta}^{(K+1)}_{S} = \argmin_{{\theta_{S} \in \mathbb{R}^{p-1}}} \Big\{ \ell^{(K+1)}(\theta_{S}; \{\mathfrak{X}^{n^{(K+1)}}_{1,S}\}^{(K+1)})+ \lambda^{(K+1)} {\|{\theta_{S}}\|}_1 \Big\},
\end{equation}
and $\hat{\theta}^{(K+1)}_{S^c} = 0$, since we know that 
\begin{equation}\label{ssubsets}
    S^{(K+1)} \subseteq S.
\end{equation} This simplifies the problem to a great extent, and our proof henceforth takes on a similar pattern as the proof without restriction in \cite{ravikumar2010high}, but with reduced dimensions.

\subsection{Primal-dual Witness for Graph Recovery in the Novel Task}
We again use the primal-dual witness approach \cite{wainwright2006sharp, ravikumar2010high} as stated in the proof of Theorem \ref{thm: thm1}. See Section \ref{thm1pf}. With the loss function, parameter and data changed for only one task --- the novel task.

For the convex program (\ref{restrict1}), the zero sub-gradient optimality condition \cite{rockafellar2015convex} has the form of

\begin{equation}\label{2optimal_cond}
    \nabla \ell^{(K+1)}(\hat{\theta}^{(K+1)}_S) + \lambda^{(K+1)} \hat{z}^{(K+1)}_S = 0,
\end{equation}

where the dual (the sub-gradient vector) $\hat{z}^{(K+1)}_S \in \mathbb{R}^{|S_r|}$ must satisfy

\begin{equation}\label{2lessthan1_con}
    \text{sign}(\hat{z}^{(K+1)}_{rt}) = \text{sign}(\hat{\theta}^{(K+1)}_{rt}) \quad  \text{if }  \hat{\theta}_{rt} \neq 0 \quad \text{and} \quad |\hat{z}_{rt}| \leq 1 \text{ otherwise}.
\end{equation}

By convexity, a pair $(\hat{\theta}^{(K+1)}_S, \hat{z}^{(K+1)}_S) \in  \mathbb{R}^{|S_r|} \times  \mathbb{R}^{|S_r|}$ is a primal-dual optimal solution to the convex program \emph{if and only if} the two conditions (\ref{2optimal_cond}) and (\ref{2lessthan1_con}) are satisfied. Furthermore, this optimal primal-dual pair correctly specifies the signed neighborhood of node $r$ \emph{if and only if} 
\begin{equation}\label{2samesign_cond}
    \text{sign}(\hat{z}^{(K+1)}_{rt}) = \text{sign}(\bar{\theta}^{(K+1)}_{rt}) \quad \forall (r,t) \in S^{(K+1)},
\end{equation}
and 
\begin{equation}\label{2cond0}
    \hat{\theta}^{(K+1)}_{rt} = 0 \quad \forall (r,t) \in [S^{(K+1)}]^c.
\end{equation}

For this restricted problem, we have a similar lemma as \ref{uniquelemma} to for the uniqueness of the solution and shared sparsity.
\begin{lemma} (Lemma 1 in \cite{ravikumar2010high} with reduced dimensions).\label{uniquelemma2}
Suppose that there exists an optimal primal solution $\hat{\theta}^{(K+1)}_S$ with associated optimal dual vector $\hat{z}^{(K+1)}_S$ such that $\|\hat{z}^{(K+1)}_{[S^{(K+1)}]^c}\|_{\infty} < 1$. Then any optimal primal solution $\Tilde{\theta}^{(K+1)}_S$ must have $\Tilde{\theta}^{(K+1)}_{[S^{(K+1)}]^c} = 0$. Moreover, if the Hessian sub-matrix $[\nabla^2 \ell^{(K+1)}(\hat{\theta}^{(K+1)}_S; \{\mathfrak{X}^{n^{(K+1)}}_{1,S}\}^{(K+1)})]_{S^{(K+1)}S^{(K+1)}}$ is strictly positive definite, then $\hat{\theta}^{(K+1)}_S$ is the unique optimal solution.
\end{lemma}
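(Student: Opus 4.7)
The plan is to reuse the argument behind Lemma \ref{uniquelemma} (equivalently Lemma~1 of \cite{ravikumar2010high}) verbatim, only with the ambient dimension reduced from $p-1$ to $|S_r|\le d$. The support-union constraint $\text{supp}(\theta_{\setminus r})\subseteq S$ converts the restricted convex program (\ref{restrict1}) into an unconstrained $\ell_1$-regularized logistic regression on $\mathbb{R}^{|S_r|}$ driven by the single-task loss $\ell^{(K+1)}$, so no new technical ingredients are required once the indexing is translated. Before anything else I would fix the convention that $[S^{(K+1)}]^c$ denotes the complement of $S^{(K+1)}$ taken inside the reduced ambient index set $S$, since this is the only interpretation consistent with the dimensions of the sub-gradient vector $\hat{z}^{(K+1)}_S$.

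For the first claim (shared sparsity of all optimal solutions), I would let $(\hat{\theta}^{(K+1)}_S,\hat{z}^{(K+1)}_S)$ be a primal-dual optimal pair with $\|\hat{z}^{(K+1)}_{[S^{(K+1)}]^c}\|_\infty<1$, and let $\tilde{\theta}^{(K+1)}_S$ be any other primal optimum. Using the KKT stationarity condition (\ref{2optimal_cond}) to substitute $\nabla\ell^{(K+1)}(\hat{\theta}^{(K+1)}_S)=-\lambda^{(K+1)}\hat{z}^{(K+1)}_S$, convexity of $\ell^{(K+1)}$, and the fact that $\langle\hat{z}^{(K+1)}_S,\hat{\theta}^{(K+1)}_S\rangle=\|\hat{\theta}^{(K+1)}_S\|_1$, a direct computation shows that equality of the objective values at $\hat{\theta}^{(K+1)}_S$ and $\tilde{\theta}^{(K+1)}_S$ forces $\|\tilde{\theta}^{(K+1)}_S\|_1=\langle\hat{z}^{(K+1)}_S,\tilde{\theta}^{(K+1)}_S\rangle$. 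Coordinate-wise this gives $\hat{z}^{(K+1)}_{(r,t)}\,\tilde{\theta}^{(K+1)}_{(r,t)}=|\tilde{\theta}^{(K+1)}_{(r,t)}|$ for every $(r,t)\in S$; combined with $|\hat{z}^{(K+1)}_{(r,t)}|<1$ on $[S^{(K+1)}]^c$, it yields $\tilde{\theta}^{(K+1)}_{(r,t)}=0$ for all $(r,t)\in[S^{(K+1)}]^c$, exactly as in Lemma \ref{uniquelemma}.

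For the uniqueness claim, the preceding step already guarantees that every optimal primal solution is supported on $S^{(K+1)}$. Restricting to that support, the objective reduces to $\ell^{(K+1)}(\theta^{(K+1)}_{S^{(K+1)}})+\lambda^{(K+1)}\|\theta^{(K+1)}_{S^{(K+1)}}\|_1$, a function on $\mathbb{R}^{|S^{(K+1)}_r|}$ whose smooth part has Hessian equal to $[\nabla^2\ell^{(K+1)}]_{S^{(K+1)}S^{(K+1)}}$. By the hypothesized strict positive definiteness this smooth part is strictly convex; adding the convex $\ell_1$ penalty preserves strict convexity, so the restricted objective has a unique minimizer, which must coincide with $\hat{\theta}^{(K+1)}_S$.

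I do not expect a genuine obstacle: the whole argument is a direct transcription of the classical primal-dual/KKT lemma into the smaller ambient space $\mathbb{R}^{|S_r|}$, and no Ising- or meta-learning-specific concentration is needed. The only care that has to be taken is bookkeeping with the nested index sets $S^{(K+1)}\subseteq S\subseteq V\setminus\{r\}$, and in particular remembering that the sub-gradient vector $\hat{z}^{(K+1)}_S$ lives in the reduced space $\mathbb{R}^{|S_r|}$ rather than in $\mathbb{R}^{p-1}$, so that the strict inequality on $[S^{(K+1)}]^c$ really does cover every direction in which $\tilde{\theta}^{(K+1)}$ could deviate from support $S^{(K+1)}$.
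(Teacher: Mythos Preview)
Your proposal is correct and follows exactly the approach the paper takes: the paper's own proof of Lemma~\ref{uniquelemma2} simply refers to Lemma~1 of \cite{ravikumar2010high}, noting that the argument carries over verbatim once the ambient dimension is reduced from $p-1$ to $|S_r|$ via the constraint $\text{supp}(\theta_{\setminus r})\subseteq S$. Your sketch in fact supplies more detail than the paper does, correctly spelling out the KKT/convexity computation that forces $\langle \hat z,\tilde\theta\rangle=\|\tilde\theta\|_1$ and hence $\tilde\theta_{[S^{(K+1)}]^c}=0$, and the strict-convexity argument for uniqueness on the restricted support.
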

\begin{proof}
See proof of Lemma 1 in \cite{ravikumar2010high}). The case in this convex program has a loss function $\ell^{(K+1)}$ carrying the same meaning as those in \cite{ravikumar2010high}), only with the dimensions of the parameter vector and our samples reduced since they are restricted to the true support union $S$ (see (\ref{ssubsets})).
\end{proof}

Based on Lemma \ref{uniquelemma2}, we construct a primal-dual witness pair $(\hat{\theta}^{(K+1)}_S, \hat{z}^{(K+1)}_S)$ with the following steps.

\paragraph{Step 1} We set $\hat{\theta}^{(K+1)}_{S^{(K+1)}}$ as the minimizer of the $\ell_1$-penalized likelihood
\begin{equation}\label{2defhatthetas}
    \hat{\theta}_{S^{(K+1)}} = \argmin_{(\hat{\theta}_{S^{(K+1)}},0)}\{\ell(\theta_S;\{\mathfrak{X}^n_{1,S}\}^K_1) + \lambda^{(K+1)} \|{\theta}_{S^{(K+1)}}\|_1\},
\end{equation}
and set $\hat{z}^{(K+1)}_{S^{(K+1)}} = \text{sign}(\hat{\theta}^{(K+1)}_{S^{(K+1)}})$.

\paragraph{Step 2} We set $\hat{\theta}^{(K+1)}_{[S^{(K+1)}]^c} = 0$ so that condition (\ref{2cond0}) holds.
\paragraph{Step 3} We obtain $\hat{z}^{(K+1)}_{[S^{(K+1)}]^c}$ from (\ref{optimal_cond}) by substituting in the values of $\hat{\theta}^{(K+1)}_{S^{(K+1)}}$ and $\hat{z}^{(K+1)}_{S^{(K+1)}}$ so that our construction satisfies conditions (\ref{2optimal_cond}) and (\ref{2cond0}).
\paragraph{Step 4} We need to show that the stated scaling of $(n^{(K+1)},d)$ in Theorem \ref{thm: thm1} implies that, with high probability, the remaining conditions (\ref{2lessthan1_con}) and (\ref{2samesign_cond}) are satisfied. 

 Our analysis in the last step guarantees that $\|\hat{z}^{(K+1)}_{[S^{(K+1)}]^c}\|_{\infty} < 1$ with high probability. Another condition to be satisfied is the positive definiteness stated in Lemma \ref{uniquelemma2}, for which  by Assumptions \ref{novela1} and \ref{novela2}, we prove that the sub-matrix of the sample Fisher information matrix is strictly positive definite with high probability, so that the primal solution $\hat{\theta}^{(K+1)}_S$ is guaranteed to be unique. The next two subsections contribute to these two parts of the proof.

\subsection{Uniform Convergence of Sample Information Matrices in Novel Task}\label{2uniconv}
To satisfy the condition of positive definiteness in Lemma \ref{uniquelemma2} and to prepare for the analysis under the assumptions of the sample information matrix of having bounded eigenvalues in the next subsection \ref{2from_sample}, we will prove in this subsection that if the dependency and incoherence conditions from Assumptions \ref{novela1} and \ref{novela2} are imposed on the \emph{population} Fisher information matrix then under the specified scaling of $(n^{(K+1)},d)$, analogous bounds hold for the \emph{sample} Fisher information matrix with probability converging to one.

Recall the definition of the \emph{population} Fisher information matrix (dropping the subscript $r$) from (\ref{sectionnovelass}), we have (see (\ref{calcQK1}):
\begin{equation}
    \bar{Q}^{(K+1)} = \mathbb{E}[\eta(X_S^{(K+1)}; \bar{\theta}_S^{(K+1)})X^{(K+1)}_S {(X^{(K+1)}_S)}^T].
\end{equation}

and its sample counterpart, i.e., the \emph{sample} Fisher information matrix is defined as 
\begin{equation}\label{2defQN}
    Q^{(K+1)}  := \hat{\mathbb{E}}[-\nabla^2 \ell^{(K+1)}(\bar{\theta}^{(K+1)}_S;\{\mathfrak{X}^n_{1,S}\}^K_1)] = \frac{1}{n^{(K+1)}}\sum^{n^{(K+1)}}_{i=1}\eta(x^{(K+1)}_{i,S};\bar{\theta}_S^{(K+1)}) x^{(K+1)}_{i,S} {(x^{(K+1)}_{i,S})}^T.
\end{equation}

Here the $\mathbb{E}$ in $\bar{Q}^{(K+1)}$ is the population expectation under the joint distribution of the randomness in the model parameter $\Delta^{(K+1)}$ and the random samples $\{\mathfrak{X}^n_1\}^{(K+1)}$ for the the novel task. $\hat{\mathbb{E}}$ in $Q^{(K+1)}$ denotes the empirical expectation, and the variance function is defined in (\ref{calcQ}).

\subsubsection{Uniform Convergence for Dependence Assumption}
For the dependence assumption, we show that the eigenvalue bounds in Assumptions \ref{novela1}  hold with high probability for sample Fisher information matrix and sample covariance matrices in the following two lemmas:
\begin{lemma}\label{2mineigen}
Suppose that Assumption \ref{novela1} holds for the population Fisher information matrix $\bar{Q}^{(K+1)}$ and population covariance matrix $\mathbb{E}(X^{(K+1)}_S {(X^{((K+1)}_S)}^T)$. For any $\delta > 0$ and some fixed constants A and B, we have 
\begin{equation}
    \mathbb{P} \left[\Lambda_{\min}(Q^{(K+1)}_{S^{(K+1)}S^{(K+1)}}) \leq C_{\min}^{(K+1)} - \delta \right] \leq 2 \exp \left( -A \frac{\delta^2 n^{(K+1)}}{d^2} + B \log(d) \right),
\end{equation}
and 
\begin{equation}\label{2maxeigneq}
    \mathbb{P} \left[\Lambda_{\max}(\frac{1}{n^{(K+1)}} \sum^{n^{(K+1)}}_{i=1} x^{(K+1)}_{i, S} {(x^{(K+1)}_{i, S})}^T) \geq D_{\max}^{(K+1)} - \delta \right]  \leq 2 \exp \left( -A \frac{\delta^2 n^{(K+1)}}{d^2} + B \log(d) \right)
\end{equation}

\end{lemma}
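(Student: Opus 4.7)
The plan is to adapt the argument for Lemma~\ref{mineigen} (the auxiliary-task analog) to the single-task restricted setting of the novel task, exploiting the crucial fact that both matrices now live in dimension at most $d$ rather than $p-1$. Since $S^{(K+1)} \subseteq S$ and $|S_r|\leq d$, the sub-matrices $Q^{(K+1)}_{S^{(K+1)}S^{(K+1)}}$ and $\bar{Q}^{(K+1)}_{S^{(K+1)}S^{(K+1)}}$ are at most $d\times d$, and the sample covariance $\frac{1}{n^{(K+1)}}\sum_i x^{(K+1)}_{i,S}(x^{(K+1)}_{i,S})^T$ is at most $|S_r|\times|S_r|$. So any eventual union bound runs over $O(d^2)$ entries rather than $O(p^2)$, which is precisely what yields the $\log d$ factor in place of $\log p$ and the absence of any $K$ factor in the deviation rate.

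For the minimum-eigenvalue bound, I would first invoke Weyl's inequality to reduce the problem to an operator-norm estimate:
\begin{equation*}
\bigl|\Lambda_{\min}(Q^{(K+1)}_{S^{(K+1)}S^{(K+1)}}) - \Lambda_{\min}(\bar{Q}^{(K+1)}_{S^{(K+1)}S^{(K+1)}})\bigr| \leq \vertiii{Q^{(K+1)}_{S^{(K+1)}S^{(K+1)}} - \bar{Q}^{(K+1)}_{S^{(K+1)}S^{(K+1)}}}_2,
\end{equation*}
and then use the elementary comparison $\vertiii{A}_2 \leq |S^{(K+1)}_r|\cdot\|A\|_{\max} \leq d\,\|A\|_{\max}$ for symmetric matrices, so it suffices to control entrywise deviations. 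Each entry of $Q^{(K+1)} - \bar{Q}^{(K+1)}$ is the centered average of $n^{(K+1)}$ i.i.d. bounded random variables of the form $\eta(x^{(K+1)}_{i,S};\bar{\theta}^{(K+1)}_S)\,x^{(K+1)}_{i,s}x^{(K+1)}_{i,t}$, whose absolute value is bounded by $1$ since $|x^{(K+1)}_{i,s}|\leq 1$ and $0 < \eta \leq 1$. A standard Hoeffding bound therefore gives entrywise concentration at the rate $2\exp(-c\,\delta'^2\,n^{(K+1)})$, and a union bound over the at-most $d^2$ entries, followed by the choice $\delta' = \delta/d$, produces the claimed
\begin{equation*}
2\exp\!\left(-A\,\frac{\delta^2 n^{(K+1)}}{d^2} + B\log d\right).
\end{equation*}

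For the maximum-eigenvalue bound on the sample covariance $\frac{1}{n^{(K+1)}}\sum_i x^{(K+1)}_{i,S}(x^{(K+1)}_{i,S})^T$, the argument is essentially identical but simpler: the variance function $\eta$ is absent, so the summands are just $x^{(K+1)}_{i,s}x^{(K+1)}_{i,t} \in [-1,1]$. Weyl's inequality again reduces matters to an operator-norm bound on the deviation from $\mathbb{E}[X^{(K+1)}_S(X^{(K+1)}_S)^T]$, which by the same $d\|\cdot\|_{\max}$ comparison and entrywise Hoeffding plus union bound yields the same tail. (Under Assumption~\ref{novela1} the population maximum eigenvalue is bounded by $D^{(K+1)}_{\max}$, so a deviation of size $\delta$ upward is what triggers the event.)

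No genuine obstacle is anticipated: the entire lemma is a mechanical transplant of Lemma~\ref{mineigen}, dropping the outer average over $K$ tasks (which only served to replace $n$ by $nK$ in the concentration rate via the latent-conditional-independence Hoeffding tool) and shrinking the ambient dimension from $p-1$ to $|S_r|\leq d$ (which replaces $\log p$ by $\log d$ in the union bound). The only bookkeeping cares are: verifying that $\eta$ remains bounded away from $0$ and $1$ uniformly over the random $\bar{\theta}^{(K+1)}$ (which follows from the finite-valued, bounded-support structure of the Ising conditional) so that the Hoeffding constants are absolute, and tracking the constants $A,B$ through the union bound to confirm they are independent of $(n^{(K+1)},p,d)$. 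Both are routine.
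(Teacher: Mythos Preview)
Your proposal is correct and follows essentially the same argument as the paper: entrywise Hoeffding on the bounded summands, a union bound over the at most $d^2$ entries, and a reduction from the eigenvalue deviation to an operator-norm deviation of the difference matrix. The only cosmetic differences are that the paper invokes the Courant--Fischer variational characterization in place of Weyl's inequality and bounds $\vertiii{\cdot}_2$ via the Frobenius norm $(\sum_{j,l} Z_{jl}^2)^{1/2}$ rather than $d\,\|\cdot\|_{\max}$; both routes yield the identical $\exp(-A\delta^2 n^{(K+1)}/d^2 + B\log d)$ tail. (Your remark about needing $\eta$ bounded away from $0$ is unnecessary---Hoeffding only requires boundedness, which $0<\eta\leq 1$ already gives.)
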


The proof of this lemma is in Section \ref{2pfmineigen}.

\subsubsection{Uniform Convergence for Incoherence Assumption}
The following lemma is the analog for the incoherence assumption in Assumption \ref{auxa2} showing that the scaling of $(n,p,d,K)$ given in Theorem \ref{thm: thm1} guarantees that population incoherence implies sample incoherence.

\begin{lemma}\label{2incoh}
If the population covariance satisfies $\vertiii{\bar{Q}^{(K+1),S}_{{[S^{(K+1)}]}^c S^{(K+1)}} (\bar{Q}^{(K+1)}_{S^{(K+1)} S^{(K+1)}})^{-1}}_{\infty} \leq 1-\alpha$ with parameter $\alpha \in (0,1]$, then the sample matrix satisfies an analogous version, with high probability in the sense that 
\begin{equation}\label{2incoheq}
\mathbb{P}[\vertiii{Q^{(K+1),S}_{{[S^{(K+1)}]}^c S^{(K+1)}}{(Q^{(K+1)}_{S^{(K+1)}S^{(K+1)}})}^{-1}}_{\infty} \geq 1- \alpha^{(K+1)}/2] \leq  \exp\left(-B\frac{n^{(K+1)}}{{d}^3}+ \log ({d})\right)
\end{equation}
for some fixed constant $B$.
\end{lemma}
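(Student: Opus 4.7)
The plan is to mirror the proof of Lemma~\ref{incoh} in the single-task, reduced-dimension setting of the novel task. The dimensional advantage is the crux: the off-support block $Q^{(K+1),S}_{[S^{(K+1)}]^c S^{(K+1)}}$ lives in $([S^{(K+1)}]^c \cap S) \times S^{(K+1)}$, whose side lengths are both at most $d = |S|$, so any union bound over entries produces only a $\log d$ term rather than $\log p$. Moreover, no latent-conditional-independence trick is needed, since the $n^{(K+1)}$ samples of the novel task are i.i.d.\ given the single realization of $\bar{\theta}^{(K+1)}$.

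The technical heart is the standard matrix difference identity
\[
  Q^{(K+1),S}_{[S^{(K+1)}]^c S^{(K+1)}}(Q^{(K+1)}_{S^{(K+1)}S^{(K+1)}})^{-1} - \bar{Q}^{(K+1),S}_{[S^{(K+1)}]^c S^{(K+1)}}(\bar{Q}^{(K+1)}_{S^{(K+1)}S^{(K+1)}})^{-1} = T_1 + T_2 + T_3,
\]
where $T_1$ is the off-diagonal-block perturbation times the population inverse, $T_2$ is the population off-diagonal block times the inverse perturbation (expanded via $A^{-1} - B^{-1} = A^{-1}(B-A)B^{-1}$), and $T_3$ is a higher-order cross term. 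First, I would bound each entrywise deviation $|Q^{(K+1)}_{ij} - \bar{Q}^{(K+1)}_{ij}|$ by Hoeffding's inequality (the summands are bounded since $\eta \in (0,1]$ and $x_{i,s} \in \{\pm 1\}$) and a union bound over the at-most $d^2$ relevant entries, obtaining an entrywise bound of order $\exp(-c\, n^{(K+1)} \delta^2 + \log d)$. The $\vertiii{\cdot}_\infty$-operator norm of such a $d \times d$ difference is then at most $d\delta$. Second, I would control the inverse factor using Lemma~\ref{2mineigen} together with $\vertiii{M^{-1}}_\infty \leq \sqrt{d}/\Lambda_{\min}(M)$, which on the good event yields $\vertiii{(Q^{(K+1)}_{S^{(K+1)}S^{(K+1)}})^{-1}}_\infty \leq 2\sqrt{d}/C_{\min}^{(K+1)}$, while Assumption~\ref{novela2} handles the population block. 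Assembling by the triangle inequality and taking $\delta$ of order $1/d$ gives the $\alpha^{(K+1)}/2$ slack and the exponent $n^{(K+1)}/d^3$ appearing in~(\ref{2incoheq}).

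The main obstacle I anticipate is sharp bookkeeping of the factors of $d$ needed to land on the cubic exponent rather than a weaker one. A factor $\sqrt{d}$ enters from converting a spectral bound on $(Q^{(K+1)}_{S^{(K+1)}S^{(K+1)}})^{-1}$ into an $\vertiii{\cdot}_\infty$-operator bound, and an additional $d$ enters from the entrywise-to-operator-norm conversion on the off-diagonal difference; after squaring inside Hoeffding's exponent, these losses compound to $d^3$. Tracking both contributions without double counting across $T_2$ (which carries two inverses in its expansion) and $T_3$ is the delicate step; once that is done, intersecting with the good event of Lemma~\ref{2mineigen} and a final union bound over rows close the argument.
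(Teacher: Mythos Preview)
Your proposal is correct and matches the paper's proof: the paper uses exactly the same four-term decomposition (your $T_1,T_2,T_3$ plus the population product $\bar{Q}^{(K+1),S}_{[S^{(K+1)}]^c S^{(K+1)}}(\bar{Q}^{(K+1)}_{S^{(K+1)}S^{(K+1)}})^{-1}$, with your $T_1,T_2$ labels swapped), bounds the entrywise deviations by Hoeffding with a union over at most $d^2$ indices (packaged as Lemma~\ref{2lmlm}), refactors the inverse-perturbation term via $A^{-1}-B^{-1}=A^{-1}(B-A)B^{-1}$ together with Assumption~\ref{novela2}, and controls $(Q^{(K+1)}_{S^{(K+1)}S^{(K+1)}})^{-1}$ through Lemma~\ref{2mineigen} and the $\sqrt{d}$ spectral-to-$\ell_\infty$ conversion. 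One minor slip: the entrywise threshold must be of order $d^{-3/2}$, not $1/d$, to land on the $n^{(K+1)}/d^3$ exponent---but your subsequent sentence on the $\sqrt{d}\cdot d$ compounding (squared inside Hoeffding) already carries the right bookkeeping.
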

The proof of this lemma is in Section \ref{2pfincoh}.

\subsection{Analysis under Assumptions of Sample Information Matrices}\label{2from_sample}

With the incoherence and dependence conditions guaranteed with high probability (proved in Section \ref{2uniconv}), we can begin to establish model selection consistency when assumptions are imposed directly on the sample Fisher information matrix $Q^{(K+1)}$ as opposed to $\bar{Q}^{(K+1)}$. Recalling the definition (\ref{calcQK1}) of the sample Fisher information matrix $Q^{(K+1)}$, we define a "good event" for the novel task
\begin{multline}
\mathcal{M}^{(K+1)}(\{\mathfrak{X}^{n^{(K+1)}}_{1,S}\}^{(K+1)})\\:= \{\{\mathfrak{X}^{n^{(K+1)}}_{1,S}\}^{(K+1)} \in \{-1,+1\}^{n^{(K+1)} \times |S_r|} |Q^{(K+1)} \text{ satisfies Assumptions } \ref{novela1} \text{ and } \ref{novela2} \}.
\end{multline}
 
As in the statement of Theorem \ref{thm: thm1}, the quantities $L$ and $c_2$ refer to constants independent of $(n^{(K+1)},p,d).$ With this notation, we have the following proposition:

\begin{proposition}\label{prop2} (Fixed design for novel task).
Suppose we have recovered the true support union $S$. If the event $\mathcal{M}^{(K+1)}(\{\mathfrak{X}^{n^{(K+1)}}_{1,S}\}^{(K+1)})$ holds, the sample size satisfy $n^{(K+1)} > Ld^2 \log d$, and the regularization parameter is chosen such that $\lambda \geq 16 \frac{(2-\alpha)}{\alpha} \sqrt{\frac{\log d}{n^{(K+1)}}}$, then for recovering the true common parameter vector $\bar{\theta}^{(K+1)}$ of the latent common graph, with probability at least $1-2 \exp(-c\lambda^2 n^{(K+1)}) \rightarrow 1$ for some constant $c>0$, the following properties hold,

(a) For each node $r \in V$, the $\ell_1$-regularized logistic regression for estimating $\bar{\theta}^{(K+1)}_S$ in the novel task, given data $\{\mathfrak{X}^{n^{(K+1)}}_1\}^{(K+1)}$has a unique solution $\hat{\theta}^{(K+1)}_S$, and so uniquely specifies a signed neighborhood $\mathcal{\hat{N}}^{(K+1)}_{\pm}(r) :=\{\text{sign}(\hat{\theta}^{(K+1)}_{ru})u|u \in V \setminus r, \hat{\theta}^{(K+1)}_{ru} \neq 0 \}$
    
(b) For each $r \in V$, the estimated signed neighborhood vector $\mathcal{\hat{N}}^{(K+1)}_{\pm}(r)$ correctly excludes all edges not in the true neighborhood $\mathcal{N}^{(K+1)}_{\pm}(r) :=\{\text{sign}(\bar{\theta}^{(K+1)}_{ru})u|u \in V \setminus r, \bar{\theta}^{(K+1)}_{ru} \neq 0 \}$. Moreover, it correctly includes all edges with $| \bar{\theta}^{(K+1)}_{rt}| \geq \frac{10}{C_{\min}^{(K+1)}}\sqrt{d} \lambda^{(K+1)}.$
\end{proposition}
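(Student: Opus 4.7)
The plan is to follow the primal-dual witness scheme exactly as in the proof of Proposition \ref{prop1}, but applied to the reduced-dimensional problem (\ref{restrict1}) on the ambient space $\mathbb{R}^{|S_r|}$ with $|S_r| \leq d$, instead of $\mathbb{R}^{p-1}$. Because every union bound will be over at most $d$ coordinates rather than $p$, it will contribute $\log d$ in place of $\log p$---this is precisely the source of the $p$-free sample complexity. First, I would execute Steps 1--4 of the construction in Section \ref{thm2pf} to produce $(\hat{\theta}^{(K+1)}_{S^{(K+1)}}, \hat{z}^{(K+1)}_{S^{(K+1)}})$, which automatically satisfies (\ref{2optimal_cond}) and (\ref{2cond0}). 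Linearising the stationarity condition around $\bar{\theta}^{(K+1)}_S$ by the mean-value theorem yields
\[
Q^{(K+1)}_{S^{(K+1)} S^{(K+1)}}\,[\hat{\theta}^{(K+1)}_{S^{(K+1)}} - \bar{\theta}^{(K+1)}_{S^{(K+1)}}] = W^{(K+1)} - \lambda^{(K+1)} \hat{z}^{(K+1)} + R^{(K+1)},
\]
where $W^{(K+1)} := -\nabla \ell^{(K+1)}(\bar{\theta}^{(K+1)}_S;\cdot)$ and $R^{(K+1)}$ is the Taylor remainder. Inverting the $S^{(K+1)}S^{(K+1)}$ block and invoking the sample incoherence guaranteed by $\mathcal{M}^{(K+1)}$ reproduces the bound from Section \ref{dualfeas}:
\[
\|\hat{z}^{(K+1)}_{[S^{(K+1)}]^c}\|_\infty \leq (1-\alpha^{(K+1)}) + (2-\alpha^{(K+1)})\Big[\frac{\|W^{(K+1)}\|_\infty}{\lambda^{(K+1)}} + \frac{\|R^{(K+1)}\|_\infty}{\lambda^{(K+1)}}\Big].
\]

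The crucial simplification relative to the proof of Theorem \ref{thm: thm1} is that $W^{(K+1)}$ is the score of a single task evaluated at that task's \emph{own} true parameter $\bar{\theta}^{(K+1)}$. Conditional on $\Delta^{(K+1)}$, its coordinates are bounded zero-mean averages of $n^{(K+1)}$ i.i.d.\ terms, so a direct Hoeffding bound with a union bound over at most $|S_r| \leq d$ coordinates gives $\mathbb{P}(\|W^{(K+1)}\|_\infty > \delta) \leq 2 \exp(-\delta^2 n^{(K+1)}/8 + \log d)$; neither Assumption \ref{auxa3} nor the $Y_1+Y_2$ decomposition used in Theorem \ref{thm: thm1} is needed here. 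With the stated choice of $\lambda^{(K+1)}$ this forces $(2-\alpha^{(K+1)})\|W^{(K+1)}\|_\infty/\lambda^{(K+1)} \leq \alpha^{(K+1)}/4$ with probability at least $1 - 2\exp(-c(\lambda^{(K+1)})^2 n^{(K+1)})$. I would then re-run Lemmas \ref{RN} and \ref{l2cons} verbatim on the $|S_r|$-dimensional problem, substituting $(C_{\min}^{(K+1)}, D_{\max}^{(K+1)})$ for $(C_{\min}, D_{\max})$; the hypothesis $n^{(K+1)} > L d^2 \log d$ makes $\lambda^{(K+1)} d$ small enough to give $\|R^{(K+1)}\|_\infty/\lambda^{(K+1)} \leq \alpha^{(K+1)}/[4(2-\alpha^{(K+1)})]$, together with the $\ell_2$-bound $\|\hat{\theta}^{(K+1)}_{S^{(K+1)}} - \bar{\theta}^{(K+1)}_{S^{(K+1)}}\|_2 \leq (5/C_{\min}^{(K+1)})\sqrt{d}\,\lambda^{(K+1)}$. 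Combining yields strict dual feasibility $\|\hat{z}^{(K+1)}_{[S^{(K+1)}]^c}\|_\infty \leq 1 - \alpha^{(K+1)}/2 < 1$; Lemma \ref{uniquelemma2} then delivers assertion (a) and the exclusion half of (b), while the $\ell_\infty \leq \ell_2$ comparison combined with $|\bar{\theta}^{(K+1)}_{rt}| \geq (10/C_{\min}^{(K+1)})\sqrt{d}\,\lambda^{(K+1)}$ gives the sign-recovery half of (b).

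The main obstacle I anticipate is bookkeeping rather than any genuinely new probabilistic content: the hard constraint baked into (\ref{restrict1}) collapses all sums over $p$-dimensional quantities to sums over the $|S_r|$-dimensional restriction, and the $K$-averaged quantities collapse to single-task quantities with the single latent $\Delta^{(K+1)}$ in place of the family $\{\Delta^{(k)}\}$. The bulk of the work is verifying that each occurrence of $p$ in the auxiliary-task argument is correctly replaced by $d$ and that all constants are consistently drawn from Assumptions \ref{novela1}--\ref{novela2}. Once this is done, the overall success probability of $1 - \mathcal{O}(\exp(-c(\lambda^{(K+1)})^2 n^{(K+1)}))$ and the $p$-independent sufficient sample size $n^{(K+1)} > Ld^2 \log d$ follow by direct transcription.
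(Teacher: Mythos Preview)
Your proposal is correct and follows essentially the same route as the paper's proof: the primal-dual witness construction on the reduced problem (\ref{restrict1}), the mean-value linearisation yielding the $W^{(K+1)}$ and $R^{(K+1)}$ terms, the direct Hoeffding bound on $W^{(K+1)}$ (correctly noting that, because the score is evaluated at the task's own true parameter, the $Y_1+Y_2$ split and Assumption~\ref{auxa3} are unnecessary), and the $\ell_2$-consistency argument for $R^{(K+1)}$ and sign recovery. The paper packages the reduced-dimension analogues of Lemmas~\ref{RN} and~\ref{l2cons} as separate Lemmas~\ref{2RN} and~\ref{2l2cons}, but their content is exactly the verbatim re-run you describe.
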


Loosely stated, this result guarantees that if the sample Fisher information matrix  is "good", then the probability of success for the recovery graph by converges to 1 at the specified rate. The following subsection is devoted to the proof of Proposition \ref{prop2}.

\subsubsection{Key Technical Results in the Proof of Proposition \ref{prop2}}\label{2dualfeas}
We follow the steps of primal-dual witness as stated at the beginning of Section \ref{thm2pf}. Since the key is to guarantee the strict dual feasibility $\|\hat{z}^{(K+1), S}_{{[S^{(K+1)}]}^c}\|_{\infty} < 1$ with high probability in \textbf{Step 4}, we first try to find out the explicit expression of $\|\hat{z}^{(K+1), S}_{{[S^{(K+1)}]}^c}\|_{\infty}$ and try to bound it.

Starting from the stationarity condition in (\ref{optimal_cond}): $\nabla \ell(\hat{\theta}^{(K+1)}_S ) + \lambda \hat{z}^{(K+1)}_S = 0,$ adding to both sides 
\begin{equation}
    W^{(K+1)} := -\nabla \ell^{(K+1)}(\bar{\theta}^{(K+1)}_S ), 
\end{equation}
noticing that $\mathbb{E}[W^{(K+1)}]= 0,$ and skipping writing down the sample $\{\mathfrak{X}^{n^{(K+1)}}_{1,S}\}^{(K+1)}$ in the loss function, we get

\begin{equation}\label{2addWN}
    \nabla \ell^{(K+1)}(\hat{\theta}^{(K+1)}_S ) - \nabla \ell^{(K+1)}(\bar{\theta}^{(K+1)}_S )=  W^{(K+1)} - \lambda^{(K+1)} \hat{z}^{(K+1)}_S.
\end{equation}
Note that $W^{(K+1)}$ is just a shorthand notation for the $|S_r|$-dimensional score function. Then, applying the mean-value theorem coordinate-wise to the expansion (\ref{2addWN}) gives
\begin{equation}
    \nabla^2 \ell^{(K+1)}(\bar{\theta}^{(K+1)}_S ) [\hat{\theta}^{(K+1)}_S - \bar{\theta}^{(K+1)}_S] = W^{(K+1)} - \lambda^{(K+1)} \hat{z}^{(K+1)}_S + R^{(K+1)},
\end{equation}
where the remainder term takes the form 
\begin{equation}
    R_j^{(K+1)}=-[\nabla^2 \ell^{(K+1)}(\theta^{(K+1)j}_S) - \nabla^2 \ell^{(K+1)}(\bar{\theta}^{(K+1)}_S)]^T_j(\hat{\theta}^{(K+1)}_S - \bar{\theta}^{(K+1)}_S),
\end{equation}
with $\theta^{(K+1)j}_S$ a parameter vector on the line between $\bar{\theta}^{(K+1)}_S$ and $\hat{\theta}^{(K+1)}_S$, and with $[\cdot ]^T_j$ denoting the $j$-th row of the matrix.

Recalling our shorthand notation $Q^{(K+1)} = -\nabla^2 \ell^{(K+1)}(\bar{\theta}^{(K+1)}_S; \{\mathfrak{X}^{n^{(K+1)}}_1\}^{(K+1)}) $ and the fact that we have set $\hat{\theta}^{(K+1),S}_{{[S^{(K+1)}]}^c} = 0$ in our primal-dual construction:
\begin{equation}
    \begin{cases} -Q^{(K+1)}_{{[S^{(K+1)}]}^c S^{(K+1)}}[\hat{\theta}_{S^{(K+1)}} - \bar{\theta}_{S^{(K+1)}}] = W^{(K+1)}_{{[S^{(K+1)}]}^c} -\lambda^{(K+1)}  \hat{z}^{(K+1), S}_{{[S^{(K+1)}]}^c} +R^{(K+1)}_{{[S^{(K+1)}]}^c} \\
-Q^{(K+1)}_{S^{(K+1)} S^{(K+1)}}[\hat{\theta}_{S^{(K+1)}} - \bar{\theta}_{S^{(K+1)}}] = W^{(K+1)}_{S^{(K+1)}} -\lambda^{(K+1)}  \hat{z}^{(K+1)}_{S^{(K+1)}} +R^{(K+1)}_{S^{(K+1)}} \end{cases}
\end{equation}
Since the matrix $Q^{(K+1)}_{S^{(K+1)} S^{(K+1)}}$ is invertible by assumption, it can be re-written as 
\begin{equation}
    \begin{split}
    Q^{(K+1)}_{{[S^{(K+1)}]}^c S^{(K+1)}}(Q^{(K+1)}_{S^{(K+1)} S^{(K+1)}})^{-1}[W^{(K+1)}_{S^{(K+1)}} -\lambda^{(K+1)}  \hat{z}^{(K+1)}_{S^{(K+1)}} +R^{(K+1)}_{S^{(K+1)}}] \\ 
    = W^{(K+1)}_{{[S^{(K+1)}]}^c} -\lambda^{(K+1)}  \hat{z}^{(K+1), S}_{{[S^{(K+1)}]}^c} +R^{(K+1)}_{{[S^{(K+1)}]}^c},
\end{split}
\end{equation}
by using the common parts $\hat{\theta}^{(K+1)}_{S^{(K+1)}} - \bar{\theta}^{(K+1)}_{S^{(K+1)}}$ in the equations. Rearranging yields:
\begin{align}
     \hat{z}^{(K+1), S}_{{[S^{(K+1)}]}^c} & =\frac{1}{\lambda^{(K+1)}} [W^{(K+1)}_{{[S^{(K+1)}]}^c} + R^{(K+1)}_{{[S^{(K+1)}]}^c}] \\
     & - \frac{1}{\lambda^{(K+1)}} Q^{(K+1)}_{{[S^{(K+1)}]}^c S^{(K+1)}}(Q^{(K+1)}_{S^{(K+1)} S^{(K+1)}})^{-1} [W^{(K+1)}_{S^{(K+1)}} + R^{(K+1)}_{S^{(K+1)}}] 
    \\ & + Q^{(K+1)}_{{[S^{(K+1)}]}^c S^{(K+1)}}(Q^{(K+1)}_{S^{(K+1)} S^{(K+1)}})^{-1} \hat{z}^{(K+1)}_{S^{(K+1)}}.
\end{align}
By the assumptions $\vertiii{Q^{(K+1)}_{{[S^{(K+1)}]}^c S^{(K+1)}}(Q^{(K+1)}_{S^{(K+1)} S^{(K+1)}})^{-1}}_{\infty} \leq 1 - \alpha^{(K+1)}$, and using the fact that $\|{\hat{z}^{(K+1)}_{S^{(K+1)}}}\|_{\infty} = 1$, we have
\begin{equation}
   \| \hat{z}^{(K+1), S}_{{[S^{(K+1)}]}^c}\|_{\infty}  \leq (1-\alpha^{(K+1)}) + (2-\alpha^{(K+1)})\left[\frac{\|R^{(K+1)}\|_{\infty}}{\lambda^{(K+1)}} + \frac{\|W^{(K+1)}\|_{\infty}}{\lambda^{(K+1)}}\right].
\end{equation}

\paragraph{Strict Dual Feasibility.}
Now, to satisfy the \emph{strict dual feasibility} $ \| \hat{z}^{(K+1), S}_{{[S^{(K+1)}]}^c}\|_{\infty} < 1$, we need to bound $\frac{\|W^{(K+1)}\|_{\infty}}{\lambda^{(K+1)}}$ and $\frac{\|R^{(K+1)}\|_{\infty}}{\lambda^{(K+1)}}$. The following two lemmas show that $\frac{\|W^{(K+1)}\|_{\infty}}{\lambda^{(K+1)}}$ decays to $0$ at an exponential rate and $\frac{\|R^{(K+1)}\|_{\infty}}{\lambda^{(K+1)}}$ can be bounded deterministically accordingly under some conditions.

\begin{lemma}\label{2WN} (Decaying behavior of $W^{(K+1)}$).
For the specified mutual incoherence parameter  $\alpha^{(K+1)} \in (0,1]$, we have 

\begin{equation}
    \mathbb{P}\left[\frac{2-\alpha^{(K+1)}}{\lambda_n} \|W^{(K+1)}\|_{\infty} > \frac{\alpha^{(K+1)} }{4} \right] \leq 2 \exp \left(- \frac{{(\alpha^{(K+1)})}^2 {(\lambda^{(K+1)})}^2}{128 (2-\alpha^{(K+1)})^2}n^{(K+1)} + \log(d) \right),
\end{equation}
which converges to 0 at rate $\exp(-c{(\lambda^{(K+1)})}^2 n^{(K+1)} )$  for some constant $c$, as long as  $\lambda^{(K+1)} \geq \frac{16(2-\alpha^{(K+1)})}{\alpha^{(K+1)}}\sqrt{\frac{\log(d)}{n^{(K+1)}}}.$
\end{lemma}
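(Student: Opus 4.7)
The plan is to mirror the single-task Hoeffding-style argument (compare Lemma~6 of Ravikumar et al.\ and Lemma~\ref{WN} above), but exploit the fact that after the support-union restriction, the ambient dimension of the regression drops from $p$ to $|S_r|\leq d$. First I would write the $u$-th coordinate of $W^{(K+1)}$ explicitly as
\begin{equation}
W^{(K+1)}_u = \frac{1}{n^{(K+1)}}\sum_{i=1}^{n^{(K+1)}} x^{(K+1)}_{i,u}\Bigl\{x^{(K+1)}_{i,r} - \mathbb{E}_{\bar{\theta}^{(K+1)}}[X_r\mid X_{\setminus r}=x^{(K+1)}_{i,\setminus r}]\Bigr\},\qquad u\in S_r.
\end{equation}
Because the samples $\{x^{(K+1)}_i\}$ are i.i.d.\ from $\mathbb{P}_{\bar{\theta}^{(K+1)}}$, each summand has conditional mean zero given $\Delta^{(K+1)}$, and using $|x_{i,u}|=1$ with $|x_{i,r}-\mathbb{E}[X_r\mid\cdot]|\leq 2$, each summand is bounded in absolute value by $2$.

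Next I would apply Hoeffding's inequality coordinate-wise, conditional on $\Delta^{(K+1)}$. Since each summand lies in $[-2,2]$ (range $4$), this yields
\begin{equation}
\mathbb{P}\bigl[\,|W^{(K+1)}_u|>\delta \,\bigm|\, \Delta^{(K+1)}\bigr] \;\leq\; 2\exp\!\Bigl(-\tfrac{n^{(K+1)}\delta^2}{8}\Bigr).
\end{equation}
The right-hand side does not depend on $\Delta^{(K+1)}$, so the bound holds unconditionally. Taking a union bound over the $|S_r|\leq d$ relevant coordinates gives
\begin{equation}
\mathbb{P}\bigl[\|W^{(K+1)}\|_\infty>\delta\bigr] \;\leq\; 2\exp\!\Bigl(-\tfrac{n^{(K+1)}\delta^2}{8}+\log d\Bigr).
\end{equation}

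To conclude, I would specialize to $\delta = \frac{\alpha^{(K+1)}\lambda^{(K+1)}}{4(2-\alpha^{(K+1)})}$, which is the threshold appearing in the probability event. A direct substitution gives $\delta^2/8 = (\alpha^{(K+1)})^2(\lambda^{(K+1)})^2 / \bigl(128(2-\alpha^{(K+1)})^2\bigr)$, reproducing the constant $128$ in the stated bound. The hypothesis $\lambda^{(K+1)}\geq \frac{16(2-\alpha^{(K+1)})}{\alpha^{(K+1)}}\sqrt{\log d/n^{(K+1)}}$ is exactly what ensures $\frac{n^{(K+1)}\delta^2}{8}\geq 2\log d$, so that the $\log d$ from the union bound is absorbed and one obtains the claimed decay $\exp\!\bigl(-c(\lambda^{(K+1)})^2 n^{(K+1)}\bigr)$.

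I do not expect any real obstacle here; the proof is a textbook Hoeffding-plus-union-bound calculation. The only conceptual point worth flagging is that this lemma is where the $\log p\to\log d$ improvement of the novel-task sample complexity is realized: the union bound runs over the restricted support $S_r$ (size at most $d$) rather than over all $p-1$ covariates, which is the single place the restriction $\operatorname{supp}(\theta^{(K+1)}_{\setminus r})\subseteq\operatorname{supp}(\hat{\theta}_{\setminus r})$ from (\ref{restrict}) pays off inside the primal--dual witness. A minor care point is to condition on $\Delta^{(K+1)}$ before invoking Hoeffding so that the independence of $\{x^{(K+1)}_i\}$ is preserved; since the resulting bound is $\Delta^{(K+1)}$-free it automatically gives the unconditional statement needed for the lemma.
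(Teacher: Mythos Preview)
Your proposal is correct and follows essentially the same approach as the paper's proof: write each coordinate $W^{(K+1)}_u$ as an average of bounded, mean-zero terms (mean zero established via conditioning on $\Delta^{(K+1)}$), apply Hoeffding's inequality to obtain the $2\exp(-n^{(K+1)}\delta^2/8)$ bound, union bound over the at most $d$ coordinates in $S_r$, and then specialize $\delta=\frac{\alpha^{(K+1)}\lambda^{(K+1)}}{4(2-\alpha^{(K+1)})}$. The only cosmetic difference is ordering (the paper sets $\delta$ before the union bound), and your explicit conditioning-then-unconditioning step is if anything slightly cleaner than the paper's invocation of Azuma--Hoeffding after computing the unconditional mean.
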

The proof of this lemma is in Section \ref{2pfWN}
\begin{lemma}\label{2RN} (Control on the remainder term $R^{(K+1)}$).
If $\lambda^{(K+1)} d \leq \frac{{(C_{\min}^{(K+1)})}^2}{100 D_{\max}^{(K+1)}} \frac{\alpha^{(K+1)}}{2-\alpha^{(K+1)}}$ and $\|W^{(K+1)}\|_{\infty} \leq \frac{\lambda^{(K+1)}}{4}$, then 
\begin{equation}
    \frac{\|R^{(K+1)}\|_{\infty}}{\lambda^{(K+1)}} \leq \frac{25D_{\max}^{(K+1)}}{{(C_{\min}^{(K+1)})}^2} \lambda^{(K+1)} d \leq \frac{\alpha^{(K+1)}}{4 (2-\alpha^{(K+1)})}
\end{equation}
\end{lemma}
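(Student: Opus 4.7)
The plan is to prove Lemma \ref{2RN} by mirroring the structure of Lemma \ref{RN}, the proof of which exploits the smoothness of the logistic-regression log-likelihood together with an $\ell_2$-consistency bound on the primal sub-vector. The restriction to $S$ in the novel-task program (\ref{restrict1}) reduces the effective dimension from $p-1$ down to $|S_r|\leq d$, but does not change any of the underlying bookkeeping; the expectations are now over the single novel task rather than an average over $K$ auxiliary tasks, which if anything simplifies the argument.

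First I would apply Taylor's theorem coordinatewise to the remainder defined just before the lemma. Writing the $j$-th entry of $R^{(K+1)}$ as a quadratic form in $u := \hat\theta^{(K+1)}_{S^{(K+1)}} - \bar\theta^{(K+1)}_{S^{(K+1)}}$, a direct computation using the explicit form of the logistic negative log-likelihood shows that the corresponding Hessian kernel is an empirical average of $\eta'(\tilde\theta) \cdot x^{(K+1)}_{i,S} {(x^{(K+1)}_{i,S})}^T$ type matrices evaluated at an intermediate point on the segment between $\bar\theta^{(K+1)}_S$ and $\hat\theta^{(K+1)}_S$. Since $\eta$ is the logistic variance function from (\ref{calcQ}), its derivative is bounded in absolute value by a numerical constant (at most $1$ suffices), and hence the spectral norm of this kernel is controlled by $\Lambda_{\max}\!\bigl(\tfrac{1}{n^{(K+1)}} \sum_{i=1}^{n^{(K+1)}} x^{(K+1)}_{i,S}{(x^{(K+1)}_{i,S})}^T\bigr)$, which by (\ref{2maxeigneq}) of Lemma \ref{2mineigen} is at most $D_{\max}^{(K+1)}$ on the good event. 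Consequently $|R_j^{(K+1)}|\leq D_{\max}^{(K+1)}\,\|u\|_2^{\,2}$ uniformly in $j$.

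Next I would invoke the direct analog of Lemma \ref{l2cons} restricted to the novel task: under the hypothesis $\lambda^{(K+1)} d \leq (C_{\min}^{(K+1)})^2 / (10\,D_{\max}^{(K+1)})$ (which is implied by the stronger hypothesis assumed in the present lemma) together with $\|W^{(K+1)}\|_\infty\leq \lambda^{(K+1)}/4$, one obtains
\begin{equation*}
\|\hat\theta^{(K+1)}_{S^{(K+1)}} - \bar\theta^{(K+1)}_{S^{(K+1)}}\|_2 \;\leq\; \frac{5}{C_{\min}^{(K+1)}}\sqrt{d}\,\lambda^{(K+1)} .
\end{equation*}
This analog is proved exactly as Lemma \ref{l2cons}, since once we restrict to $S$ the novel-task problem is just an ordinary single-task $\ell_1$-regularized logistic regression in dimension $|S_r|\leq d$. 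Squaring, substituting into $|R_j^{(K+1)}|\leq D_{\max}^{(K+1)}\|u\|_2^{\,2}$, taking the $\ell_\infty$ norm over $j$ and dividing by $\lambda^{(K+1)}$ yields the first inequality $\|R^{(K+1)}\|_\infty / \lambda^{(K+1)} \leq 25\, D_{\max}^{(K+1)}/(C_{\min}^{(K+1)})^2 \cdot \lambda^{(K+1)} d$ claimed in (\ref{2RN}).

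The second inequality is then a purely algebraic consequence of the hypothesis $\lambda^{(K+1)} d \leq (C_{\min}^{(K+1)})^2 /(100\,D_{\max}^{(K+1)}) \cdot \alpha^{(K+1)}/(2-\alpha^{(K+1)})$: the constants collapse as $25/100 = 1/4$, giving exactly $\alpha^{(K+1)}/(4(2-\alpha^{(K+1)}))$. The only mild obstacle is really verifying that the spectral bound on the intermediate Hessian kernel can be cleanly reduced to $D_{\max}^{(K+1)}$ without accumulating extra constants; this is handled by noting that $\eta(\cdot)\leq 1$ and $|\eta'(\cdot)|\leq 1$, so the elementwise envelope argument used in the Ravikumar et al. proof of Lemma \ref{RN} transfers line by line, with $S$ in place of the full index set. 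No new probabilistic input is required beyond what has already been absorbed into the good-event definitions of Sections \ref{2uniconv} and \ref{2dualfeas}.
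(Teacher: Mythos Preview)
Your proposal is correct and follows essentially the same approach as the paper: Taylor-expand $R_j^{(K+1)}$ via the mean value theorem, bound the derivative of the logistic variance function by $1$ to get $\|R^{(K+1)}\|_\infty \le D_{\max}^{(K+1)}\|\hat\theta^{(K+1)}_{S^{(K+1)}}-\bar\theta^{(K+1)}_{S^{(K+1)}}\|_2^2$, then invoke the novel-task $\ell_2$-consistency bound (which the paper states separately as Lemma~\ref{2l2cons}) and finish with the algebraic step $25/100=1/4$. The only cosmetic difference is that the paper writes out the $a_i,b_i$ decomposition explicitly rather than phrasing it as a spectral bound on the intermediate Hessian kernel.
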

The proof of this lemma is in Section \ref{2pfRN}.

Next, applying Lemmas \ref{2WN} and \ref{2RN}, we have the \emph{strict dual feasibility} as

\begin{align*}
    \| \hat{z}^{(K+1), S}_{{[S^{(K+1)}]}^c}\|_{\infty} & \leq (1-\alpha^{(K+1)}) + \frac{\alpha^{(K+1)}}{4} + \frac{\alpha^{(K+1)}}{4} \\
     & =  1 - \frac{\alpha^{(K+1)}}{2},
\end{align*}
with probability converging to one.

\paragraph{Correct Sign Recovery.}
For the statement of \emph{correct sign recovery} in Proposition \ref{prop2}, we show here that our primal sub-vector $\hat{\theta}^{(K+1)}_{S^{(K+1)}}$ defined by (\ref{2defhatthetas}) satisfies sign consistency $\text{sign}(\hat{\theta}^{(K+1)}_{S^{(K+1)}}) = \text{sign}(\bar{\theta}^{(K+1)}_{S^{(K+1)}})$, which suffices to show that 

\begin{equation*}
    \|\hat{\theta}^{(K+1)}_{S^{(K+1)}} - \bar{\theta}^{(K+1)}_{S^{(K+1)}}\|_{\infty} \leq \frac{\bar{\theta}^{(K+1)}_{\min}}{2},
\end{equation*}
where $\bar{\theta}^{(K+1)}_{\min} := \min_{(r,t) \in S^{(K+1)}}|\bar{\theta}^{(K+1)}_{rt}|$. The following lemma is used in the proof here, which establishes that the sub-vector $\hat{\theta}^{(K+1)}_{S^{(K+1)}}$ is an $\ell_2$-consistent estimate of the true common sub-vector $\bar{\theta}^{(K+1)}_{S^{(K+1)}}$.

\begin{lemma}\label{2l2cons}
($\ell_2$-consistency of primal sub-vector). If $\lambda^{(K+1)} d \leq \frac{C^2_{\min}}{10 {D_{\max}^{(K+1)}}}$ and $\|W^{(K+1)}\|_{\infty} \leq \frac{\lambda^{(K+1)}}{4}$, then 
\begin{equation}
    \|\hat{\theta}^{(K+1)}_{S^{(K+1)}} - \bar{\theta}^{(K+1)}_{S^{(K+1)}} \|_2 \leq {5}{{C_{\min}^{(K+1)}}}\sqrt{d}\lambda^{(K+1)}
\end{equation}
\end{lemma}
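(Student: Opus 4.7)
My plan is to adapt the standard $\ell_2$-consistency argument of Lemma \ref{l2cons} to the dimension-reduced novel-task setting (the intended bound is $\tfrac{5}{C_{\min}^{(K+1)}}\sqrt{d}\,\lambda^{(K+1)}$, matching the auxiliary-task case). Because the primal-dual witness construction forces $\hat\theta^{(K+1)}_{[S^{(K+1)}]^c}=0$ and $S^{(K+1)} \subseteq S$, the vector $\hat u := \hat\theta^{(K+1)}_{S^{(K+1)}} - \bar\theta^{(K+1)}_{S^{(K+1)}}$ is the unconstrained minimizer of the convex function
\begin{equation*}
G(v) := \ell^{(K+1)}\bigl(\bar\theta^{(K+1)}_{S^{(K+1)}}+v\bigr) - \ell^{(K+1)}\bigl(\bar\theta^{(K+1)}_{S^{(K+1)}}\bigr) + \lambda^{(K+1)}\bigl(\|\bar\theta^{(K+1)}_{S^{(K+1)}}+v\|_1 - \|\bar\theta^{(K+1)}_{S^{(K+1)}}\|_1\bigr),
\end{equation*}
over $v\in\mathbb{R}^{|S^{(K+1)}|}$, so $G(\hat u) \le G(0) = 0$. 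By convexity, it suffices to exhibit $B>0$ such that $G(v) > 0$ on the entire sphere $\|v\|_2 = B$, which then forces $\|\hat u\|_2 \le B$. I will take $B := \tfrac{5}{C_{\min}^{(K+1)}}\sqrt{d}\,\lambda^{(K+1)}$.

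Next I lower bound $G(v)$ on $\|v\|_2 = B$ via a mean-value Taylor expansion, which splits $G(v)$ into a linear, an $\ell_1$, and a quadratic piece. The linear piece is bounded using the hypothesis and $d$-sparsity of $v$:
$|(W^{(K+1)})^\top v| \le \|W^{(K+1)}\|_\infty \|v\|_1 \le \tfrac{\lambda^{(K+1)}}{4}\sqrt{d}\,\|v\|_2$, and the reverse triangle inequality gives $\lambda^{(K+1)}(\|\bar\theta_S+v\|_1 - \|\bar\theta_S\|_1) \ge -\lambda^{(K+1)}\sqrt{d}\,\|v\|_2$. For the quadratic form at the intermediate point, I add and subtract $Q^{(K+1)}_{SS}$: by Assumption \ref{novela1} imposed on the sample Fisher information matrix, the first piece contributes at least $C_{\min}^{(K+1)}\|v\|_2^2$, and the discrepancy is controlled through the Lipschitz-in-parameter property of the variance function $\eta$. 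Since $|x_r|,|x_t|\le 1$, a direct computation of $\eta'$ yields $|\eta(x;\bar\theta_S+\alpha v)-\eta(x;\bar\theta_S)| \le c_1 \|v\|_1$ for a universal constant $c_1$, and then the covariance bound in Assumption \ref{novela1} gives
\begin{equation*}
\bigl|v^\top[\nabla^2\ell^{(K+1)}(\bar\theta_S + \alpha v)_{SS} - Q^{(K+1)}_{SS}]v\bigr| \le c_1 D_{\max}^{(K+1)}\sqrt{d}\,\|v\|_2^3.
\end{equation*}
Under the hypothesis $\lambda^{(K+1)} d \le (C_{\min}^{(K+1)})^2/(10 D_{\max}^{(K+1)})$ (with the constant $c_1$ absorbed), this cubic error is at most $\tfrac{1}{2}C_{\min}^{(K+1)}\|v\|_2^2$ on $\|v\|_2 = B$, so the quadratic term is at least $\tfrac{1}{2}C_{\min}^{(K+1)}\|v\|_2^2$. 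Assembling,
\begin{equation*}
G(v) \;\ge\; -\tfrac{5}{4}\lambda^{(K+1)}\sqrt{d}\,B + \tfrac{1}{4}C_{\min}^{(K+1)} B^2 \;=\; \tfrac{B}{4}\bigl(C_{\min}^{(K+1)} B - 5\sqrt{d}\,\lambda^{(K+1)}\bigr),
\end{equation*}
which vanishes precisely at the chosen $B$ and is strictly positive for $\|v\|_2$ slightly larger, delivering $\|\hat u\|_2 \le B$ by the standard convex squeeze.

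The main obstacle is the uniform Hessian control along the segment from $\bar\theta^{(K+1)}_{S^{(K+1)}}$ to $\hat\theta^{(K+1)}_{S^{(K+1)}}$. The hypothesis $\lambda^{(K+1)} d \le (C_{\min}^{(K+1)})^2/(10 D_{\max}^{(K+1)})$ is tuned exactly so that the cubic-in-$\|v\|_2$ error produced by the Lipschitz control on $\eta$ is dominated by the leading quadratic contribution $C_{\min}^{(K+1)}\|v\|_2^2$ on the ball of radius $B$; once that domination is in hand, the rest is routine bookkeeping parallel to Lemma \ref{l2cons}, simply with the ambient dimension replaced by $|S^{(K+1)}| \le d$.
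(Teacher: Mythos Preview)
Your proposal is correct and follows essentially the same route as the paper: define the convex shifted objective $G$, argue by convexity that it suffices to show $G>0$ on a sphere of radius $B=\tfrac{5}{C_{\min}^{(K+1)}}\sqrt{d}\,\lambda^{(K+1)}$, Taylor-expand into linear/$\ell_1$/quadratic pieces, and control the Hessian along the segment via the Lipschitz property of $\eta$ together with the sample eigenvalue and covariance bounds. The only blemish is an arithmetic slip in your final assembly: the quadratic piece you derived is $\tfrac{1}{2}C_{\min}^{(K+1)}\|v\|_2^2$, not $\tfrac{1}{4}C_{\min}^{(K+1)}\|v\|_2^2$, so in fact $G(v)\ge \tfrac{B}{2}\bigl(C_{\min}^{(K+1)}B-\tfrac{5}{2}\sqrt{d}\,\lambda^{(K+1)}\bigr)>0$ strictly at the chosen $B$, and the squeeze is unnecessary.
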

The proof of this lemma is in Section \ref{2pfl2cons}.

By Lemma \ref{2l2cons}, we can write 
\begin{equation*}
\begin{split}
    \frac{2}{\bar{\theta}^{(K+1)}_{\min}}\|\hat{\theta}^{(K+1)}_{S^{(K+1)}} - \bar{\theta}^{(K+1)}_{S^{(K+1)}}\|_{\infty} &\leq \frac{2}{\bar{\theta}^{(K+1)}_{\min}}\|\hat{\theta}^{(K+1)}_{S^{(K+1)}}- \bar{\theta}^{(K+1)}_{S^{(K+1)}}\|_2\\
    & \leq \frac{2}{\bar{\theta}^{(K+1)}_{\min}}\frac{5}{{C_{\min}^{(K+1)}}}\sqrt{d}\lambda^{(K+1)},    
\end{split}
\end{equation*}
which is less than 1 as long as $| \bar{\theta}^{(K+1)}_{rt}| \geq \frac{10}{{C_{\min}^{(K+1)}}}\sqrt{d} \lambda^{(K+1)}$.

Then we can use the uniform convergence of sample information matrices (in Section \ref{2uniconv}) and Proposition \ref{prop2} (from Section \ref{2from_sample}) to finish the proof of Theorem \ref{thm: thm2}.

\section{Proof of Lemmas for Theorem \ref{thm: thm1}}
\subsection{Proof of Lemmas for Uniform Convergence of Sample Information Matrices in Auxiliary Tasks}

\subsubsection{Proof of Lemma \ref{mineigen}} \label{pfmineigen}
\begin{proof}

The $(j,l)^{th}$ element of the difference matrix $Q^N(\bar{\theta}) - \bar{Q}(\bar{\theta})$ can be written as an i.i.d. sum of the form $Z_{jl} = \frac{1}{K} \sum^K_{k=1} \frac{1}{n} \sum^n_{i=1} Z_{jl,i}^{(k)} $, where each $Z_{jl,i}^{(k)}$ is zero-mean and bounded (in particular, $|Z_{jl,i}^{(k)}| \leq 4$). By the Azuma-Hoeffding's bound \cite{hoeffding1994probability}, for any indices $j,l = 1,...,d$ and for any $\varepsilon > 0$, we have 
\begin{equation} \label{QZ}
    \mathbb{P}[(Z_{jl})^2 \geq \varepsilon^2] = \mathbb{P}\big[ |\frac{1}{K} \sum^K_{k=1} \frac{1}{n} \sum^n_{i=1} Z_{jl,i}^{(k)}| \geq \varepsilon\big] \leq 2 \exp \left( -\frac{\varepsilon^2 nK}{32}\right) .
\end{equation}
By the Courant-Fischer variational representation \cite{horn2012matrix},
\begin{align*}
    \Lambda_{\min}({\bar{Q}_{SS}}) & = \min_{\|x\|_2 = 1}x^T\bar{Q}_{SS} x 
    \\ & =  \min_{\|x\|_2 = 1} \{x^T Q^N_{SS} x + x^T ( \bar{Q}_{SS} - Q^N_{SS}) x\}
    \\ & \leq y^T Q^N_{SS} y + y^T ( \bar{Q}_{SS} - Q^N_{SS}) y,
\end{align*}

where  $y \in \mathbb{R}^d$ is a unit-norm minimal eigenvector of $Q^N_{SS}$. Therefore, we have 
\begin{equation*}
    \Lambda_{\min}(Q^N_{SS}) \geq \Lambda_{\min}({\bar{Q}_{SS}}) - \vertiii{\bar{Q}_{SS} - Q^N_{SS}}_2 \geq C_{\min} - \vertiii{\bar{Q}_{SS} - Q^N_{SS}}_2.
\end{equation*}
Observe that 
\begin{equation*}
    \vertiii{Q^N_{SS} - \bar{Q}_{SS}}_2 \leq \left( \sum^{d}_{j=1} \sum^{d}_{l=1} {(Z_{jl})}^2\right)^{1/2}.
\end{equation*}
Setting $\varepsilon^2 = \delta^2/d^2$ in (\ref{QZ}) and applying the union bound over the $d^2$ index pairs $(j,l)$ then yields 
\begin{equation} \label{usebd1}
    \mathbb{P}[\vertiii{Q^N_{SS} - \bar{Q}_{SS}}_2 \geq \delta] \leq 2 \exp \left( -\frac{\delta^2nK}{32 d^2}+2 \log (d)\right).
\end{equation}
So, we have the first concentration inequality in Lemma \ref{pfmineigen}:
\begin{equation}
    \mathbb{P}[\Lambda_{\min}(Q^N_{SS}) \leq C_{\min} - \delta] \leq 2 \exp \left(  -\frac{\delta^2nK}{32 d^2}+2 \log (d)\right).
\end{equation}
Now, for the second concentration inequality about maximum eigenvalue of the sample covariance matrix, with the same reasoning from the Courant-Fischer variational representation \cite{horn2012matrix}, we have, for $1 \leq k \leq K,$
\begin{align*}
    \Lambda_{\max}(\mathbb{E}[\frac{1}{K} \sum^K_{k=1}X^{(k)}_{\setminus r}{(X^{(k)}_{\setminus r})}^T]) & = \max_{\|v\|_2=1} v^T \mathbb{E}[\frac{1}{K} \sum^K_{k=1}X^{(k)}_{\setminus r}{(X^{(k)}_{\setminus r})}^T] v
    \\ & =\max_{\|v\|_2=1} \{ v^T (\frac{1}{K} \sum^K_{k=1}\frac{1}{n} \sum^n_{i=1} x^{(k)}_{i, \setminus r} (x^{(k)}_{i, \setminus r})^T) v 
    \\& + v^T (\mathbb{E}[\frac{1}{K} \sum^K_{k=1}X^{(k)}_{\setminus r}{(X^{(k)}_{\setminus r})}^T] - \frac{1}{K} \sum^K_{k=1}\frac{1}{n} \sum^n_{i=1} x^{(k)}_{i, \setminus r} (x^{(k)}_{i, \setminus r})^T) v \} \\& \geq u^T (\frac{1}{K} \sum^K_{k=1}\frac{1}{n} \sum^n_{i=1} x^{(k)}_{i, \setminus r} (x^{(k)}_{i, \setminus r})^T) u \\&+ u^T (\mathbb{E}[\frac{1}{K} \sum^K_{k=1}X^{(k)}_{\setminus r}{(X^{(k)}_{\setminus r})}^T]
     - \frac{1}{K} \sum^K_{k=1}\frac{1}{n} \sum^n_{i=1} x^{(k)}_{i, \setminus r} (x^{(k)}_{i, \setminus r})^T) u,
\end{align*}
where $u \in \mathbb{R}^d$ is a unit-norm maximal eigenvector of $\frac{1}{K} \sum^K_{k=1}\frac{1}{n} \sum^n_{i=1} x^{(k)}_{i, \setminus r} (x^{(k)}_{i, \setminus r})^T$. Therefore, we have 

\begin{align*}
& \Lambda_{\max} (\frac{1}{K} \sum^K_{k=1}\frac{1}{n} \sum^n_{i=1} x^{(k)}_{i, \setminus r} (x^{(k)}_{i, \setminus r})^T) \\& \leq \Lambda_{\max}(\mathbb{E}[\frac{1}{K} \sum^K_{k=1}X^{(k)}_{\setminus r}{(X^{(k)}_{\setminus r})}^T])  + u^T (\frac{1}{K} \sum^K_{k=1}\frac{1}{n} \sum^n_{i=1} x^{(k)}_{i, \setminus r} (x^{(k)}_{i, \setminus r})^T - \mathbb{E}[\frac{1}{K} \sum^K_{k=1}X^{(k)}_{\setminus r}{(X^{(k)}_{\setminus r})}^T]) u  \\& \leq D_{\max} + \vertiii{(\frac{1}{K} \sum^K_{k=1}\frac{1}{n} \sum^n_{i=1} x^{(k)}_{i, \setminus r} (x^{(k)}_{i, \setminus r})^T - \mathbb{E}[\frac{1}{K} \sum^K_{k=1}X^{(k)}_{\setminus r}{(X^{(k)}_{\setminus r})}^T])}_2.
\end{align*}

The difference matrix $\frac{1}{K} \sum^K_{k=1}\frac{1}{n} \sum^n_{i=1} x^{(k)}_{i, \setminus r} (x^{(k)}_{i, \setminus r})^T - \mathbb{E}[\frac{1}{K} \sum^K_{k=1}X^{(k)}_{\setminus r}{(X^{(k)}_{\setminus r})}^T]$ can be written as an i.i.d. sum of the form $Y_{jl} = \frac{1}{K} \sum^K_{k=1}\frac{1}{n} \sum^n_{i=1} Y_{jl,i}^{(k)}$, where each $Y_{jl,i}^{(k)}$ is zero-mean and bounded (in particular, $|Y_{jl,i}^{(k)}| \leq 4$). By the Azuma-Hoeffding's bound \cite{hoeffding1994probability}, for any indices $j,l = 1,...,d$ and for any $\varepsilon > 0$, we have
\begin{equation}\label{EXY}
    \mathbb{P}[(Y_{jl})^2 \geq \varepsilon^2] = \mathbb{P}\big[ |\frac{1}{K} \sum^K_{k=1} \frac{1}{n} \sum^n_{i=1} Y_{jl,i}| \geq \varepsilon\big] \leq 2 \exp \left( -\frac{\varepsilon^2 nK}{32}\right).
\end{equation}
Observe that $$\vertiii{\frac{1}{K} \sum^K_{k=1}\frac{1}{n} \sum^n_{i=1} x^{(k)}_{i, \setminus r} (x^{(k)}_{i, \setminus r})^T - \mathbb{E}[\frac{1}{K} \sum^K_{k=1}X^{(k)}_{\setminus r}{(X^{(k)}_{\setminus r})}^T]}_2 \leq \left(\sum^d_{j=1} \sum^d_{l=1} (Y_{jl})^2\right)^{1/2}.$$
Setting $\varepsilon^2 = \delta^2/d^2$ in (\ref{EXY}), and applying the union bound over the $d^2$ index pairs $(j,l)$ then yields $$\mathbb{P}\left[\vertiii{\frac{1}{K} \sum^K_{k=1}\frac{1}{n} \sum^n_{i=1} x^{(k)}_{i, \setminus r} (x^{(k)}_{i, \setminus r})^T - \mathbb{E}[\frac{1}{K} \sum^K_{k=1}X^{(k)}_{\setminus r}{(X^{(k)}_{\setminus r})}^T]}_2 \geq \delta\right] \leq 2 \exp \left( -\frac{\delta^2nK}{32 d^2}+2 \log (d)\right).$$
So we have $$\mathbb{P} \big[  \Lambda_{\max} \big[ \frac{1}{n} \sum^n_{i=1} x^{(k)}_{i, \setminus r} (x^{(k)}_{i, \setminus r})^T)\big] \geq D_{\max} + \delta \big] \leq 2 \exp \left( -\frac{\delta^2nK}{32 d^2}+2 \log (d)\right)$$ as stated in the lemma.
\end{proof}

\subsubsection{Proof of Lemma \ref{incoh}}\label{pfincoh}
We begin the proof of this lemma by decomposing the sample matrix as the sum $Q^N_{S^cS}{(Q^N_{SS})}^{-1} = T_1 + T_2 + T_3 + T_4,$ where we define 
\begin{subequations}
\begin{align}
T_1 &:= \bar{Q}_{S^cS}[{(Q^N_{SS})}^{-1} - {(\bar{Q}_{SS})}^{-1}],\\
T_2 &:= [Q^N_{S^cS} - \bar{Q}_{S^cS}]{(\bar{Q}_{SS})}^{-1},\\
T_3 &:= [Q^N_{S^cS} - \bar{Q}_{S^cS}][{(Q^N_{SS})}^{-1} - {(\bar{Q}_{SS})}^{-1}],\\
T_4 &:= \bar{Q}_{S^cS}{(\bar{Q}_{SS})}^{-1}.
\end{align}
\end{subequations}
The fourth term is controlled by the incoherence assumption in Assumption \ref{auxa1}: $$\vertiii{T_4}_{\infty} = \vertiii{\bar{Q}_{S^cS}{(\bar{Q}_{SS})}^{-1}}_{\infty} \leq 1-\alpha.$$ If we can show that $\vertiii{T_i}_{\infty} \leq \frac{\alpha}{6}$ for the remaining indices $i=1,2,3$, then by our four-term decomposition and the triangle inequality, the sample version can satisfy the desired bound (\ref{QTbound}). To deal with these remaining terms, we make use of the following lemma:

\begin{lemma}\label{lemma of lemmas}
For any $\delta > 0$, and constants $B, B_1, B_2$, the following bounds hold 
\begin{subequations}
\begin{align}
    \mathbb{P}[\vertiii{Q^N_{S^cS} - \bar{Q}_{S^cS}}_{\infty} \geq \delta] & \leq 2 \exp \left( -B\frac{\delta^2nK}{d^2} + \log(d) + \log(p) \right),\label{eq:subeq1}\\
    \mathbb{P}[\vertiii{Q^N_{SS} - \bar{Q}_{SS}}_{\infty} \geq \delta] &\leq 2 \exp \left( -B\frac{\delta^2 nK}{d^2} + 2\log(d) \right),\label{eq:subeq2}\\
    \mathbb{P}[\vertiii{{(Q^N_{SS})}^{-1} - {(\bar{Q}_{SS})}^{-1}}_{\infty}  \geq \delta] &\leq 4 \exp \left( -B_1\frac{nK \delta^2}{d^3} + B_2 \log (d)\right).\label{eq:subeq3}
\end{align}
\end{subequations}
\end{lemma}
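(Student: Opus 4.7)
The plan is to reduce all three bounds to entry-wise concentration of sums of bounded, mean-zero random variables, combined with judicious union bounds, and for part (c) with a standard matrix-perturbation identity for the inverse.

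For parts (a) and (b), each entry of the difference matrix equals
\begin{equation*}
(Q^N - \bar{Q})_{jl} = \frac{1}{K}\sum_{k=1}^{K}\frac{1}{n}\sum_{i=1}^{n}\bigl( \eta(x_i^{(k)};\bar{\theta})\, x_{i,j}^{(k)} x_{i,l}^{(k)} - \mathbb{E}[\eta(X^{(k)};\bar{\theta})\, X_j^{(k)} X_l^{(k)}] \bigr),
\end{equation*}
whose summands are bounded (by $8$, since $\eta \leq 4$ and $|x|=1$) and mean-zero unconditionally. Applying LCI Hoeffding \cite{ke2019exact} over the two-level i.i.d.\ structure (cross-task independence of $\Delta^{(k)}$'s, within-task conditional independence of samples given $\Delta^{(k)}$), I get $\mathbb{P}[|(Q^N-\bar{Q})_{jl}| \geq \varepsilon] \leq 2\exp(-C\varepsilon^2 nK)$ for each $(j,l)$. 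Since $\vertiii{M}_{\infty}$ is the maximum row sum of absolute entries and each row in play has at most $d = |S_r|$ nonzero column indices (the column index lies in $S_r$), I set $\varepsilon = \delta/d$ per entry. Union-bounding over the $|S_r^c|\cdot |S_r| \leq pd$ entries of $Q^N_{S^cS}$ gives (a); union-bounding over the $d^2$ entries of $Q^N_{SS}$ gives (b).

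For part (c), I use the identity $A^{-1}-B^{-1} = A^{-1}(B-A)B^{-1}$ with $A = Q^N_{SS}$, $B = \bar{Q}_{SS}$. At the outer step I convert back to the $\ell_\infty$-operator norm via $\vertiii{M}_{\infty}\leq \sqrt{d}\,\vertiii{M}_2$; at the inner step I apply $\vertiii{\cdot}_2$-submultiplicativity, bounding $\vertiii{\bar{Q}_{SS}^{-1}}_2 \leq 1/C_{\min}$ from Assumption \ref{auxa1} and $\vertiii{(Q^N_{SS})^{-1}}_2 \leq 2/C_{\min}$ on the high-probability event from Lemma \ref{mineigen} (choosing $\delta = C_{\min}/2$ there, with failure probability at most $2\exp(-C\,nK/d^2 + B\log d)$). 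Finally I bound $\vertiii{Q^N_{SS}-\bar{Q}_{SS}}_2 \leq \vertiii{Q^N_{SS}-\bar{Q}_{SS}}_F \leq d\cdot \max_{jl}|(Q^N_{SS}-\bar{Q}_{SS})_{jl}|$, yielding
\begin{equation*}
\vertiii{(Q^N_{SS})^{-1} - (\bar{Q}_{SS})^{-1}}_{\infty} \leq \frac{2\,d^{3/2}}{C_{\min}^2}\,\max_{jl}\bigl|(Q^N_{SS}-\bar{Q}_{SS})_{jl}\bigr|.
\end{equation*}
Requiring the left side to be at most $\delta$ forces each entry to be at most a constant times $\delta/d^{3/2}$; the per-entry tail $2\exp(-C(\delta/d^{3/2})^2 nK)$ and a union bound over $d^2$ entries yield $2\exp(-B_1 nK\delta^2/d^3 + 2\log d)$, and adding the failure probability from Lemma \ref{mineigen} (of smaller order) produces the factor $4$ in (c).

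The main obstacle I expect is the proper handling of the latent $\Delta^{(k)}$'s in the concentration step: samples within a task are only conditionally i.i.d.\ given $\Delta^{(k)}$, and $\mathbb{E}[\eta(X^{(k)};\bar{\theta})X_j^{(k)}X_l^{(k)}\mid \Delta^{(k)}]$ is generally not equal to the unconditional mean. I plan to address this exactly as in the proof sketch for Theorem \ref{thm: thm1}, namely by splitting $(Q^N-\bar{Q})_{jl}$ into a conditionally-centered piece (bounded at rate $\exp(-\varepsilon^2 nK)$ by Hoeffding applied conditionally on $\{\Delta^{(k)}\}$) plus a task-level piece $\frac{1}{K}\sum_k \mu_k$ with $\mu_k := \mathbb{E}[\cdot\mid \Delta^{(k)}]$ i.i.d.\ and bounded across $k$, and invoking LCI Hoeffding to retain the $nK$ factor throughout. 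The remainder of the argument is the routine union-bound bookkeeping sketched above.
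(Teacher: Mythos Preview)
Your overall approach matches the paper's in all three parts. For \eqref{eq:subeq1} and \eqref{eq:subeq2} the paper likewise reduces to the per-entry Hoeffding bound (its display \eqref{QZ}), sets $\varepsilon=\delta/d$, and union-bounds over the $pd$ (respectively $d^2$) relevant entries. For \eqref{eq:subeq3} the paper uses the same resolvent identity, the same conversion $\vertiii{\cdot}_\infty\le\sqrt d\,\vertiii{\cdot}_2$, submultiplicativity, the bound $\vertiii{(Q^N_{SS})^{-1}}_2\le 2/C_{\min}$ from Lemma~\ref{mineigen}, and its Frobenius-based spectral estimate \eqref{usebd1} on $\vertiii{Q^N_{SS}-\bar Q_{SS}}_2$; your route via $\vertiii{\cdot}_F\le d\max_{jl}|\cdot|$ is the same content, just not pre-packaged.

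There is, however, a genuine gap at exactly the point you flag. The paper's proof of \eqref{QZ} simply calls the double sum ``an i.i.d.\ sum'' and invokes Azuma--Hoeffding without addressing the latent $\Delta^{(k)}$'s. Your split into a conditionally-centered piece plus a task-level piece $\tfrac1K\sum_k\mu_k$ is the correct diagnosis, but it does \emph{not} recover the $nK$ rate: the task-level piece is a sum of only $K$ i.i.d.\ bounded mean-zero terms (each $\mu_k$ a function of $\Delta^{(k)}$ alone), so Hoeffding gives at best $\exp(-c\varepsilon^2K)$, and LCI Hoeffding cannot upgrade this because its hypothesis is that the summands be \emph{conditionally} mean-zero given the latent variables --- which the $\mu_k$, by construction, are not. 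Indeed, for fixed $K$ and $n\to\infty$ the entry $(Q^N-\bar Q)_{jl}$ converges to $\tfrac1K\sum_k\mu_k$, whose fluctuations are of order $K^{-1/2}$ regardless of $n$, so the claimed $\exp(-c\varepsilon^2 nK)$ tail is not achievable in general. Thus neither your argument nor the paper's, as written, justifies the $nK$ exponent in the lemma; the honest rate is $\exp(-B\delta^2K/d^2+\cdots)$. This does not affect Theorem~\ref{thm: thm1} in the intended regime $n=\Omega(1)$, $K=\Omega(d^3\log p)$, but it is a real discrepancy with the lemma's stated exponent.
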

See Section \ref{pflmlm} for the proof of these claims.

\paragraph{Control of the first term}

For the first term, we re-factorize it as 
\begin{equation*}
    T_1 = \bar{Q}_{S^cS}{(\bar{Q}_{SS})}^{-1}[\bar{Q}_{SS} - Q^N_{SS}]{(Q^N_{SS})}^{-1}.
\end{equation*}
Then,
\begin{align*}
    \vertiii{T_1}_{\infty} & \leq \vertiii{\bar{Q}_{S^cS}{(\bar{Q}_{SS})}^{-1}}_{\infty}\vertiii{\bar{Q}_{SS} - Q^N_{SS}}_{\infty}\vertiii{{(Q^N_{SS})}^{-1}}_{\infty}
     \\ & \leq (1-\alpha)\vertiii{\bar{Q}_{SS} - Q^N_{SS}}_{\infty} \{\sqrt{d} \vertiii{{(Q^N_{SS})}^{-1}}_2\},
\end{align*}
where we have used the incoherence assumption in Assumption \ref{auxa1}. Using the bound (\ref{bdmaxeigen}) from Lemma (\ref{mineigen}) with $\delta = C_{\min}/2$, we have $\vertiii{{(Q^N_{SS})}^{-1}}_2  = {[\Lambda_{\min}(Q^N_{SS})]}^{-1} \leq \frac{2}{C_{\min}} $ with probability greater than $ 1- 2\exp ( -BnK/d^2+2 \log (d)).$ Next, applying the bound (\ref{eq:subeq2}) with $\delta = c / \sqrt{d},$ we conclude that with probability greater than $1-2\exp(-BnKc^2/d^3+2\log(d)),$ we have $$\vertiii{\bar{Q}_{SS} - Q^N_{SS}}_{\infty} \leq c/\sqrt{d}.$$
By choosing the constant $c > 0$ sufficiently small, we are guaranteed that $$\mathbb{P}[\vertiii{T_1}_{\infty} \geq \alpha/6] \leq 2 \exp \left(-B\frac{nKc^2}{d^3} + \log(d) \right).$$
\paragraph{Control of the second term}
To bound $T_2$, we first write 
\begin{align*}
    \vertiii{T_2}_{\infty} &\leq \sqrt{d}\vertiii{{(\bar{Q}_{SS})}^{-1}}_2 \vertiii{Q^N_{S^cS} - \bar{Q}_{S^cS}}_{\infty}
    \\& \leq \frac{\sqrt{d}}{C_{\min}} \vertiii{Q^N_{S^cS} - \bar{Q}_{S^cS}}_{\infty}.
\end{align*}
Then apply the bound (\ref{eq:subeq1}) with $\delta = \frac{\alpha}{6}\frac{C_{\min}}{\sqrt{d}}$ to conclude that $$\mathbb{P}[\vertiii{T_2}_{\infty} \geq \alpha/6] \leq 2 \exp \left(-B\frac{nK}{d^3} + \log(p) \right).$$

\paragraph{Control of the third term}
Finally, in order to bound the third term $T_3$, we apply the bounds (\ref{eq:subeq1}) and (\ref{eq:subeq3}), both with $\delta = \sqrt{\alpha/6} $ and use the fact that $\log d \leq \log p$ to conclude that 
\begin{equation}
    \mathbb{P}[\vertiii{T_3}_{\infty} \geq \alpha/6] \leq 4 \exp \left(-B\frac{nK}{d^3} + \log(p) \right).
\end{equation}
Putting together the four pieces, we conclude that $$\mathbb{P}\left[\vertiii{Q^N_{S^cS}{(Q^N_{SS})}^{-1}}_{\infty} \geq 1- \alpha/2\right] = \mathcal{O}\left( \exp\left(-B\frac{nK}{d^3}+ \log (p)\right)\right)$$
\subsection{Proof of Lemmas for Proposition \ref{prop1}}
\subsubsection{Proof of Lemma \ref{WN}}\label{pfWN}
\begin{proof}
By definition of $W^N$ (see (\ref{defWN})), we have
\begin{equation}
    \|W^N\|_{\infty} = \|\nabla \ell(\bar{\theta};\{\mathfrak{X}^n_1\}^K_1) \|_{\infty} = \|\frac{1}{K} \sum^K_{k=1} \nabla \ell^{(k)}(\bar{\theta}; \{\mathfrak{X}^n_1\}^K_1 ) \|_{\infty}.
\end{equation}
which can be decompose into two parts as follows 
\begin{multline}
    \|\nabla \ell(\bar{\theta};\mathfrak{X}^n_1\}^K_1) \|_{\infty} \\  \leq \|\underbrace{ \frac{1}{K} \sum^K_{k=1} \Big\{ \nabla \ell^{(k)}(\bar{\theta}; \{\mathfrak{X}^n_1\}^{(k)}) - \nabla \ell^{(k)}(\bar{\theta}^{(k)}; \{\mathfrak{X}^n_1\}^{(k)})\Big\}}_\text{$Y_1$}  \|_{\infty}  + \| \underbrace{\frac{1}{K} \sum^K_{k=1} \nabla \ell^{(k)}(\bar{\theta}^{(k)}; \{\mathfrak{X}^n_1\}^{(k)}) }_\text{$Y_2$}\|_{\infty}.
\end{multline}
We then bound the two terms $\|Y_1\|_{\infty}$ and $\|Y_2\|_{\infty}$ respectively.

\emph{Bounding $\|Y_2\|_{\infty}$}. 

Note that the conditional expectation of $Y_2$ given $\{\Delta^{(k)}\}^K_1$ is
\begin{align*}
    \mathbb{E}[Y_2|\{\Delta^{(k)}\}^K_1] & =  \mathbb{E}[\frac{1}{K} \sum^K_{k=1}\nabla \ell^{(k)}(\bar{\theta}^{(k)}; \{\mathfrak{X}^n_1\}^{(k)}) |\{\Delta^{(k)}\}^K_1]\\
    & = \frac{1}{K} \sum^K_{k=1} \mathbb{E}[\nabla \ell^{(k)}(\bar{\theta}^{(k)}; \{\mathfrak{X}^n_1\}^{(k)}) |\Delta^{(k)}]\\
    & = \frac{1}{K} \sum^K_{k=1} 0\\
    & = 0,
\end{align*}
where the second to last line comes from the fact that the expected gradient at the true parameter of each task is 0. This property can also be checked by expanding the expression of $Y_2$. Each entry of $Y_2$, denoted by $Y_{2,u}$, for $1 \leq u \leq p-1$, can be expressed as a sum of random variables $Z^{(k)}_{i,u}$: 
\begin{equation}\label{y2u}
    Y_{2,u}= \frac{1}{K} \sum^K_{k=1} \frac{1}{n} \sum^n_{i=1} Z^{(k)}_{i,u},
\end{equation}
where 
\begin{align*}
    Z^{(k)}_{i,u} & = x^{(k)}_{i,u} \{ x^{(k)}_{i,r} - \frac{\exp(\sum_{t \in V \setminus r} \bar{\theta}^{(k)}_{rt} x_{i,t}^{(k)}) - \exp(-\sum_{t \in V \setminus r} \bar{\theta}_{rt}^{(k)} x_{i,t}^{(k)})}{\exp(\sum_{t \in V \setminus r} \bar{\theta}^{(k)}_{rt} x_{i,t}^{(k)}) + \exp(-\sum_{t \in V \setminus r} \bar{\theta}_{rt}^{(k)} x_{i,t}^{(k)} )}\} \\
    & = x^{(k)}_{i,u} \{ x^{(k)}_{i,r} - \mathbb{P}_{\bar{\theta}^{(k)}}[X^{(k)}_r = 1 | x^{(k)}_{i, \setminus r}] + \mathbb{P}_{\bar{\theta}^{(k)}}[X^{(k)}_r = -1 | x^{(k)}_{i, \setminus r}] \}.
\end{align*}
We have the conditional expectation $\mathbb{E}[Z^{(k)}_{i,u}|\Delta^{(k)}]= 0$ by applying another law of total expectation \cite{weiss2005course} with the inner conditional expectation of $X^{(k)}_r$ given $X^{(k)}_{\setminus r}$ and the outer total expectation being the marginal joint expectation of $X^{(k)}_{\setminus r}$. So we have the total expectation
\begin{equation}
    \mathbb{E}[Z^{(k)}_{i,u}] = \mathbb{E}[\mathbb{E}[Z^{(k)}_{i,u}|\Delta^{(k)}]] = \mathbb{E}[0] = 0.
\end{equation}
Also, from the expression of $Z^{(k)}_{i,u}$, since all samples are either $-1$ or $+1$, it is easy to see that $|Z^{(k)}_{i,u}| \leq 2$. On the other hand, note that the total $nK$ samples $\{X^{(k)}_i\}_{1\leq i \leq n, 1\leq k \leq K}$ are conditionally independent given $\{\Delta^{(k)}\}^K_1$ ( $\{\Delta^{(k)}\}^K_1$ are the \emph{latent random variables}). We can then apply the Hoeffding's Inequality with latent conditional independence (LCI), Corollary 1 in \cite{ke2019exact} by conditioning on the \emph{latent random variables} $\{\Delta^{k}\}_{k=1}^K$ to get
\begin{equation}
    \mathbb{P}[ \sum^K_{k=1}\sum^n_{i=1}(Z^{(k)}_{i,u}-0) \geq \delta] \leq \exp \Big( -\frac{\delta^2}{8nK}\Big),
\end{equation}
for any $\delta > 0.$
Substituting $Y_{2,u}$ in (\ref{y2u}) and by the symmetry of it (resulting from the symmetry of the binary random variables $\{X^{(k)}_i\}_{1\leq i \leq n, 1\leq k \leq K}$), we have 
\begin{align*}
    \mathbb{P}[|Y_{2,u}| \geq \delta] & = \mathbb{P}[Y_{2,u} \geq \delta \quad \text{or} \quad Y_{2,u} \leq -\delta]\\
    & \leq \mathbb{P}[Y_{2,u} \geq \delta] + \mathbb{P}[Y_{2,u} \leq -\delta]\\
    &= 2 \mathbb{P}[Y_{2,u} \geq \delta]\\
     &= 2\mathbb{P}[\frac{1}{nK}\sum^K_{k=1}\sum^n_{i=1} (Z^{(k)}_{i,u}-0) \geq \delta]\\
     &\leq 2 \exp \Big( -\frac{\delta^2}{8nK}\Big).
\end{align*}
After that, applying union bound over the indices $u$ of $Y_2$ yields
\begin{equation}
    \mathbb{P}[\|Y_2\|_{\infty} \geq \delta] \leq 2 \exp \Big( -\frac{\delta^2}{8nK}+ \log p \Big).
\end{equation}

\emph{Bounding $\|Y_1\|_{\infty}$}.

Note that $Y_1 = \nabla \ell(\bar{\theta}_{\setminus r}) - \frac{1}{K}\sum_{k=1}^K \nabla \ell^{(k)}(\bar{\theta}^{(k)}_{\setminus r})$ using the shorthand notations in (\ref{decayassump}). We can bound $\|Y_2\|_{\infty}$ by writing 
\begin{equation}
    \begin{split}
        \|Y_1\|_{\infty} & = \|Y_1 - \mathbb{E}(Y_1) +\mathbb{E}(Y_1)\|_{\infty} \\
        & \leq \|Y_1 - \mathbb{E}(Y_1) \|_{\infty} + \|\mathbb{E}(Y_1)\|_{\infty}.
    \end{split}
\end{equation}
Using Assumption (\ref{auxa3}) by setting $\delta = \sqrt{\frac{8 \log(2 p/ \varepsilon)}{nK}}$, we have
\begin{equation}
    \mathbb{P}(\|\mathbb{E}(Y_1)\|_{\infty} \geq \delta) \leq 2 \exp \left(- \frac{\delta^2 nK}{8} + \log p \right).
\end{equation}

Notice that for $Y_1$, we can also decompose it into a sum of random variables $Z'^{(k)}_{i,u}$

\begin{equation}
    Y_{1,u}= \frac{1}{K} \sum^K_{k=1} \frac{1}{n} \sum^n_{i=1} Z'^{(k)}_{i,u},
\end{equation}
where 
\begin{align*}
    Z'^{(k)}_{i,u} & = x^{(k)}_{i,u} \{ x^{(k)}_{i,r} - \frac{\exp(\sum_{t \in V \setminus r} \bar{\theta}_{rt} x_{i,t}^{(k)}) - \exp(-\sum_{t \in V \setminus r} \bar{\theta}_{rt} x_{i,t}^{(k)})}{\exp(\sum_{t \in V \setminus r} \bar{\theta}_{rt} x_{i,t}^{(k)}) + \exp(-\sum_{t \in V \setminus r} \bar{\theta}_{rt} x_{i,t}^{(k)} )}\} \\
    & = x^{(k)}_{i,u} \{ x^{(k)}_{i,r} - \mathbb{P}_{\bar{\theta}}[X_r = 1 | x^{(k)}_{i, \setminus r}] + \mathbb{P}_{\bar{\theta}}[X_r = -1 | x^{(k)}_{i, \setminus r}] \}.
\end{align*}
Here $\mathbb{P}_{\bar{\theta}}$ denotes the conditional probability of the random variable associated with node $r$ taking on $-1$ or $+1$ given a $(p-1)$-dimensional data vector values $x^{(k)}_{i,\setminus r}$, supposing the true parameter is $\bar{\theta}_{\setminus r}$. In this way, we can write
each entry of $Y_1 - \mathbb{E}(Y_1) $ as 
\begin{align*}
    Y_{1,u} - \mathbb{E}(Y_{1,u}) & = \frac{1}{K} \sum^K_{k=1} \frac{1}{n} \sum^n_{i=1} Z'^{(k)}_{i,u} - \mathbb{E}[\frac{1}{K} \sum^K_{k=1} \frac{1}{n} \sum^n_{i=1} Z'^{(k)}_{i,u}]\\
    & = \frac{1}{K} \sum^K_{k=1} \frac{1}{n} \sum^n_{i=1} Z'^{(k)}_{i,u}-\mathbb{E}[Z'^{(k)}_{i,u}].
\end{align*}
Then we define random variable
\begin{equation}
    H^{(k)}_{i,u} := Z'^{(k)}_{i,u}-\mathbb{E}[Z'^{(k)}_{i,u}]
\end{equation}
for all $1 \leq i \leq n, 1 \leq k \leq K, 1 \leq u \leq p-1$. We have
\begin{equation}
    \mathbb{E}[H^{(k)}_{i,u}] = \mathbb{E} [Z'^{(k)}_{i,u}-\mathbb{E}[Z'^{(k)}_{i,u}]] = \mathbb{E}[Z'^{(k)}_{i,u}] - \mathbb{E}[Z'^{(k)}_{i,u}] = 0.
\end{equation}
Since the expected value $\mathbb{E}[Z'^{(k)}_{i,u}]$ is deterministic, the randomness of $H^{(k)}_{i,u}$ takes on the same pattern as $Z'^{(k)}_{i,u}$, so they are conditionally independent given $\{\Delta^{(k)}\}^K_1$. In addition,  $H^{(k)}_{i,u}$ is bounded in the sense that $|H^{(k)}_{i,u}|\leq 6$. By using LCI Hoeffding's inequality \cite{ke2019exact} again, we get
\begin{equation}
    \mathbb{P}[ \sum^K_{k=1}\sum^n_{i=1}(H^{(k)}_{i,u}-0) \geq \delta] \leq \exp \Big( -\frac{\delta^2}{72nK}\Big).
\end{equation}
Using the same reasoning (symmetry and union bound) in proving the bound for $\|Y_2\|_{\infty}$, we get
\begin{equation}
    \mathbb{P}[\|Y_1 - \mathbb{E}[Y_1]\|_{\infty} \geq \delta] \leq 2 \exp \Big( -\frac{\delta^2}{72nK}+ \log p \Big).
\end{equation}
Next, putting the terms $\|\mathbb{E}[Y_1]\|_{\infty}$ and $\|Y_1 - \mathbb{E}[Y_1]\|_{\infty}$ together, we have
\begin{align*}
    \mathbb{P}(\|Y_1\|_{\infty} > 2\delta) & = 1- \mathbb{P}(\|Y_1\|_{\infty}  < 2\delta)\\
    &\leq  1- \mathbb{P}(\|\mathbb{E}[Y_1]\|_{\infty} +  \|Y_1 - \mathbb{E}[Y_1]\|_{\infty}< 2\delta)\\
    &\leq 1- \mathbb{P}(\|\mathbb{E}[Y_1]\|_{\infty}< \delta \quad  \text{and}  \quad \|Y_1 - \mathbb{E}[Y_1]\|_{\infty}< \delta)\\
    & = \mathbb{P}(\|\mathbb{E}[Y_1]\|_{\infty}\geq \delta \quad  \text{or}  \quad   \|Y_1 - \mathbb{E}[Y_1]\|_{\infty}\geq \delta)\\
    & \leq \mathbb{P}(\|\mathbb{E}[Y_1]\|_{\infty}\geq \delta) +  \|Y_1 - \mathbb{E}[Y_1]\|_{\infty}\geq \delta).
\end{align*}
By the same token, we get
\begin{align}
    \mathbb{P}(\|W^N\|_{\infty} & > 3\delta) \leq \mathbb{P}(\|Y_2\|_{\infty} > \delta) + \mathbb{P}(\|\mathbb{E}[Y_1]\|_{\infty}\geq \delta) +  \|Y_1 - \mathbb{E}[Y_1]\|_{\infty}\geq \delta)\\
    & \leq 4\exp \Big( -\frac{\delta^2}{8nK}+ \log p \Big)) + 2\exp \Big( -\frac{\delta^2}{72nK}+ \log p \Big)\\
    & \leq 6\exp \Big( -\frac{\delta^2}{72nK}+ \log p \Big).
\end{align}
Finally, setting $3 \delta = \frac{\alpha \lambda}{4(2-\alpha)}$, we obtain
\begin{equation}
    \mathbb{P}\left(\frac{2-\alpha}{\lambda} \|W^N\|_{\infty} > \frac{\alpha }{4} \right) \leq 6 \exp \left(- \frac{\alpha^2 \lambda^2}{c(2-\alpha)^2} nK + log(p) \right),
\end{equation}
for some fixed constant $c$ as in Lemma \ref{WN}.
\end{proof}
\subsubsection{Proof of Lemma \ref{RN}}\label{pfRN}
\begin{proof}
We first show that the remainder term $R^N$ satisfies the bound $\|R^N\|_{\infty}\leq D_{\max} \|\hat{\theta}_S - \bar{\theta}_S\|_2^2$. Then the result of Lemma \ref{l2cons}, namely $\|\hat{\theta}_S - \bar{\theta}_S \|_2 \leq \frac{5}{C_{\min}}\sqrt{d}\lambda$, can be used to conclude that 
\begin{equation*}
    \frac{\|R^N\|_{\infty}}{\lambda} \leq \frac{25D_{\max}}{C_{\min}^2} \lambda d 
\end{equation*}
as claimed in Lemma \ref{RN}.

Focusing on element $R^N_j$ for some index $j \in \{1,...,p\},$ we have 
\begin{align*}
    R_j^N &=-[\nabla^2 \ell(\theta^{(j)}; \mathfrak{X}) - \nabla^2 \ell(\bar{\theta}; \mathfrak{X})]^T_j (\hat{\theta} - \bar{\theta})
    \\& = \frac{1}{K} \sum^K_{k=1}  \frac{1}{n} \sum^n_{i=1} [\eta (x^{(k)}_i; \theta^{(j)}) - \eta (x^{(k)}_i; \bar{\theta}))] (\hat{\theta} - \bar{\theta}),
\end{align*}
for some point $\theta^{(j)} = \mu_j \hat{\theta} + (1-\mu_j)\bar{\theta}$ and $\mu_j \in [0,1]$. Then we set $g(t) = \frac{4e^{2t}}{(e^{2t}+1)^2}$ by noting that $\eta(\theta,x) = g(x_r\sum_{t \in V \setminus r} \theta_{rt} x_t).$ By the chain rule and another application of the mean value theorem, we then have
\begin{align*}
    R_j^N & = \frac{1}{K} \sum^K_{k=1}  \frac{1}{n} \sum^n_{i=1} g'({(\theta'^{(j)})}^T x^{(k)}_i){(x^{(k)}_i)}^T [\theta^{(j)} - \bar{\theta}]\{ x^{(k)}_{i,j} {(x^{(k)}_i)}^T [\hat{\theta} - \bar{\theta}]\}
    \\ & = \frac{1}{K} \sum^K_{k=1}  \frac{1}{n} \sum^n_{i=1} \{g'({(\theta'^{(j)})}^T x^{(k)}_i)x^{(k)}_{i,j}\} \{{[\theta^{(j)} - \bar{\theta}]}^T  x^{(k)}_{i,j} {(x^{(k)}_i)}^T [\hat{\theta} - \bar{\theta}]\},
\end{align*}

where $\theta'^{(j)}$ is another point on the line joining $\hat{\theta}$ and $\bar{\theta}.$ Setting $a_i^{(k)} := \{g'({(\theta'^{(j)})}^T x^{(k)}_i)x^{(k)}_{i,j}\}$ and $b^{(k)}_i := \{{[\theta^{(j)} - \bar{\theta}]}^T  x^{(k)}_{i,j} {(x^{(k)}_i)}^T [\hat{\theta} - \bar{\theta}]\},$ and treating $a, b$ both as $nK$-dimensional vectors, we have 
$$|R_j^N| = \frac{1}{nK} |\sum^K_{k=1} \sum^n_{i=1} a_i^{(k)}b^{(k)}_i| \leq  \frac{1}{nK} \|a\|_{\infty} \|b\|_1.$$
A calculation shows that $\|a\|_{\infty} \leq 1,$ and 
\begin{align*}
    \frac{1}{nK}\|b\|_1 & = \frac{1}{K} \sum^K_{k=1} \mu_j {[\hat{\theta} - \bar{\theta}]}^T \left\{ \frac{1}{n} \sum^n_{i=1} x^{(k)}_{i} {(x^{(k)}_i)}^T \right\} [\hat{\theta} - \bar{\theta}]
    \\& = \frac{1}{K} \sum^K_{k=1} \mu_j {[\hat{\theta}_S - \bar{\theta}_S]}^T \left\{ \frac{1}{K} \sum^K_{k=1} \frac{1}{n} \sum^n_{i=1} x^{(k)}_{i,S} {(x^{(k)}_{i,S})}^T \right\} [\hat{\theta}_S - \bar{\theta}_S]
    \\& \leq D_{\max} \|\hat{\theta}_S - \bar{\theta}_S\|_2^2,
\end{align*}
where the second line uses the fact that $\hat{\theta}_{S^c} = \bar{\theta}_{S^c} = 0$. Therefore, we have $$\|R^N\|_{\infty} \leq D_{\max} \|\hat{\theta}_S - \bar{\theta}_S\|_2^2$$
\end{proof}
\subsubsection{Proof of Lemma \ref{l2cons}}\label{pfl2cons}
\begin{proof}
Following the method of proof in \cite{ravikumar2010high} which was also previously used in another context \cite{rothman2008sparse}, we define the function $G: \mathbb{R}^d \rightarrow \mathbb{R}$ by
\begin{equation}
    G(u_S) := \ell(\bar{\theta}_S + u_S; \{\mathfrak{X}^n_1\}^K_1) - \ell( \bar{\theta}_S; \{\mathfrak{X}^n_1\}^K_1) + \lambda_n (\|\bar{\theta}_S + u_S\|_1 - \|\bar{\theta}_S\|_1).
\end{equation}
It can be seen that $\hat{u} = \hat{\theta}_S - \bar{\theta}_S$ minimizes $G$. Moreover, $G(0) = 0$ by construction; therefore, we must have $G(\hat{u}) \leq 0$. Note that $G$ is convex. Suppose we show for some radius $B > 0$, and for $u \in \mathbb{R}^d$ with $\|u\|_2 = B,$ we have $G(u) > 0,.$ we then claim that $\|\hat{u}\|_2 \leq B.$ In fact, if $\hat{u}$ lay outside the ball of radius $B$, then the convex combination $t\hat{u}+(1-t)(0)$ would lie on the boundary of the ball, for an appropriately chosen $t \in (0,1).$ By convexity, $$G(t\hat{u}+(1-t)(0)) \leq t G(\hat{u}) + (1-t)G(0) \leq 0,$$
which contradicts the assumed strict positivity of $G$ on the boundary. It thus suffices to establish strict positivity of $G$ on the boundary of the ball with radius $B=M\lambda \sqrt{d}$, where $M>0$ is a parameter to be chosen later in the proof. Let $u \in \mathbb{R}^d$ be an arbitrary vector with $\|u\|_2=B.$ Recalling the notation $W^N := -\nabla \ell(\bar{\theta}; \{\mathfrak{X}^n_1\}^K_1),$ by a Taylor series expansion of the log likelihood component of $G$, we have 
$$G(u) = -(W^N_S)^T u + u^T[\nabla^2 \ell(\bar{\theta}_S+\alpha u; \{\mathfrak{X}^n_1\}^K_1)]u + \lambda_n(\|\bar{\theta}_S+u_S\|_1 - \|\bar{\theta}_S\|_1) $$
for some $\alpha \in [0,1].$ For the first term, we have the bound 
\begin{equation}
    |(W^N_S)^T u| \leq \|W^N_S\|_{\infty} \|u\|_1 \leq  \|W^N_S\|_{\infty} \sqrt{d} \|u\|_2 \leq (\lambda_n \sqrt{d})^2 \frac{M}{4},
\end{equation}
since $\|W^N_S\|_{\infty} \leq \frac{\lambda_n}{4}$ by assumption.
For the last term, applying triangle inequality  yields $$\lambda_n(\|\bar{\theta}_S+u_S\|_1 - \|\bar{\theta}_S\|_1) \geq -\lambda_n \| u_S\|_1.$$
Since $\|u_S\|_1 \leq \sqrt{d} \|u_S\|_2, $ we have 
\begin{equation}
    \lambda_n(\|\bar{\theta}_S+u_S\|_1 - \|\bar{\theta}_S\|_1) \geq -\lambda_n \sqrt{d}\| u_S\|_2 = - M{(\sqrt{d} \lambda_n)}^2.
\end{equation}
Finally, turning to the middle Hessian term, we have 
\begin{align*}
    q^* & := \Lambda_{\min}(\nabla^2 \ell(\bar{\theta}_S+\alpha u); \{\mathfrak{X}^n_1\}^K_1)) 
    \\ & \geq \min_{\alpha \in [0,1]} \Lambda_{\min}(\nabla^2 \ell(\bar{\theta}_S+\alpha u_S); \{\mathfrak{X}^n_1\}^K_1))
    \\ & = \min_{\alpha \in [0,1]} \Lambda_{\min} \left[ \frac{1}{K} \sum^K_{k=1} \frac{1}{n} \sum^n_{i=1} \eta (x^{(k)}_i ; \bar{\theta}_S+\alpha u_S) x^{(k)}_{i,S} {(x^{(k)}_{i,S})}^T \right].
\end{align*}
By a Taylor series expansion of $\eta(x^{(k)}_i; \cdot)$, we have, for some $\alpha_0 \in [0,\alpha]$,  a lower bound of $q^*$:

\begin{multline*}
\min_{\alpha \in [0,1]} \Lambda_{\min} \{ \frac{1}{nK} \sum^K_{k=1} \sum^n_{i=1} [\eta (x^{(k)}_i ; \bar{\theta}_S) x^{(k)}_{i,S} {(x^{(k)}_{i,S})}^T \\ \cr   + \alpha g'(x^{(k)}_{i,r} \sum_{t\in S \setminus r} (\bar{\theta}_{rt}  + \alpha_0 u_{rt})x^{(k)}_{i,t} ) x^{(k)}_{i,r} (u_S^T x^{(k)}_{i,S}) x^{(k)}_{i,S} {(x^{(k)}_{i,S})}^T ]\} 
\end{multline*}
\begin{align*}
    & \geq \Lambda_{\min} \left[ \frac{1}{K} \sum^K_{k=1} \frac{1}{n} \sum^n_{i=1} \eta (x^{(k)}_i ; \bar{\theta}_S) x^{(k)}_{i,S} {(x^{(k)}_{i,S})}^T \right] 
    \\& + \min_{\alpha \in [0,1]} \alpha \Lambda_{\min} \left[ \frac{1}{K} \sum^K_{k=1} \frac{1}{n} \sum^n_{i=1} g' \left(x^{(k)}_{i,r}{(\bar{\theta}_S+\alpha_0 u_S)}^T x^{(k)}_{i,S} \right)x^{(k)}_{i,r} (u_S^T x^{(k)}_{i,S}) x^{(k)}_{i,S} {(x^{(k)}_{i,S})}^T \right]
    \\ & \geq \Lambda_{\min}(Q^N_{SS}) - \max_{\alpha \in [0,1]} \alpha \max_{\alpha_0 \in [0, \alpha]} \vertiii{\frac{1}{K} \sum^K_{k=1} \frac{1}{n} \sum^n_{i=1} g' \left(x^{(k)}_{i,r}{(\bar{\theta}_S+\alpha_0 u_S)}^T x^{(k)}_{i,S} \right) (u_S^T x^{(k)}_{i,S}) x^{(k)}_{i,S} {(x^{(k)}_{i,S})}^T}_2
    \\& \geq \Lambda_{\min}(Q^N_{SS}) - \max_{\alpha \in [0, 1]} \vertiii{\frac{1}{K} \sum^K_{k=1} \frac{1}{n} \sum^n_{i=1} g' \left(x^{(k)}_{i,r}{(\bar{\theta}_S+\alpha u_S)}^T x^{(k)}_{i,S} \right) (u_S^T x^{(k)}_{i,S}) x^{(k)}_{i,S} {(x^{(k)}_{i,S})}^T}_2
    \\& \geq C_{\min} - \max_{\alpha \in [0, 1]} \vertiii{\underbrace{\frac{1}{K} \sum^K_{k=1} \frac{1}{n} \sum^n_{i=1} g' \left(x^{(k)}_{i,r}{(\bar{\theta}_S+\alpha u_S)}^T x^{(k)}_{i,S} \right) (\langle u_S, x^{(k)}_{i,S} \rangle) x^{(k)}_{i,S} {(x^{(k)}_{i,S})}^T}_{A(\alpha)}}_2
\end{align*}

It remains to control the spectral norm of the matrix $A(\alpha),$ for $\alpha \in [0,1]$. For any fixed $\alpha \in [0,1]$, and $y \in \mathbb{R^d}$ with $\|y\|_2 = 1,$ we have 
\begin{align*}
    \langle y, A(\alpha) y \rangle & = \frac{1}{K} \sum^K_{k=1} \frac{1}{n} \sum^n_{i=1} g' \left(x^{(k)}_{i,r}{(\bar{\theta}_S+\alpha u_S)}^T x^{(k)}_{i,S} \right) [\langle u_S, x^{(k)}_{i,S} \rangle] {[\langle x^{(k)}_{i,S}, y \rangle]}^2
    \\& \leq \frac{1}{K} \sum^K_{k=1} \frac{1}{n} \sum^n_{i=1} \left|g' \left(x^{(k)}_{i,r}{(\bar{\theta}_S+\alpha u_S)}^T x^{(k)}_{i,S} \right)\right| |\langle u_S, x^{(k)}_{i,S} \rangle| {[\langle x^{(k)}_{i,S}, y \rangle]}^2
\end{align*}
Note that $ \left|g' \left(x^{(k)}_{i,r}{(\bar{\theta}_S+\alpha u_S)}^T x^{(k)}_{i,S} \right)\right| \leq 1,$ and $$|\langle u_S, x^{(k)}_{i,S} \rangle| \leq \|u_S\|_1 \leq \sqrt{d} \|u_S\|_2 = M \lambda_n d.$$
Moreover, we have $$\frac{1}{K} \sum^K_{k=1} \frac{1}{n} \sum^n_{i=1}{(\langle x^{(k)}_{i,S}, y \rangle)}^2  \leq \vertiii{\frac{1}{K} \sum^K_{k=1} \frac{1}{n} \sum^n_{i=1} x^{(k)}_{i,S} (x^{(k)}_{i,S})^T}_2 \leq D_{\max}$$
by assumption.We then obtain $$\max_{\alpha \in [0,1]}\vertiii{A(\alpha)}_2 \leq D_{\max} M \lambda_n d \leq C_{\min}/2,$$
assuming that $\lambda_n \leq \frac{C_{\min}}{2M D_{\max} d}.$ Under this condition, we have shown that
\begin{equation}
    q^* := \Lambda_{\min}(\nabla^2 \ell(\bar{\theta}_S+\alpha u); \{\mathfrak{X}^n_1\}^K_1)) \geq C_{\min}/2.
\end{equation}
Finally, combining the three terms in $G(u),$ we conclude that $$G(u_S) \geq (\lambda_n \sqrt{d})^2 \left\{ -\frac{1}{4} M + \frac{C_{\min}}{2}M^2 - M\right\},$$
which is strictly positive for $M = 5/C_{\min}.$ Therefore, as long as $$\lambda_n \leq \frac{C_{\min}}{2M D_{\max} d} = \frac{C_{\min}^2}{10 D_{\max}d },$$
we are guaranteed that $$\|\hat{u}_S\|_2 \leq M \lambda_n \sqrt{d} = \frac{5}{C_{\min}} \lambda_n \sqrt{d}.$$
\end{proof}

\section{Proof of Lemmas for Theorem \ref{thm: thm2}}
\subsection{Proof of Lemmas for Uniform Convergence of Sample Information Matrices in Novel Task}

\subsubsection{Proof of Lemma \ref{2mineigen}} \label{2pfmineigen}
\begin{proof}
The $(j,l)^{th}$ element of the difference matrix $Q^{(K+1)}(\bar{\theta}^{(K+1)}_S) - \bar{Q}^{(K+1)}(\bar{\theta}^{(K+1)}_S)$ can be written as an i.i.d. sum of the form $Z_{jl}^{(K+1)} = \frac{1}{n^{(K+1)}} \sum^{n^{(K+1)}}_{i=1} Z_{jl,i}^{(K+1)} $, where each $Z_{jl,i}^{(K+1)}$ is zero-mean and bounded (in particular, $|Z_{jl,i}^{(K+1)}| \leq 4$) By the Azuma-Hoeffding's bound \cite{hoeffding1994probability}, for any indices $j,l = 1,...,d$ and for any $\varepsilon > 0$, we have 
\begin{equation}\label{2QZ}
    \mathbb{P}[(Z_{jl}^{(K+1)})^2 \geq \varepsilon^2] = \mathbb{P}\big[ | \frac{1}{n^{(K+1)}} \sum^{n^{(K+1)}}_{i=1} Z_{jl,i}^{(K+1)}| \geq \varepsilon\big] \leq 2 \exp \left( -\frac{\varepsilon^2 n^{(K+1)}}{32}\right).
\end{equation}
By the Courant-Fischer variational representation \cite{horn2012matrix},
\begin{align*}
    \Lambda_{\min}({\bar{Q}^{(K+1)}_{S^{(K+1)}S^{(K+1)}}}) & = \min_{\|x\|_2 = 1}x^T\bar{Q}^{(K+1)}_{S^{(K+1)}S^{(K+1)}} x 
    \\ & =  \min_{\|x\|_2 = 1} \{x^T Q^{(K+1)}_{S^{(K+1)}S^{(K+1)}} x + x^T ( \bar{Q}^{(K+1)}_{S^{(K+1)}S^{(K+1)}} - Q^{(K+1)}_{S^{(K+1)}S^{(K+1)}}) x\}
    \\ & \leq y^T Q^{(K+1)}_{S^{(K+1)}S^{(K+1)}} y + y^T ( \bar{Q}^{(K+1)}_{S^{(K+1)}S^{(K+1)}} -Q^{(K+1)}_{S^{(K+1)}S^{(K+1)}}) y,
\end{align*}
where  $y \in \mathbb{R}^d$ is a unit-norm minimal eigenvector of $Q^{(K+1)}_{S^{(K+1)}S^{(K+1)}}$. Therefore, we have 
\begin{align*}
    \Lambda_{\min}(Q^{(K+1)}_{S^{(K+1)}S^{(K+1)}}) &\geq \Lambda_{\min}(\bar{Q}^{(K+1)}_{S^{(K+1)}S^{(K+1)}}) - \vertiii{\bar{Q}^{(K+1)}_{S^{(K+1)}S^{(K+1)}} - Q^{(K+1)}_{S^{(K+1)}S^{(K+1)}}}_2 
    \\& \geq C^{(K+1)}_{\min} - \vertiii{\bar{Q}^{(K+1)}_{S^{(K+1)}S^{(K+1)}} - Q^{(K+1)}_{S^{(K+1)}S^{(K+1)}}}_2.
\end{align*}
Observe that $$\vertiii{\bar{Q}^{(K+1)}_{S^{(K+1)}S^{(K+1)}} - Q^{(K+1)}_{S^{(K+1)}S^{(K+1)}}}_2 \leq \left( \sum^{d}_{j=1} \sum^{d}_{l=1} {(Z^{(K+1)}_{jl})}^2\right)^{1/2}$$
Setting $\varepsilon^2 = \delta^2/d^2$ in (\ref{2QZ}) and applying the union bound over the $d^2$ index pairs $(j,l)$ then yields
\begin{equation}\label{usebd2}
    \mathbb{P}[\vertiii{\bar{Q}^{(K+1)}_{S^{(K+1)}S^{(K+1)}} - Q^{(K+1)}_{S^{(K+1)}S^{(K+1)}}}_2 \geq \delta] \leq 2 \exp \left( -\frac{\delta^2n^{(K+1)}}{32 d^2}+2 \log (d)\right)
\end{equation}

So, we have the first concentration inequality in Lemma \ref{2mineigen}:
\begin{equation}
    \mathbb{P}[\Lambda_{\min}(Q^{(K+1)}_{S^{(K+1)}S^{(K+1)}}) \leq C^{(K+1)}_{\min} - \delta] \leq 2 \exp \left(  -\frac{\delta^2n^{(K+1)}}{32 d^2}+2 \log (d)\right).
\end{equation}
This proves the first part of the lemma.
For the second concentration inequality about maximum eigenvalue of the sample covariance matrix, with the same
reasoning from the Courant-Fischer variational representation \cite{horn2012matrix}, we have, 
\begin{align*}
    \Lambda_{\max}(\mathbb{E}[X^{(K+1)}_{S}{(X^{(K+1)}_{S})}^T]) & = \max_{\|v\|_2=1} v^T \mathbb{E}[X^{(K+1)}_{S}{(X^{(K+1)}_{S})}^T] v
    \\ & =\max_{\|v\|_2=1} \{ v^T (\frac{1}{n^{(K+1)}} \sum^{n^{(K+1)}}_{i=1} x^{(K+1)}_{i, S} (x^{(K+1)}_{i, S})^T) v  \\&+ v^T (\mathbb{E}[X^{(K+1)}_{S}{(X^{(K+1)}_{S})}^T] - \frac{1}{n^{(K+1)}} \sum^{n^{(K+1)}}_{i=1} x^{(K+1)}_{i, S} (x^{(K+1)}_{i, S})^T) v \} \\& \geq u^T (\frac{1}{n^{(K+1)}} \sum^{n^{(K+1)}}_{i=1} x^{(K+1)}_{i, S} (x^{(K+1)}_{i, S})^T) u \\& + u^T (\mathbb{E}[X^{(K+1)}_{S}{(X^{(K+1)}_{S})}^T] - \frac{1}{n^{(K+1)}} \sum^{n^{(K+1)}}_{i=1} x^{(K+1)}_{i, S} (x^{(K+1)}_{i, S})^T) u,
\end{align*}

where $u \in \mathbb{R}^d$ is a unit-norm maximal eigenvector of $\frac{1}{n^{(K+1)}} \sum^{n^{(K+1)}}_{i=1} x^{(K+1)}_{i, S} (x^{(K+1)}_{i, S})^T$. Therefore, we have 
\begin{align*}
  & \Lambda_{\max} (\frac{1}{n^{(K+1)}} \sum^{n^{(K+1)}}_{i=1} x^{(K+1)}_{i, S} (x^{(K+1)}_{i, S})^T) \\ & \leq \Lambda_{\max}(\mathbb{E}[X^{(K+1)}_{S}{(X^{(K+1)}_{S})}^T]) + u^T (\frac{1}{n^{(K+1)}} \sum^{n^{(K+1)}}_{i=1} x^{(K+1)}_{i, S} (x^{(K+1)}_{i, S})^T - \mathbb{E}[X^{(K+1)}_{S}{(X^{(K+1)}_{S})}^T]) u  \\ & \leq D^{(K+1)}_{\max} + \vertiii{(\frac{1}{n^{(K+1)}} \sum^{n^{(K+1)}}_{i=1} x^{(K+1)}_{i, S} (x^{(K+1)}_{i, S})^T - \mathbb{E}[X^{(K+1)}_{S}{(X^{(K+1)}_{S})}^T])}_2
\end{align*}

The difference matrix $\frac{1}{n^{(K+1)}} \sum^{n^{(K+1)}}_{i=1} x^{(K+1)}_{i, S} (x^{(K+1)}_{i, S})^T - \mathbb{E}[X^{(K+1)}_{S}{(X^{(K+1)}_{S})}^T]$ can be written as an i.i.d. sum of the form $Y_{jl}^{(K+1)} = \frac{1}{n^{(K+1)}} \sum^{n^{(K+1)}}_{i=1} Y_{jl,i}^{(K+1)}$, where each $Y_{jl,i}^{(K+1)}$ is zero-mean and bounded (in particular, $|Y_{jl,i}^{(K+1)}| \leq 4$). By the Azuma-Hoeffding's bound \cite{hoeffding1994probability}, for any indices $j,l = 1,...,d$ and for any $\varepsilon > 0$, we have 
\begin{equation}\label{2EXY}
    \mathbb{P}[(Y_{jl}^{(K+1)})^2 \geq \varepsilon^2] = \mathbb{P}\big[ | \frac{1}{n^{(K+1)}} \sum^{n^{(K+1)}}_{i=1} Y_{jl,i}^{(K+1)}| \geq \varepsilon\big] \leq 2 \exp \left( -\frac{\varepsilon^2 n^{(K+1)}}{32}\right).
\end{equation}
Observe that $$\vertiii{\frac{1}{n^{(K+1)}} \sum^{n^{(K+1)}}_{i=1} x^{(K+1)}_{i, S} (x^{(K+1)}_{i, S})^T - \mathbb{E}[X^{(K+1)}_{S}{(X^{(K+1)}_{S})}^T]}_2 \leq \left(\sum^d_{j=1} \sum^d_{l=1} (Y_{jl}^{(K+1)})^2\right)^{1/2}.$$
Setting $\varepsilon^2 = \delta^2/d^2$ in (\ref{2EXY}) and applying the union bound over the $d^2$ index pairs $(j,l)$ then yields 
\begin{multline*}
    \mathbb{P}[\vertiii{\frac{1}{n^{(K+1)}} \sum^{n^{(K+1)}}_{i=1} x^{(K+1)}_{i, S} (x^{(K+1)}_{i, S})^T - \mathbb{E}[X^{(K+1)}_{S}{(X^{(K+1)}_{S})}^T]}_2 \geq \delta] \\ \leq 2 \exp \left( -\frac{\delta^2n^{(K+1)}}{32 {d}^2}+2 \log ({d})\right)
\end{multline*}
\
So we have the second part of the lemma $$\mathbb{P} \big[  \Lambda_{\max} \big[ \frac{1}{n^{(K+1)}} \sum^{n^{(K+1)}}_{i=1} x^{(K+1)}_{i, S} (x^{(K+1)}_{i, S})^T)\big] \geq D^{(K+1)}_{\max} + \delta \big] \leq 2 \exp \left( -\frac{\delta^2n^{(K+1)}}{32 {d}^2}+2 \log ({d})\right)$$

\end{proof}

\subsubsection{Proof of Lemma \ref{2incoh}}\label{2pfincoh}
\begin{proof}
Decomposing the sample matrix as the sum $Q^{(K+1),S}_{{[S^{(K+1)}]}^cS^{(K+1)}}{(Q^{(K+1)}_{S^{(K+1)}S^{(K+1)}})}^{-1} = T^{(K+1)}_1 + T^{(K+1)}_2 + T^{(K+1)}_3 + T^{(K+1)}_4,$ where we define 
\begin{subequations}
\begin{align}
    T^{(K+1)}_1 &:= \bar{Q}^{(K+1),S}_{{[S^{(K+1)}]}^c S^{(K+1)}}[{(Q^{(K+1)}_{S^{(K+1)}S^{(K+1)}})}^{-1} - {(\bar{Q}^{(K+1)}_{S^{(K+1)}S^{(K+1)}})}^{-1}],\\
    T^{(K+1)}_2 &:= [Q^{(K+1),S}_{{[S^{(K+1)}]}^c S^{(K+1)}} - \bar{Q}^{(K+1),S}_{{[S^{(K+1)}]}^c S^{(K+1)}}]{(\bar{Q}^{(K+1)}_{S^{(K+1)}S^{(K+1)}})}^{-1},\\
    T^{(K+1)}_3 &:= [Q^{(K+1),S}_{{[S^{(K+1)}]}^c S^{(K+1)}} - \bar{Q}^{(K+1),S}_{{[S^{(K+1)}]}^c S^{(K+1)}}][{(Q^{(K+1)}_{S^{(K+1)}S^{(K+1)}})}^{-1} - {(\bar{Q}^{(K+1)}_{S^{(K+1)}S^{(K+1)}})}^{-1}],\\
    T^{(K+1)}_4 &:= \bar{Q}^{(K+1),S}_{{[S^{(K+1)}]}^c S^{(K+1)}}{(\bar{Q}^{(K+1)}_{S^{(K+1)}S^{(K+1)}})}^{-1}.
\end{align}
\end{subequations}
The fourth term is controlled by the incoherence assumption (A2) $$\vertiii{T^{(K+1)}_4}_{\infty} = \vertiii{\bar{Q}^{(K+1),S}_{{[S^{(K+1)}]}^cS^{(K+1)}}{(\bar{Q}^{(K+1)}_{S^{(K+1)}S^{(K+1)}})}^{-1}}_{\infty} \leq 1-\alpha^{(K+1)}.$$ If we can show that $\vertiii{T^{(K+1)}_i}_{\infty} \leq \frac{\alpha^{(K+1)}}{6}$ for the remaining indices $i=1,2,3$, then by our four term decomposition and the triangle inequality, the sample version satisfies the desired bound (\ref{2incoheq}). For the remaining three terms, the following lemma is useful in the proof:

\begin{lemma}\label{2lmlm}
For any $\delta > 0$, and constants $B,B_1,B_2,$ the following bounds hold,
\begin{subequations}
\begin{align}
    \mathbb{P}\left[\vertiii{Q^{(K+1),S}_{{[S^{(K+1)}]}^cS^{(K+1)}} - \bar{Q}^{(K+1),S}_{{[S^{(K+1)}]}^cS^{(K+1)}}}_{\infty} \geq \delta\right] & \leq 2 \exp \left( -B\frac{\varepsilon^2n^{(K+1)}}{{d}^2} + 2\log (d) \right),\label{2eq:subeq1}\\
    \mathbb{P}\left[\vertiii{Q^{(K+1)}_{S^{(K+1)}S^{(K+1)}} - \bar{Q}^{(K+1)}_{S^{(K+1)}S^{(K+1)}}}_{\infty} \geq \delta\right]& \leq 2 \exp \left( -B\frac{\varepsilon^2n^{(K+1)}}{{d}^2} + 2\log({d}) \right),\label{2eq:subeq2}\\
    \mathbb{P}\left[\vertiii{{(Q^{(K+1)}_{S^{(K+1)}S^{(K+1)}})}^{-1} - {(\bar{Q}^{(K+1)}_{S^{(K+1)}S^{(K+1)}})}^{-1}}_{\infty}  \geq \delta\right] & \leq 4 \exp \left( -B_1\frac{n^{(K+1)}\delta^2}{d^3} + B_2 \log ({d})\right).\label{2eq:subeq3}
\end{align}
\end{subequations}
\end{lemma}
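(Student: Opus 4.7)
The plan is to mirror the proof of Lemma \ref{lemma of lemmas} from Section \ref{pflmlm}, specialized to the single-task setting of the novel task. Because there is only one data-generating distribution in play, the argument will actually be simpler than the auxiliary case: there is no need for the latent-conditional-independence version of Hoeffding's inequality, and standard Azuma--Hoeffding applies directly to i.i.d. bounded variables.

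For bounds (\ref{2eq:subeq1}) and (\ref{2eq:subeq2}), I will write each entry of the relevant difference matrix as an average $\frac{1}{n^{(K+1)}}\sum_{i=1}^{n^{(K+1)}} U^{(K+1)}_{jl,i}$ of zero-mean random variables bounded by a small constant (since $\eta$ takes values in $(0,1]$ and the covariates lie in $\{-1,+1\}$). Azuma--Hoeffding at the level $\varepsilon = \delta/d$ will give a per-entry tail probability of order $\exp(-B\delta^2 n^{(K+1)}/d^2)$; a union bound over the at most $d^2$ entries (using $|S^{(K+1)}| \leq |S_r| \leq d$), together with the comparison $\vertiii{A}_\infty \leq d \max_{j,l}|A_{jl}|$, will produce the stated rate. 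This calculation is essentially a direct copy of the argument already used to obtain (\ref{usebd2}) in the proof of Lemma \ref{2mineigen}.

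For bound (\ref{2eq:subeq3}), my plan is to apply the matrix inversion difference identity $A^{-1}-B^{-1} = A^{-1}(B-A)B^{-1}$ with $A = Q^{(K+1)}_{S^{(K+1)}S^{(K+1)}}$ and $B = \bar Q^{(K+1)}_{S^{(K+1)}S^{(K+1)}}$, and then use submultiplicativity of $\vertiii{\cdot}_\infty$ together with the comparison $\vertiii{A}_\infty \leq \sqrt{d}\,\vertiii{A}_2$ valid for matrices indexed on $S^{(K+1)}$. This reduces the estimate to three ingredients: (i) the high-probability spectral-norm bound $\vertiii{(Q^{(K+1)}_{S^{(K+1)}S^{(K+1)}})^{-1}}_2 \leq 2/C_{\min}^{(K+1)}$ obtained by applying Lemma \ref{2mineigen} with $\delta = C_{\min}^{(K+1)}/2$; (ii) the deterministic bound $\vertiii{(\bar Q^{(K+1)}_{S^{(K+1)}S^{(K+1)}})^{-1}}_2 \leq 1/C_{\min}^{(K+1)}$ coming from Assumption \ref{novela1}; and (iii) an invocation of (\ref{2eq:subeq2}) at the rescaled level $c\delta/\sqrt{d}$.

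The main obstacle lies precisely in this third bound: the extra $\sqrt{d}$ factor from the spectral-to-infinity norm conversion forces the denominator of the exponent to be $d^3$ rather than $d^2$, and the estimate must hold on the intersection of two high-probability events (the sample-eigenvalue lower bound on $Q^{(K+1)}_{S^{(K+1)}S^{(K+1)}}$ and the entrywise deviation bound), so a careful union bound is required to arrive at the stated $4\exp(-B_1 n^{(K+1)}\delta^2/d^3 + B_2\log d)$ rate. Apart from this bookkeeping, the remaining details track the auxiliary-task argument in Section \ref{pflmlm} verbatim, with $nK$ and $\log p$ replaced throughout by $n^{(K+1)}$ and $\log d$, respectively.
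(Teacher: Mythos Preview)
Your plan is essentially the paper's own argument: entrywise Hoeffding plus a union bound over at most $d^2$ indices for (\ref{2eq:subeq1})--(\ref{2eq:subeq2}), and the inversion identity $A^{-1}-B^{-1}=B^{-1}(B-A)A^{-1}$ together with the eigenvalue control from Lemma \ref{2mineigen} for (\ref{2eq:subeq3}).

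There is one small slip in step (iii). As you describe it, you use submultiplicativity of $\vertiii{\cdot}_\infty$ and then apply $\vertiii{\,\cdot\,}_\infty\le\sqrt{d}\,\vertiii{\,\cdot\,}_2$ to each inverse; that picks up \emph{two} factors of $\sqrt d$, so after invoking (\ref{2eq:subeq2}) you would land at $d^4$ in the denominator, not $d^3$. The paper (and the auxiliary-task proof in Section \ref{pflmlm}) instead converts \emph{once} via $\vertiii{\,\cdot\,}_\infty\le\sqrt d\,\vertiii{\,\cdot\,}_2$ at the level of the full product and then uses submultiplicativity of $\vertiii{\,\cdot\,}_2$, giving
\[
\vertiii{(Q^{(K+1)}_{S^{(K+1)}S^{(K+1)}})^{-1}-(\bar Q^{(K+1)}_{S^{(K+1)}S^{(K+1)}})^{-1}}_\infty
\le \frac{\sqrt d}{C^{(K+1)}_{\min}}\,
\vertiii{\bar Q^{(K+1)}_{S^{(K+1)}S^{(K+1)}}-Q^{(K+1)}_{S^{(K+1)}S^{(K+1)}}}_2\,
\vertiii{(Q^{(K+1)}_{S^{(K+1)}S^{(K+1)}})^{-1}}_2.
\]
The middle factor is then controlled by the \emph{spectral}-norm deviation bound (\ref{usebd2}) at level $\delta/\sqrt d$, not by the $\vertiii{\cdot}_\infty$ bound (\ref{2eq:subeq2}); this is what produces the $d^3$ you correctly anticipate. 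With that adjustment your proof goes through exactly as in Section \ref{pflmlm}.
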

See Section \ref{pf2lmlm} for the proof of these claims.
\paragraph{Control of the first term}
Turning to the first term, we re-factorize it as $$T^{(K+1)}_1 = \bar{Q}^{(K+1),S}_{{[S^{(K+1)}]}^cS^{(K+1)}}{(\bar{Q}^{(K+1)}_{S^{(K+1)}S^{(K+1)}})}^{-1}[\bar{Q}^{(K+1)}_{S^{(K+1)}S^{(K+1)}} - Q^{(K+1)}_{S^{(K+1)}S^{(K+1)}}]{(Q^{(K+1)}_{S^{(K+1)}S^{(K+1)}})}^{-1}.$$ Then, we can upper bound $\vertiii{T^{(K+1)}_1}_{\infty}$ by
\begin{align*}
    & \vertiii{\bar{Q}^{(K+1),S}_{{[S^{(K+1)}]}^cS^{(K+1)}}{(\bar{Q}^{(K+1)}_{S^{(K+1)}S^{(K+1)}})}^{-1}}_{\infty}\vertiii{\bar{Q}^{(K+1)}_{S^{(K+1)}S^{(K+1)}} - Q^{(K+1)}_{S^{(K+1)}S^{(K+1)}}}_{\infty}\vertiii{{(Q^{(K+1)}_{S^{(K+1)}S^{(K+1)}})}^{-1}}_{\infty}
     \\ & \leq (1-\alpha)\vertiii{\bar{Q}^{(K+1)}_{S^{(K+1)}S^{(K+1)}} - Q^{(K+1)}_{S^{(K+1)}S^{(K+1)}}}_{\infty} \{\sqrt{{d}} \vertiii{{(Q^{(K+1)}_{S^{(K+1)}S^{(K+1)}})}^{-1}}_2\},
\end{align*}
where we have used the incoherence assumption in Assumption \ref{novela2}. Using the bound (\ref{2maxeigneq}) in Lemma \ref{2mineigen} with $\delta = C_{\min}/2$, we have $\vertiii{{(Q^{(K+1)}_{S^{(K+1)}S^{(K+1)}})}^{-1}}_2  = {[\Lambda_{\min}(Q^{(K+1)}_{S^{(K+1)}S^{(K+1)}})]}^{-1} \leq \frac{2}{C_{\min}} $ with probability greater than $ 1- 2\exp ( -Bn^{(K+1)}/{d}^2+2 \log ({d}))$. Next, applying the bound (\ref{2eq:subeq2}) with $\delta = c / \sqrt{d},$ we conclude that with probability  greater than $1-2\exp(-Bn^{(K+1)}c^2/{d}^3+2\log(d)),$ we have $$\vertiii{\bar{Q}^{(K+1)}_{S^{(K+1)}S^{(K+1)}} - Q^{(K+1)}_{S^{(K+1)}S^{(K+1)}}}_{\infty} \leq c/\sqrt{d}.$$
By choosing the constant $c > 0$ sufficiently small, we are guaranteed that $$\mathbb{P}[\vertiii{T^{(K+1)}_1}_{\infty} \geq \alpha^{(K+1)}/6] \leq 2 \exp \left(-B\frac{n^{(K+1)}c^2}{d^3} + \log({d}) \right).$$

\paragraph{Control of the second term}
To bound $T^{(K+1)}_2$, we first write 
\begin{align*}
    \vertiii{T^{(K+1)}_2}_{\infty} &\leq \sqrt{{d}}\vertiii{{(\bar{Q}^{(K+1)}_{S^{(K+1)}S^{(K+1)}})}^{-1}}_2 \vertiii{Q^{(K+1),S}_{{[S^{(K+1)}]}^cS^{(K+1)}} - \bar{Q}^{(K+1),S}_{{[S^{(K+1)}]}^cS^{(K+1)}}}_{\infty}
    \\& \leq \frac{\sqrt{{d}}}{C_{\min}} \vertiii{Q^{(K+1),S}_{{[S^{(K+1)}]}^c S^{(K+1)}} - \bar{Q}^{(K+1),S}_{{[S^{(K+1)}]}^cS^{(K+1)}}}_{\infty}.
\end{align*}
Then we apply the bound (\ref{2eq:subeq1}) with $\delta = \frac{\alpha^{(K+1)}}{6}\frac{C_{\min}}{\sqrt{{d}}}$ to conclude that $$\mathbb{P}[\vertiii{T^{(K+1)}_2}_{\infty} \geq \alpha^{(K+1)}/6] \leq 2 \exp \left(-B\frac{n^{(K+1)}}{{d}^3} + \log({d}) \right).$$

\paragraph{Control of the third term}
We set $\delta = \sqrt{\alpha^{(K+1)}/6} $ in the bounds (\ref{2eq:subeq1}) and (\ref{2eq:subeq3}) to conclude that $$\mathbb{P}[\vertiii{T^{(K+1)}_3}_{\infty} \geq \alpha^{(K+1)}/6] \leq 4 \exp \left(-B\frac{n^{(K+1)}}{{d}^3} + \log({d}) \right).$$
Putting together, we conclude that $$\mathbb{P}[\vertiii{Q^{(K+1),S}_{{[S^{(K+1)}]}^cS^{(K+1)}}{(Q^{(K+1)}_{S^{(K+1)}S^{(K+1)}})}^{-1}}_{\infty} \geq 1- \alpha^{(K+1)}/2] = \mathcal{O}\left( \exp\left(-B\frac{n^{(K+1)}}{{d}^3}+ \log ({d})\right)\right)$$
\end{proof}
\subsection{Proof of Lemmas for Proposition \ref{prop2}}
\subsubsection{Proof of Lemma \ref{2WN}}\label{2pfWN}
\begin{proof}
Each entry of $W^{(K+1)}$, denoted by $W^{(K+1)}_u$, for $1 \leq u \leq |S(r)| \leq d$, can be expressed as a sum of independent random variables $Z^{(K+1)}_{i,u}$: $$W^{(K+1)}_u=   \frac{1}{n^{(K+1)}} \sum^{n^{(K+1)}}_{i=1} Z^{(K+1)}_{i,u},$$ where 
\begin{align*}
    Z^{(K+1)}_{i,u} & = x^{(K+1)}_{i,u} \{ x^{(K+1)}_{i,r} - \frac{\exp(\sum_{t \in S \setminus r} \bar{\theta}^{(K+1)}_{rt} x_{i,t}^{(K+1)}) - \exp(-\sum_{t \in S \setminus r} \bar{\theta}_{rt}^{(K+1)} x_{i,t}^{(K+1)})}{\exp(\sum_{t \in S \setminus r} \bar{\theta}_{rt}^{(K+1)} x_{i,t}^{(K+1)}) + \exp(-\sum_{t \in S \setminus r} \bar{\theta}_{rt}^{(K+1)} x_{i,t}^{(K+1)} )}\} \\
    & = x^{(K+1)}_{i,u} \{ x^{(K+1)}_{i,r} - \mathbb{P}_{\bar{\theta}_S^{(K+1)}}[X^{(K+1)}_r = 1 | x^{(K+1)}_{i, S}] + \mathbb{P}_{\bar{\theta}_S^{(K+1)}}[X^{(K+1)}_r = -1 | x^{(K+1)}_{i, S}] \}.
\end{align*}
Notice that the conditional expectation given the values of $\Delta^{(K+1)}$ has mean zero: $$\mathbb{E}[Z^{(K+1)}_{i,u}|\Delta^{(K+1)}] = 0.$$ Then by law of total expectation \cite{weiss2005course} we have
\begin{equation}
    \mathbb{E}[Z^{(K+1)}_{i,u}] = \mathbb{E}[\mathbb{E}[Z^{(K+1)}_{i,u}|\Delta^{(K+1)}]] = \mathbb{E}[0] =  0.
\end{equation}
(See the same logic in the proof in Section \ref{pfWN}). Also, since all the samples are either $-1$ or $+1$, we have $|Z^{(K+1)}_{i,u}| \leq 2.$ Then by Azuma-Hoeffding's inequality \cite{hoeffding1994probability}, we have, for any $\delta > 0$, $$\mathbb{P}[|W^{(K+1)}_u| > \delta] \leq 2 \exp(-\frac{n^{(K+1)} \delta^2}{8}).$$ 
Setting $\delta = \frac{\alpha^{(K+1)} \lambda^{(K+1)}}{4(2-\alpha^{(K+1)})}, $ we obtain $$\mathbb{P}[\frac{2-\alpha^{(K+1)}}{\lambda^{(K+1)}}|W^{(K+1)}_u| > \frac{\alpha^{(K+1)}}{4} ] \leq 2 \exp \left(- \frac{{(\alpha^{(K+1)})}^2 {(\lambda^{(K+1)})}^2 }{128{(2-\alpha^{(K+1)})}^2} n^{(K+1)}\right)$$

Applying a union bound over the indices $u$ of $W^{(K+1)}$ yields $$\mathbb{P}[\frac{2-\alpha^{(K+1)}}{\lambda^{(K+1)}}\|W^{(K+1)}\|_{\infty} > \frac{\alpha^{(K+1)}}{4}  ]\leq 2 \exp \left(- \frac{{(\alpha^{(K+1)})}^2 {(\lambda^{(K+1)})}^2 }{128{(2-\alpha^{(K+1)})}^2} n^{(K+1)} + \log d\right),$$

which converges to zero at rate $\exp (-c{(\lambda^{(K+1)})}^2 n^{(K+1)})$ as long as $\lambda^{(K+1)} \geq \frac{16(2-\alpha^{(K+1)})}{\alpha^{(K+1)}} \sqrt{\frac{\log d}{n^{(K+1)}}}$
\end{proof}
\subsubsection{Proof of Lemma \ref{2RN}}\label{2pfRN}
\begin{proof}
Similar to the proof for Lemma \ref{RN}. We first show that the remainder term $R^{(K+1)}$ satisfies the bound $\|R^{(K+1)}\|_{\infty}\leq D^{(K+1)}_{\max} \|\hat{\theta}^{(K+1)}_{S^{(K+1)}} - \bar{\theta}^{(K+1)}_{S^{(K+1)}}\|_2^2$. Then the result of Lemma \ref{2l2cons}, namely $\|\hat{\theta}^{(K+1)}_{S^{(K+1)}} - \bar{\theta}^{(K+1)}_{S^{(K+1)}} \|_2 \leq \frac{5}{C^{(K+1)}_{\min}}\sqrt{d}\lambda^{(K+1)}$, can be used to conclude that 
\begin{equation*}
    \frac{\|R^{(K+1)}\|_{\infty}}{\lambda^{(K+1)}} \leq \frac{25D^{(K+1)}_{\max}}{{C^{(K+1)}_{\min}}^2} \lambda^{(K+1)} d,
\end{equation*}
as claimed in Lemma \ref{RN}.
Focusing on element $R^{(K+1)}_j$ for some index $j \in \{1,...,|S_r|\},$ we have 
\begin{align*}
& R_j^{(K+1)} \\& =-[\nabla^2 \ell^{(K+1)}(\theta^{(K+1)j}_S; \{\mathfrak{X}^{n^{(K+1)}}_{1,S}\}^{(K+1)}) - \nabla^2 \ell^{(K+1)}(\bar{\theta}^{(K+1)}_S; \{\mathfrak{X}^{n^{(K+1)}}_{1,S}\}^{(K+1)})]^T_j(\hat{\theta}^{(K+1)}_S - \bar{\theta}^{(K+1)}_S) \\& =  \frac{1}{n^{(K+1)}} \sum^{n^{(K+1)}}_{i=1} [\eta (x^{(K+1)}_i; \theta^{(K+1)(j)}_S) - \eta (x^{(K+1)}_i; \bar{\theta}^{(K+1)}_S))] (\hat{\theta}^{(K+1)}_S - \bar{\theta}^{(K+1)}_S)
\end{align*}
for some point $\theta^{(K+1)(j)}_S = \mu_j \hat{\theta}^{(K+1)}_S + (1-\mu_j)\bar{\theta}^{(K+1)}_S$ with $\mu_j \in [0,1].$ Setting $g(t) = \frac{4e^{2t}}{(e^{2t}+1)^2}$ by noting that that $\eta(\theta_S,x) = g(x_r\sum_{t \in S \setminus r} \theta_{rt} x_t).$ By the chain rule and another application of the mean value theorem, we write 
\begin{multline*}
    R_j^{(K+1)} =  \frac{1}{n^{(K+1)}} \sum^{n^{(K+1)}}_{i=1} \{g'({(\theta'^{(K+1)(j)}_S)}^T x^{(K+1)}_{i,S})x^{(K+1)}_{i,j}\} \{{[\theta^{(K+1)(j)}_S - \bar{\theta}^{(K+1)}_S]}^T  \\ x^{(K+1)}_{i,j} {(x^{(K+1)}_{i,S})}^T [\hat{\theta}^{(K+1)}_S - \bar{\theta}^{(K+1)}_S]\},
\end{multline*}
where $\theta'^{(K+1)(j)}_S$ is another point on the line joining $\hat{\theta}^{(K+1)}_S$ and $\bar{\theta}^{(K+1)}_S.$

Setting $a_i^{(K+1)} := \{g'({(\theta'^{(j)}_S)}^T x^{(K+1)}_{i,S})x^{(K+1)}_{i,j}\}$ and $b^{(K+1)}_i := \{{[\theta^{(j)}_S - \bar{\theta}_S]}^T  x^{(K+1)}_{i,j} {(x^{(K+1)}_{i,S})}^T [\hat{\theta}_S - \bar{\theta}_S]\},$ 
$$|R_j^{(K+1)}| = \frac{1}{n^{(K+1)}} \left|\sum^{n^{(K+1)}}_{i=1} a_i^{(K+1)}b^{(K+1)}_i \right| \leq  \frac{1}{n^{(K+1)}} \|a^{(K+1)}\|_{\infty} \|b^{(K+1)}\|_1.$$
We have $\|a^{(K+1)}\|_{\infty} \leq 1,$ and 
\begin{align*}
    \frac{1}{n^{(K+1)}}\|b^{(K+1)}\|_1 & =  \mu_j {[\hat{\theta}^{(K+1)}_S - \bar{\theta}^{(K+1)}_S]}^T \left\{ \frac{1}{n^{(K+1)}} \sum^{n^{(K+1)}}_{i=1} x^{(K+1)}_{i,S} {(x^{(K+1)}_{i,S})}^T \right\} [\hat{\theta}^{(K+1)}_S - \bar{\theta}^{(K+1)}_S]
    \\ & =  \mu_j {[\hat{\theta}^{(K+1)}_S - \bar{\theta}^{(K+1)}_S]}^T \left\{ \frac{1}{n} \sum^n_{i=1} x^{(K+1)}_{i,S} {(x^{(K+1)}_{i,S})}^T \right\} [\hat{\theta}^{(K+1)}_S - \bar{\theta}^{(K+1)}_S]
    \\& \leq D^{(K+1)}_{\max} \|\hat{\theta}^{(K+1)}_S - \bar{\theta}^{(K+1)}_S\|_2^2
    \\& = D^{(K+1)}_{\max} \|\hat{\theta}^{(K+1)}_{S^{(K+1)}} - \bar{\theta}^{(K+1)}_{S^{(K+1)}}\|_2^2,
\end{align*}
where the last line uses the fact that $\hat{\theta}^{(K+1)}_{{[S^{(K+1)}]}^c} = \bar{\theta}^{(K+1)}_{{[S^{(K+1)}]}^c} = 0$
Therefore, we have $$\|R^{(K+1)}\|_{\infty} \leq D^{(K+1)}_{\max} \|\hat{\theta}^{(K+1)}_{S^{(K+1)}} - \bar{\theta}^{(K+1)}_{S^{(K+1)}}\|_2^2$$
\end{proof}
\subsubsection{Proof of Lemma \ref{2l2cons}}\label{2pfl2cons}
\begin{proof}
As in the proof for Lemma \ref{l2cons}, following the method of proof in \cite{ravikumar2010high} which was also previously used in another context
\cite{rothman2008sparse}, we define  the function $G^{(K+1)}: \mathbb{R}^{d} \rightarrow \mathbb{R}$ by
\begin{multline}
    G^{(K+1)}(u_{S^{(K+1)}}) := \ell^{(K+1)}(\bar{\theta}^{(K+1)}_{S^{(K+1)}} + u_{S^{(K+1)}}) \\ - \ell^{(K+1)}( \bar{\theta}^{(K+1)}_{S^{(K+1)}}) + \lambda^{(K+1)} (\|\bar{\theta}^{(K+1)}_{S^{(K+1)}} + u_{S^{(K+1)}}\|_1 - \|\bar{\theta}^{(K+1)}_{S^{(K+1)}}\|_1).
\end{multline}
It can be seen that $\hat{u}_{S^{(K+1)}} = \hat{\theta}^{(K+1)}_{S^{(K+1)}} - \bar{\theta}^{(K+1)}_{S^{(K+1)}}$ minimizes $G^{(K+1)}$. Moreover, $G^{(K+1)}(0) = 0$ by construction; therefore, we must have $G^{(K+1)}(\hat{u}_{S^{(K+1)}}) \leq 0$. Note also that $G^{(K+1)}$ is convex. Suppose that we show for some radius $B > 0$, and for $u \in \mathbb{R}^d$ with $\|u\|_2 = B,$ we have $G^{(K+1)}(u) > 0.$ We then claim that $\|\hat{u}\|_2 \leq B.$ Indeed, if $\hat{u}$ lay outside the ball of radius $B$, then the convex combination $t\hat{u}+(1-t)(0)$ would lie on the boundary of the ball, for an appropriately chosen $t \in (0,1).$ By convexity, $$G^{(K+1)}(t\hat{u}+(1-t)(0)) \leq t G^{(K+1)}(\hat{u}) + (1-t)G^{(K+1)}(0) \leq 0,$$ contradicting the assumed strict positivity of $G^{(K+1)}$ on the boundary. It thus suffices to establish strict positivity of $G^{(K+1)}$ on the boundary of the ball with radius $B=M\lambda^{(K+1)} \sqrt{d}$, where $M>0$ is a parameter to be chosen later in the proof. Let $u \in \mathbb{R}^d$ be an arbitrary vector with $\|u\|_2=B.$ Recalling the notation $W^{(K+1)} := -\nabla \ell^{(K+1)}(\bar{\theta}_S^{(K+1)}; \{\mathfrak{X}^{n^{(K+1)}}_{1,S}\}^{(K+1)}),$ by a Taylor series expansion of the log likelihood component of $G^{(K+1)}$, we have 
\begin{multline*}
    G(u) = -(W^{(K+1)}_{S^{(K+1)}})^T u + u^T[\nabla^2 \ell(\bar{\theta}^{(K+1)}_{S^{(K+1)}}+\alpha u_{S^{(K+1)}}; \{\mathfrak{X}^{n^{(K+1)}}_{1,S}\}^{(K+1)})]u \\ + \lambda^{(K+1)}(\|\bar{\theta}^{(K+1)}_{S^{(K+1)}}+u_{S^{(K+1)}}\|_1 - \|\bar{\theta}^{(K+1)}_{S^{(K+1)}}\|_1)
\end{multline*}

for some $\alpha \in [0,1].$ For the first term, we have the bound $$|(W^{(K+1)}_{S^{(K+1)}})^T u| \leq \|W^{(K+1)}_{S^{(K+1)}}\|_{\infty} \|u\|_1 \leq  \|W^{(K+1)}_{S^{(K+1)}}\|_{\infty} \sqrt{d} \|u\|_2 \leq (\lambda^{(K+1)} \sqrt{d})^2 \frac{M}{4},$$
since $\|W^{(K+1)}_{S^{(K+1)}}\|_{\infty} \leq \frac{\lambda^{(K+1)} }{4}$ by assumption.
For the last term, applying triangle inequality  yields $$\lambda^{(K+1)}(\|\bar{\theta}^{(K+1)}_{S^{(K+1)}}+u_{S^{(K+1)}}\|_1 - \|\bar{\theta}^{(K+1)}_{S^{(K+1)}}\|_1)\geq -\lambda^{(K+1)} \| u_{S^{(K+1)}}\|_1.$$
Since $\|u_{S^{(K+1)}}\|_1 \leq \sqrt{d} \|u_{S^{(K+1)}}\|_2, $ we have $$\lambda^{(K+1)}(\|\bar{\theta}^{(K+1)}_{S^{(K+1)}}+u_{S^{(K+1)}}\|_1 - \|\bar{\theta}^{(K+1)}_{S^{(K+1)}}\|_1)\geq -\lambda^{(K+1)}\sqrt{d} \| u_{S^{(K+1)}}\|_2 = - M{(\sqrt{d} \lambda^{(K+1)})}^2.$$
Finally, turning to the middle Hessian term, we have 
\begin{align*}
    q^* & := \Lambda_{\min}(\nabla^2 \ell(\bar{\theta}^{(K+1)}_{S^{(K+1)}}+\alpha^{(K+1)} u_{S^{(K+1)}}; \{\mathfrak{X}^{n^{(K+1)}}_{1,S}\}^{(K+1)})) 
    \\ & \geq \min_{\alpha^{(K+1)} \in [0,1]} \Lambda_{\min}(\nabla^2 \ell(\bar{\theta}^{(K+1)}_{S^{(K+1)}}+\alpha^{(K+1)} u_{S^{(K+1)}};\{\mathfrak{X}^{n^{(K+1)}}_{1,S}\}^{(K+1)}))
    \\ & = \min_{\alpha^{(K+1)} \in [0,1]} \Lambda_{\min} \left[  \frac{1}{n^{(K+1)}} \sum^{n^{(K+1)}}_{i=1} \eta (x^{(K+1)}_i ; \bar{\theta}^{(K+1)}_{S^{(K+1)}}+\alpha^{(K+1)} u_{S^{(K+1)}}) x^{(K+1)}_{i,S^{(K+1)}} {(x^{(K+1)}_{i,S^{(K+1)}})}^T \right].
\end{align*}

By a Taylor series expansion of $\eta(x^{(K+1)}_i; \cdot)$, we have, for some $\alpha_0 \in [0,\alpha^{(K+1)}]$,
\begin{align*}
   & q^*  \geq \min_{\alpha^{(K+1)} \in [0,1]} \Lambda_{\min} \left\{ \frac{1}{n^{(K+1)}} \sum^{n^{(K+1)}}_{i=1} \left[\eta (x^{(K+1)}_i ; \bar{\theta}^{(K+1)}_{S^{(K+1)}}) x^{(K+1)}_{i,S^{(K+1)}} {(x^{(K+1)}_{i,S^{(K+1)}})}^T \right]\right\}
    \\& +  \alpha^{(K+1)} g'\left(x^{(K+1)}_{i,r} \sum_{t\in S^{(K+1)} \setminus r} (\bar{\theta}^{(K+1)}_{rt} + \alpha_0 u_{rt})x^{(K+1)}_{i,t} \right) x^{(K+1)}_{i,r} (u_{S^{(K+1)}}^T x^{(K+1)}_{i,{S^{(K+1)}}}) x^{(K+1)}_{i,{S^{(K+1)}}} {(x^{(K+1)}_{i,{S^{(K+1)}}})}^T 
    \\& \geq \Lambda_{\min} \Big[ \frac{1}{n^{(K+1)}} \sum^{n^{(K+1)}}_{i=1} \eta (x^{(K+1)}_i ; \bar{\theta}_{S^{(K+1)}}) x^{(K+1)}_{i,{S^{(K+1)}}} {(x^{(K+1)}_{i,{S^{(K+1)}}})}^T \Big] 
    \\& \begin{multlined}[t][10.5cm]
        +\min_{\alpha^{(K+1)} \in [0,1]} \alpha^{(K+1)} \Lambda_{\min} \Big[  \frac{1}{n^{(K+1)}} \sum^{n^{(K+1)}}_{i=1} g' \left(x^{(K+1)}_{i,r}{(\bar{\theta}^{(K+1)}_{S^{(K+1)}}+\alpha_0 u_{S^{(K+1)}})}^T x^{(K+1)}_{i,{S^{(K+1)}}} \right) \\ 
        \cr x^{(K+1)}_{i,r} (u_{S^{(K+1)}}^T x^{(K+1)}_{i,{S^{(K+1)}}}) x^{(K+1)}_{i,{S^{(K+1)}}} {(x^{(K+1)}_{i,{S^{(K+1)}}})}^T \Big]
    \end{multlined}
    \\& \geq \Lambda_{\min}(Q^{(K+1)}_{{S^{(K+1)}}{S^{(K+1)}}}) - \max_{\alpha^{(K+1)} \in [0, 1]} \\&  \vertiii{\frac{1}{{n^{(K+1)}}} \sum^{n^{(K+1)}}_{i=1} g' (x^{(K+1)}_{i,r}{(\bar{\theta}^{(K+1)}_{S^{(K+1)}}+\alpha_0 u_{S^{(K+1)}})}^T x^{(K+1)}_{i,{S^{(K+1)}}} )  (u_{S^{(K+1)}}^T x^{(K+1)}_{i,{S^{(K+1)}}}) x^{(K+1)}_{i,{S^{(K+1)}}} {(x^{(K+1)}_{i,{S^{(K+1)}}})}^T}_2
    \\& \geq C_{\min} -\max_{\alpha^{(K+1)} \in [0, 1]}
    \\  & \vertiii{ \frac{1}{{n^{(K+1)}}} \sum^{n^{(K+1)}}_{i=1} g' (x^{(K+1)}_{i,r}{(\bar{\theta}^{(K+1)}_{S^{(K+1)}}+\alpha_0 u_{S^{(K+1)}})}^T x^{(K+1)}_{i,{S^{(K+1)}}} )(\langle u_{S^{(K+1)}}, x^{(K+1)}_{i,{S^{(K+1)}}} \rangle) x^{(K+1)}_{i,{S^{(K+1)}}} {(x^{(K+1)}_{i,{S^{(K+1)}}})}^T}_2
\end{align*}
It remains to control the spectral norm of the matrix , denoted as $A(\alpha^{(K+1)}) $ here, for $\alpha^{(K+1)} \in [0,1]$. For any fixed $\alpha^{(K+1)} \in [0,1]$, and $y \in \mathbb{R^d}$ with $\|y\|_2 = 1,$ we have 
\begin{align*}
    \langle y, A(\alpha^{(K+1)}) y \rangle & =  \frac{1}{n^{(K+1)}} \sum^{n^{(K+1)}}_{i=1} g' \left(\bar{\theta}^{(K+1)}_{S^{(K+1)}}+\alpha_0 u_{S^{(K+1)}}\right) [\langle u_{S^{(K+1)}}, x^{(K+1)}_{i,{S^{(K+1)}}} \rangle] {[\langle x^{(K+1)}_{i,{S^{(K+1)}}}, y \rangle]}^2
    \\& \leq \frac{1}{n^{(K+1)}} \sum^{n^{(K+1)}}_{i=1} \left|g' \left(\bar{\theta}^{(K+1)}_{S^{(K+1)}}+\alpha_0 u_{S^{(K+1)}} \right)\right| |\langle u_{S^{(K+1)}}, x^{(K+1)}_{i,{S^{(K+1)}}} \rangle| {[\langle x^{(K+1)}_{i,{S^{(K+1)}}}, y \rangle]}^2.
\end{align*}
Note that $ \left|g' \left(\bar{\theta}^{(K+1)}_{S^{(K+1)}}+\alpha_0 u_{S^{(K+1)}} \right)\right| \leq 1,$ and $$|\langle u_{S^{(K+1)}}, x^{(K+1)}_{i,{S^{(K+1)}}}  \rangle| \leq \|u_{S^{(K+1)}}\|_1 \leq \sqrt{d} \|u_{S^{(K+1)}}\|_2 = M \lambda^{(K+1)} d.$$
Moreover, we have $$\frac{1}{n^{(K+1)}} \sum^{n^{(K+1)}}_{i=1}{(\langle x^{(K+1)}_{i,{S^{(K+1)}}}, y \rangle)}^2  \leq \vertiii{\frac{1}{n^{(K+1)}} \sum^{n^{(K+1)}}_{i=1} x^{(K+1)}_{i,{S^{(K+1)}}} (x^{(K+1)}_{i,{S^{(K+1)}}})^T}_2 \leq D^{(K+1)}_{\max}$$
by assumption. We then obtain $$\max_{\alpha^{(K+1)} \in [0,1]}\vertiii{A(\alpha^{(K+1)})}_2 \leq D^{(K+1)}_{\max} M \lambda^{(K+1)} d \leq C^{(K+1)}_{\min}/2,$$
assuming that $\lambda^{(K+1)} \leq \frac{C^{(K+1)}_{\min}}{2M D^{(K+1)}_{\max} d}.$

Under this condition, we have shown that $$q^*  := \Lambda_{\min}(\nabla^2 \ell(\bar{\theta}^{(K+1)}_{S^{(K+1)}}+\alpha^{(K+1)} u_{S^{(K+1)}})) \geq C^{(K+1)}_{\min}/2.$$
Finally, combining the three terms in $G^{(K+1)}(u),$ we conclude that $$G^{(K+1)}(u_{S^{(K+1)}}) \geq (\lambda^{(K+1)} \sqrt{d})^2 \left\{ -\frac{1}{4} M + \frac{C^{(K+1)}_{\min}}{2}M^2 - M\right\},$$
which is strictly positive for $M = 5/C^{(K+1)}_{\min}.$ So as long as $$\lambda^{(K+1)} \leq \frac{C^{(K+1)}_{\min}}{2M D^{(K+1)}_{\max} d} = \frac{{(C^{(K+1)}_{\min})}^2}{10 D^{(K+1)}_{\max}d },$$
we are guaranteed that $$\|\hat{u}_{S^{(K+1)}}\|_2 \leq M \lambda^{(K+1)} \sqrt{d} = \frac{5}{C^{(K+1)}} \lambda^{(K+1)} \sqrt{d}.$$
\end{proof}

\section{Proof of Lemmas Used in Proving Other Lemmas}
\subsection{Proof of Lemma \ref{lemma of lemmas}}\label{pflmlm}
\begin{proof}
By the definition of the $\ell_{\infty}$-matrix norm, and using $Z_{jl}$ defined in Section \ref{pfmineigen} we have
\begin{align*}
    \mathbb{P}[\vertiii{Q^N_{S^cS} - \bar{Q}_{S^cS}}_{\infty} \geq \delta] & = \mathbb{P} \big[ \max_{j\in S^c} \sum_{l \in S} |Z_{jl}| \geq \delta \big]
    \\& \leq p  \mathbb{P} \big[  \sum_{l \in S} |Z_{jl}| \geq \delta \big],
\end{align*}
where the final inequality uses a union bound and the fact that $|S^c| \leq p.$
\begin{align*}
    \mathbb{P} \big[  \sum_{k \in S} |Z_{jl}| \geq \delta \big] & \leq \mathbb{P} [  \exists k \in S ||Z_{jl}| \geq \delta/d ]
    \\& \leq d \mathbb{P} [ |Z_{jl}| \geq \delta/d ].
\end{align*}
We then obtain (\ref{eq:subeq1}) by setting $\varepsilon = \delta/d$ in the Hoeffding bound (\ref{QZ}):
\begin{align*}
    \mathbb{P}[\vertiii{Q^N_{S^cS} - \bar{Q}_{S^cS}}_{\infty} \geq \delta] &\leq pd  \mathbb{P} [ |Z_{jl}| \geq \delta/d ]
    \\& \leq 2 \exp \left( -\frac{\varepsilon^2nK}{32d^2} + \log(d) + \log(p) \right)
\end{align*}
Analogously, for (\ref{eq:subeq2}), we have
\begin{align*}
    \mathbb{P}[\vertiii{Q^N_{SS} - \bar{Q}_{SS}}_{\infty} \geq \delta] & = \mathbb{P} \big[ \max_{j\in S} \sum_{k \in S} |Z_{jl}| \geq \delta \big]
    \\& \leq d  \mathbb{P} \big[  \sum_{l \in S} |Z_{jl}| \geq \delta \big]
    \\& \leq d \mathbb{P} [  \exists l \in S ||Z_{jl}| \geq \delta/d ]
    \\& \leq d^2 \mathbb{P} [ |Z_{jl}| \geq \delta/d ]
    \\& \leq 2 \exp \left( -\frac{\varepsilon^2nK}{32d^2} + 2\log(d) \right).
\end{align*}
To prove (\ref{eq:subeq3}), we can write
\begin{align*}
    \vertiii{{(Q^N_{SS})}^{-1} - {(\bar{Q}_{SS})}^{-1}}_{\infty} & = \vertiii{ {(\bar{Q}_{SS})}^{-1}[\bar{Q}_{SS} - Q^N_{SS}]{(Q^N_{SS})}^{-1}}_{\infty} \\& \leq \sqrt{d} \vertiii{ {(\bar{Q}_{SS})}^{-1}[\bar{Q}_{SS} - Q^N_{SS}]{(Q^N_{SS})}^{-1}}_2
    \\ & \leq \sqrt{d} \vertiii{ {(\bar{Q}_{SS})}^{-1}}_2\vertiii{\bar{Q}_{SS} - Q^N_{SS}}_2\vertiii{{(Q^N_{SS})}^{-1}}_2
    \\ & \leq \frac{\sqrt{d}}{C_{\min}} \vertiii{\bar{Q}_{SS} - Q^N_{SS}}_2 \vertiii{{(Q^N_{SS})}^{-1}}_2
\end{align*}
Using the bound (\ref{usebd1}) in the proof of Lemma \ref{mineigen}, we get
$$\mathbb{P}[\vertiii{{(Q^N_{SS})}^{-1}}_2 \geq \frac{2}{C_{\min}} ] \leq 2 \exp \left(  -\frac{\delta^2nK}{32 d^2}+2 \log (d)\right),$$ and
$$\mathbb{P}[\vertiii{Q^N_{SS} - \bar{Q}_{SS}}_2 \geq \delta/\sqrt{d}] \leq 2 \exp \left( -\frac{\delta^2nK}{32 d^3}+2 \log (d)\right).$$
So finally we have $$\mathbb{P}\left( \vertiii{{(Q^N_{SS})}^{-1} - {(\bar{Q}_{SS})}^{-1}}_{\infty}  \geq \delta\right) \leq 4 \exp \left( -B_1 \frac{nK\delta^2}{d^3} + B_2 \log (d)\right),$$ where $B_1, B_2$ are some positive constants.
\end{proof}

\subsection{Proof of Lemma \ref{2lmlm}}\label{pf2lmlm}
\begin{proof}
By the definition of the $\ell_{\infty}$-matrix norm, and using the $Z^{(K+1)}_{jl}$ defined in Section \ref{2pfmineigen}, we have 
\begin{align*}
    \mathbb{P}[\vertiii{Q^{(K+1),S}_{{[S^{(K+1)}]}^cS^{(K+1)}} - \bar{Q}^{(K+1),S}_{{[S^{(K+1)}]}^cS^{(K+1)}}}_{\infty} \geq \delta] & = \mathbb{P} \big[ \max_{j\in ({[S^{(K+1)}]}^c\cap S)}  \sum_{k \in S^{(K+1)}} |Z^{(K+1)}_{jl}| \geq \delta \big]
    \\& \leq d  \mathbb{P} \big[  \sum_{l \in S^{(K+1)}} |Z^{(K+1)}_{jl}| \geq \delta \big],
\end{align*}
where the final inequality uses a union bound and the fact that $|({[S^{(K+1)}]}^c\cap S)| \leq d$.
\begin{align*}
    \mathbb{P} \big[  \sum_{l \in S^{(K+1)}} |Z^{(K+1)}_{jl}| \geq \delta \big] & \leq \mathbb{P} [  \exists k \in S^{(K+1)} ||Z^{(K+1)}_{jl}| \geq \delta/d]
    \\ & \leq \mathbb{P} [  \exists k \in |S^{(K+1)}| ||Z^{(K+1)}_{jl}| \geq \delta/d]
    \\& \leq |S^{(K+1)}| \mathbb{P} [ |Z^{(K+1)}_{jl}| \geq \delta/d ]
    \\& \leq d \mathbb{P} [ |Z^{(K+1)}_{jl}| \geq \delta/d ].
\end{align*}

We then obtain (\ref{2eq:subeq1}) by setting $\varepsilon = \delta/d$ in the Hoeffding's bound (\ref{2QZ}),
\begin{align*}
    \mathbb{P}[\vertiii{Q^{(K+1),S}_{{[S^{(K+1)}]}^cS^{(K+1)}} - \bar{Q}^{(K+1),S}_{{[S^{(K+1)}]}^cS^{(K+1)}}}_{\infty} \geq \delta] &\leq d^2  \mathbb{P} [ |Z^{(K+1)}_{jl}| \geq \delta/d ]
    \\& \leq 2 \exp \left( -\frac{\varepsilon^2n^{(K+1)}}{32{d}^2} +2 \log (d) \right).
\end{align*}

Analogously for (\ref{2eq:subeq2}), we have
\begin{align*}
    \mathbb{P}[\vertiii{Q^{(K+1)}_{S^{(K+1)}S^{(K+1)}} - \bar{Q}^{(K+1)}_{S^{(K+1)}S^{(K+1)}}}_{\infty} \geq \delta] & = \mathbb{P} \big[ \max_{j\in S^{(K+1)}} \sum_{k \in S^{(K+1)}} |Z^{(K+1)}_{jl}| \geq \delta \big]
    \\& \leq d  \mathbb{P} \big[  \sum_{k \in S^{(K+1)}} |Z^{(K+1)}_{jl}| \geq \delta \big]
    \\& \leq d \mathbb{P} [  \exists k \in S^{(K+1)} ||Z^{(K+1)}_{jl}| \geq \delta/d ]
    \\& \leq {d}^2 \mathbb{P} [ |Z^{(K+1)}_{jl}| \geq \delta/d ]
    \\& \leq 2 \exp \left( -\frac{\delta^2n^{(K+1)}}{32{d}^2} + 2\log({d}) \right).
\end{align*}
To prove (\ref{2eq:subeq3}), we have

\begin{align*}
& \vertiii{{(Q^{(K+1)}_{S^{(K+1)}S^{(K+1)}})}^{-1} - {(\bar{Q}^{(K+1)}_{S^{(K+1)}S^{(K+1)}})}^{-1}}_{\infty}
\\& =\vertiii{ {(\bar{Q}^{(K+1)}_{S^{(K+1)}S^{(K+1)}})}^{-1}[\bar{Q}^{(K+1)}_{S^{(K+1)}S^{(K+1)}} - Q^{(K+1)}_{S^{(K+1)}S^{(K+1)}}]{(Q^{(K+1)}_{S^{(K+1)}S^{(K+1)}})}^{-1}}_{\infty} \\ &
   \leq \sqrt{d} \vertiii{ {(\bar{Q}^{(K+1)}_{S^{(K+1)}S^{(K+1)}})}^{-1}[\bar{Q}^{(K+1)}_{S^{(K+1)}S^{(K+1)}} - Q^{(K+1)}_{S^{(K+1)}S^{(K+1)}}]{(Q^{(K+1)}_{S^{(K+1)}S^{(K+1)}})}^{-1}}_2
    \\ & \leq \frac{\sqrt{d}}{C^{(K+1)}_{\min}} \vertiii{\bar{Q}^{(K+1)}_{S^{(K+1)}S^{(K+1)}} - Q^{(K+1)}_{S^{(K+1)}S^{(K+1)}}}_2 \vertiii{{(Q^{(K+1)}_{S^{(K+1)}S^{(K+1)}})}^{-1}}_2,
\end{align*}

where the sub-multiplicative property $\vertiii{AB}_2 \leq \vertiii{A}_2\vertiii{B}_2$ for matrices $A, B$ is used for the last line, and Assumption \ref{novela1} is also applied.Then using the bound (\ref{usebd2}) in the proof of Lemma \ref{2mineigen}, we get
$$\mathbb{P}[\vertiii{{(Q^{(K+1)}_{S^{(K+1)}S^{(K+1)}})}^{-1}}_2 \geq \frac{2}{C^{(K+1)}_{\min}} ] \leq 2 \exp \left(  -\frac{\delta^2n^{(K+1)}}{32 {d}^2}+2 \log ({d})\right),$$
and 
$$\mathbb{P}[\vertiii{Q^{(K+1)}_{S^{(K+1)}S^{(K+1)}} - \bar{Q}^{(K+1)}_{S^{(K+1)}S^{(K+1)}}}_2 \geq \delta/\sqrt{{d}}] \leq 2 \exp \left( -\frac{\delta^2n^{(K+1)}}{32 {d}^3}+2\log(d)\right)$$
So we have $$\mathbb{P}\left( \vertiii{{(Q^{(K+1)}_{S^{(K+1)}S^{(K+1)}})}^{-1} - {(\bar{Q}^{(K+1)}_{S^{(K+1)}S^{(K+1)}})}^{-1}}_{\infty}  \geq \delta\right) \leq 4 \exp \left( -B_1\frac{n^{(K+1)}\delta^2}{d^3} + B_2 \log ({d})\right),$$ where $B_1, B_2$ are some positive constants.
\end{proof}
\end{document}